\definecolor{mygreen}{rgb}{0.0, 0.5, 0.0}
\definecolor{winered}{rgb}{0.8,0,0}
\definecolor{myblue}{rgb}{0,0,0.8}
\def\eqref#1{equation~\ref{#1}}
\def\1{\bm{1}}
\DeclareMathAlphabet{\mathsfit}{\encodingdefault}{\sfdefault}{m}{sl}
\SetMathAlphabet{\mathsfit}{bold}{\encodingdefault}{\sfdefault}{bx}{n}
\newcommand{\mc}{\mathcal}
\newcommand\x{\theta_t}
\newcommand\xt{\tilde{\theta}_t}
\newcommand\dt{\Vert \tilde{\theta}_t - \tilde{\theta}_{t-\tau} \Vert}
\newcommand{\G}[1]{\mathbb{E}\left[#1\right]}
\newcommand\xd{\tilde{\theta}_{t-\tau}}
\newcommand\yt{\tilde{d}_{t}}
\newtheorem{definition}{Definition}
\newtheorem{theorem}{Theorem}
\newtheorem{lemma}{Lemma}
\newtheorem{remark}{Remark}
\newtheorem{assumption}{Assumption}
\DeclarePairedDelimiterX{\norm}[1]{\lVert}{\rVert}{#1}
\title{\LARGE Temporal Difference Learning with Compressed Updates:\\ Error-Feedback meets Reinforcement Learning}
\author{Aritra Mitra, George J. Pappas, and Hamed Hassani
\thanks{A. Mitra is with the Department of Electrical and Computer Engineering,  North Carolina State University. Email: {\tt amitra2@ncsu.edu}. H. Hassani and G. Pappas are with the Department of Electrical and Systems Engineering, University of Pennsylvania. Email: {\tt \{pappasg, hassani\}@seas.upenn.edu}. This work was supported by NSF Award 1837253, NSF CAREER award CIF 1943064, and The Institute for Learning-enabled Optimization at Scale (TILOS), under award number NSF-CCF-2112665.
}}
\date{}
\begin{document}
\maketitle
\thispagestyle{empty}
\pagestyle{empty}
\begin{abstract}
 In large-scale distributed machine learning, recent works have studied the effects of compressing gradients in stochastic optimization to alleviate the communication bottleneck. These works have collectively revealed that stochastic gradient descent (SGD) is robust to structured perturbations such as quantization, sparsification, and delays. Perhaps surprisingly, despite the surge of interest in multi-agent reinforcement learning, almost nothing is known about the analogous question: \textit{Are common reinforcement learning (RL) algorithms also robust to similar perturbations?} We investigate this question by studying a variant of the classical temporal difference (TD) learning algorithm with a perturbed update direction, where a general compression operator is used to model the perturbation. Our work makes three important technical contributions. First, we prove that compressed TD algorithms, coupled with an error-feedback mechanism used widely in optimization, exhibit the same non-asymptotic theoretical guarantees as their SGD counterparts. Second, we show that our analysis framework extends seamlessly to nonlinear stochastic approximation schemes that subsume Q-learning. Third, we prove that for multi-agent TD learning, one can achieve linear convergence speedups with respect to the number of agents while communicating just $\tilde{O}(1)$ bits per iteration. Notably, these are the 
first finite-time results in RL that account for general compression operators and error-feedback in tandem with linear function approximation and Markovian sampling. Our proofs hinge on the construction of novel Lyapunov functions that capture the dynamics of a memory variable introduced by error-feedback. 
\end{abstract}

\section{Introduction}
Stochastic gradient descent (SGD) is at the heart of large-scale distributed machine learning paradigms such as federated learning (FL)~\cite{konevcny}. In these applications, the task of training high-dimensional weight vectors is  distributed among several workers that exchange information over networks of limited bandwidth. While parallelization at such an immense scale helps to reduce the computational burden, it creates several other challenges: delays, asynchrony, and most importantly, a significant communication bottleneck. The popularity and success of SGD can be attributed in no small part to the fact that it is extremely \textit{robust} to such deviations from ideal operating conditions. In fact, by now, there is a rich literature that analyzes the robustness of SGD to a host of \textit{structured perturbations} that include lossy gradient-quantization~\cite{seide,qsgd} and sparsification~\cite{wen,stichsparse}. For instance, \texttt{SignSGD} - a variant of SGD where each coordinate of the gradient is replaced by its sign - is extensively used to train deep neural networks~\cite{emp1,bernstein}. Inspired by these findings, in this paper, we ask a different question: \textit{Are common reinforcement learning (RL) algorithms also robust to similar structured perturbations?} 

\textbf{Motivation.} Perhaps surprisingly, despite the recent surge of interest in multi-agent/federated RL, almost nothing is known about the above question. To fill this void, we initiate the study of a \emph{robustness theory for iterative RL algorithms with compressed update directions}. We primarily focus on the problem of evaluating the value function associated with a fixed policy $\mu$ in a Markov decision process (MDP). Just as SGD is the workhorse of stochastic optimization, the classical temporal difference (TD) learning algorithm~\cite{sutton1988} for policy evaluation forms the core subroutine in a variety of decision-making algorithms in RL (e.g., Watkin's Q-learning algorithm). In fact, in their book~\cite{suttonRL}, Sutton and Barto mention: \textit{``If one had to identify one idea as central and novel to reinforcement learning, it would undoubtedly be temporal-difference (TD) learning."} Thus, it stands to reason that we center our investigation around a variant of the  \texttt{TD}(0) learning algorithm with linear function approximation, where the \texttt{TD}(0) update direction is replaced by a compressed version of it. Moreover, to account for general \textit{biased}  compression operators (e.g., sign and Top-$k$), we couple this scheme with an error-feedback mechanism that retains some memory of \texttt{TD}(0) update directions from the past. 

Other than the robustness angle, a key motivation of our work is to design \textit{communication-efficient} algorithms for the emerging paradigms of multi-agent RL (MARL) and federated RL (FRL). In existing works on these topics~\cite{doan,qiFRL,jinFRL,khodadadian}, agents typically exchange \textit{high-dimensional} models 
 (parameters) or model-differentials  (i.e., gradient-like update directions) over \textit{low-bandwidth channels}, keeping their personal data (i.e., rewards, states, and actions) private. In the recent survey paper on FRL by~\cite{qiFRL}, the authors explain how the above issues contribute to a major communication bottleneck, just as in the standard FL setting. \textit{However, no work on MARL or FRL provides any theory whatsoever when it comes to compression/quantization in RL}. Our work takes a significant step
towards filling this gap via a suite of comprehensive theoretical results that we discuss below. 

\subsection{Our Contributions}
The main contributions of this work are as follows. 

\begin{enumerate}
\item \textbf{Algorithmic Framework for Compressed Stochastic Approximation.} We develop a general framework for analyzing iterative compressed stochastic approximation (SA) algorithms with error-feedback. The generality of this framework stems from two salient features: (i) it applies to nonlinear SA with Markovian noise; and (ii) the compression operator covers several common quantization and (biased) sparsification schemes studied in optimization. As an instance of our framework, we propose a new algorithm called \texttt{EF-TD} (Algorithm~\ref{algo:algo1}) to study extreme distortions to the \texttt{TD}(0) update direction. Examples of such distortion include, but are not limited to, replacing each coordinate of the \texttt{TD}(0) update direction with just its sign, or just retaining the coordinate with the largest magnitude. \textit{While such distortions have been extensively studied for SGD, we are unaware of any analagous algorithms or analysis in the context of RL}. 

\item \textbf{Analysis under Markovian Sampling.} In Theorem \ref{thm:Markov}, we show that with a constant step-size, \texttt{EF-TD} guarantees linear convergence to a ball centered around the optimal parameter. Up to a compression factor, our result mirrors existing rates in RL without compression~\cite{srikant}. Moreover, the effect of the compression factor exactly mirrors that for compressed SGD~\cite{beznosikov}. The \textbf{significance} of this result is twofold: (i) It is the first result in RL that accounts for general compression operators and error-feedback in tandem with Markovian sampling and linear function approximation; and (ii)  It is the first theoretical result on the \textit{robustness} of TD learning algorithms to structured perturbations. One interesting takeaway from Theorem \ref{thm:Markov} is that \textit{``slowly-mixing" Markov chains have more inherent robustness to distortions/compression}. This appears to be a new observation that we elaborate on in Section~\ref{sec:Markov}.  

\item \textbf{Analysis for General Nonlinear Stochastic Approximation.} In Section \ref{sec:nonlin}, we study Algorithm~\ref{algo:algo1} in its full generality by considering a compressed nonlinear SA scheme with error-feedback, and establishing an analog of Theorem \ref{thm:Markov} in Theorem \ref{thm:nonlin}. The \textbf{significance} of this result lies in revealing that the power and scope of the popular error-feedback mechanism~\cite{seide} in large-scale ML is not limited to the optimization problems it has been used for thus far. \textit{In particular, since the nonlinear SA scheme we study captures certain instances of  Q-learning~\cite{chenQ}, Theorem~\ref{thm:nonlin} conveys that our results extend well beyond policy evaluation to decision-making (control) problems as well.} 

\item \textbf{Communication-Efficient MARL.} Since one of our main goals is to facilitate communication-efficient MARL, we  consider a collaborative MARL setting that has shown up in several recent works~\cite{doan,khodadadian,liuMARL}. Here, $M$ agents interact with the \textit{same} MDP, but observe potentially different realizations of rewards and state transitions. These agents exchange compressed TD update directions via a central server to speed up the process of evaluating a specific policy. In this context, 
we ask the following fundamental question: \textit{How much information needs to be transmitted to achieve the
optimal linear-speedup (w.r.t. the number of agents) for policy-evaluation?} To answer this question, we propose a multi-agent version of \texttt{EF-TD}.  In Theorem \ref{thm:multi-agent}, we prove that by collaborating, each agent can achieve {an} $M$-fold speedup in the dominant term of the  convergence rate. Importantly, the effect of compression only shows up in higher-order terms, leaving the dominant term unaffected. Thus, \textbf{we prove that by transmitting just $\tilde{O}(1)$ bits per-agent per-round, one can preserve the same asymptotic rates as vanilla \texttt{TD}(0), while achieving an optimal linear convergence speedup w.r.t. the number of agents.} We envision this result will have important implications for MARL. Our analysis also leads to a \textit{tighter} linear dependence on the mixing time compared to the quadratic dependence in the only other paper that establishes linear speedups under Markovian sampling~\cite{khodadadian}. 

\item \textbf{Technical Challenges and Novel Proof Techniques.}  One might wonder whether our results above follow as simple extensions of known analysis techniques in RL. In what follows, we explain why this isn't the case by outlining the key technical challenges \textit{unique} to our setup, and our novel proof ideas to overcome them. First, a non-asymptotic analysis of even vanilla \texttt{TD}(0) under Markovian sampling is known to be extremely challenging due to complex temporal correlations. Our setting is further complicated by the fact that the dynamics of the parameter \textit{and} the memory variable are intricately coupled with the temporally-correlated Markov data tuples. \textit{This leads to a complex stochastic dynamical system that has not been analyzed before in RL.} Second, we cannot directly appeal to existing proofs of compression in optimization since the problems we study do not involve minimizing a static loss function. Moreover, the aspect of temporally correlated observations is completely absent in compressed optimization since one deals with i.i.d. data. The above discussion motivates the need for new analysis tools beyond what are known for both RL and optimization.

$\bullet$ \textbf{New Proof Ingredients for Theorems~\ref{thm:Markov} and~\ref{thm:nonlin}.} To prove Theorem~\ref{thm:Markov}, our first innovation is to construct a novel Lyapunov function that accounts for the \textit{joint dynamics} of the parameter and the memory variable introduced by error-feedback. The next natural step is to then analyze the drift of this Lyapunov function by appealing to mixing time arguments - as is typically done in finite-time RL proofs~\cite{bhandari_finite,srikant}. This is where we again run into difficulties since the presence of the memory variable due to error-feedback introduces non-standard delay terms in the drift bound. \textit{Notably, this difficulty does not show up when one analyzes error-feedback in optimization since there is no need for mixing time arguments of Markov chains.} To overcome this challenge, we make a connection to what might at first appear unrelated: the \textit{Incremental Aggregated Gradient} (\texttt{IAG}) method for finite-sum optimization. Our main observation here is that the elegant analysis of the \texttt{IAG} method in~\cite{IAG} shows how one can handle the effect of ``shocks" (delayed terms), as long as the amplitude and duration of the shocks is not too large. This ends up being precisely what we need for our purpose: we carefully relate the amplitude of the shocks to our constructed Lyapunov function, and their duration to the mixing time of the underlying Markov chain. We spell out these details explicitly in Section~\ref{sec:Proofs}.

$\bullet$ \textbf{New Proof Ingredients for Theorem~\ref{thm:multi-agent}.} Despite the long list of papers on MARL, the \textbf{only} one that establishes a linear speedup (w.r.t. the number of agents) in sample-complexity under Markovian sampling is the very recent paper by~\cite{khodadadian}; all other papers end up making a restrictive i.i.d. sampling assumption~\cite{doan,liuMARL,shen2023} to show a speedup.\footnote{After the first version of this paper was released, and while the current version was under review, some other recent works managed to establish linear speedup results under various settings~\cite{woo2023, wangTMLR, dal2023, zhang2024, tian2024one}.} The proof in~\cite{khodadadian} is quite involved, and relies on Generalized Moreau Envelopes. This makes it particularly challenging to extend their proof framework to our setting where we need to additionally contend with the effects of compression and error-feedback. As such, we provide an alternate proof technique for establishing the linear speedup effect that relies crucially on a new \textit{variance reduction lemma} under Markovian data, namely Lemma~\ref{lemma:normbnd}. This lemma is not just limited to TD learning, and may be of independent interest to the MARL literature. Lemma~\ref{lemma:normbnd}, coupled with a finer Lyapunov function (relative to that for proving Theorem~\ref{thm:Markov}) enable us to establish the desired linear speedup result under Markovian sampling for our setting. We elaborate on these points in Section~\ref{sec:Proofs}. 
\end{enumerate}

\color{black} 

\textbf{Scope of our Work.} We note that our work is primarily theoretical in nature, in the same spirit as several recent finite-time RL papers~\cite{dalal, bhandari_finite, srikant, doan, khodadadian}. We do, however, report simulations on moderately sized (100 states) MDPs that reveal: (i) \textit{without error-feedback (EF), compressed \texttt{TD}(0) can end up making little to no progress}; and (ii) with EF, the performance of compressed \texttt{TD}(0) complies with our theory. These simulations are in line with those known in optimization for deep learning~\cite{stichsparse, emp1, emp2, reddystich}, where one empirically observes significant benefits of performing EF. Succinctly, we convey: \textit{compressed TD learning algorithms with EF are just as robust as their SGD counterparts, and exhibit similar convergence guarantees}.

\subsection{Related Work}

In what follows, we discuss the most relevant threads of related work.

$\bullet$ \textbf{Analysis of TD Learning Algorithms}. An analysis of the classical temporal difference learning algorithm~\cite{sutton1988} with value function approximation was first provided by~\cite{tsitsiklisroy}. They employed the ODE method~\cite{borkarode} - commonly used to study stochastic approximation algorithms - to provide an asymptotic convergence rate analysis of TD learning algorithms. Finite-time rates for such algorithms remained elusive for several years, till the work by~\cite{korda}. Soon after, \cite{narayanan} noted some issues with the proofs in~\cite{korda}. Under an i.i.d. sampling assumption, \cite{lakshmi} and \cite{dalal} went on to provide finite-time rates for TD learning algorithms. For the more challenging Markovian setting, finite-time rates have been recently derived using various perspectives: (i) by making explicit connections to optimization~\cite{bhandari_finite}; (ii) by taking a control-theoretic approach and studying the drift of a suitable Lyapunov function~\cite{srikant}; and (iii) by arguing that the mean-path temporal difference direction acts as a ``gradient-splitting" of an appropriately chosen function~\cite{liuTD}. Each of these interpretations provides interesting new insights into the dynamics of TD algorithms. 

The above works focus on the vanilla TD learning rule. Our work adds to this literature by providing an understanding of the \textit{robustness} of TD learning algorithms subject to structured distortions. 

$\bullet$ \textbf{Communication-Efficient Algorithms for (Distributed) Optimization and Learning.} In the last decade or so, a variety of both scalar~\cite{seide,wen,bernstein,qsgd}, and vector~\cite{mayekar,gandikota} quantization schemes have been explored for optimization/empirical risk minimization. In particular, an aggressive compression scheme employed popularly in deep learning is gradient sparsification, where one only transmits a few components of the gradient vector that have the largest magnitudes. While empirical studies~\cite{emp1,emp2} have revealed the benefits of extreme sparsification, theoretical guarantees for such methods are much harder to establish due to their \textit{biased} nature: the output of the compression operator is not an unbiased version of its argument. In fact, for biased schemes, naively compressing gradients can lead to diverging iterates~\cite{reddySignSGD,beznosikov}. In~\cite{stichsparse,alistarhsparse}, and later in~\cite{reddySignSGD,kostina}, it was shown that the above issue can be ``fixed" by using a mechanism known as error-feedback that exploits memory. This idea is also related to modulation techniques in coding theory~\cite{Gray}. Recently, \cite{beznosikov} and \cite{lin_comp} provided theoretical results on biased sparsification for a master-worker type distributed architecture, and \cite{mitraNIPS} studied sparsification in a federated learning context. For more recent work on the error-feedback idea, we refer the reader to~\cite{EF1,EF2}.

Our work can be seen as the first analog of the above results in general, and the error-feedback idea in particular, in the context of iterative algorithms for RL.  

Finally, we note that while several compression algorithms have been proposed and analyzed over the years, fundamental performance bounds were only recently identified in \cite{mayekar}, \cite{gandikota}, and \cite{kostina}.  Computationally efficient algorithms that match such performance bounds were developed in \cite{saha}. While all the above results pertain to static optimization, some recent works have also explored quantization schemes in the context of sequential-decision making problems, focusing on multi-armed bandits~\cite{hanna,mitrabandits,pase}.

\section{Model and Problem Setup}
\label{sec:model}
We consider a Markov Decision Process (MDP) denoted by $\mc{M}=(\mc{S},\mc{A},\mc{P},\mc{R},\gamma)$,  where $\mc{S}$ is a finite state space of size $n$, $\mc{A}$ is a finite action space, $\mc{P}$ is a set of action-dependent Markov transition kernels, $\mc{R}$ is a reward function, and $\gamma \in (0,1)$ is the discount factor. For the majority of the paper, we will be interested in the \textit{policy evaluation} problem where the goal is to evaluate the value function $V_\mu$ of a given policy $\mu$; here, $\mu: \mc{S} \rightarrow \mc{A}$. The policy $\mu$ induces a Markov reward process (MRP) characterized by a transition matrix $P$, and a reward function $R$.\footnote{For simplicity of notation, we have dropped the dependence of $P$ and $R$ on the policy $\mu$.} In particular, $P(s,s')$ denotes the probability of transitioning from state $s$ to state $s'$ under the action $\mu(s)$; $R(s)$ denotes the expected instantaneous reward from an initial state $s$ under the action of the policy $\mu$. The discounted expected cumulative reward obtained by playing policy $\mu$ starting from initial state $s$ is given by:
\begin{equation}
    V_{\mu}(s) = \mathbb{E}\left[ \sum_{t=0}^{\infty}\gamma^t R(s_t) | s_0 = s \right],
\label{eqn:v_cum}
\end{equation}
where $s_t$ represents the state of the Markov chain (induced by $\mu$) at time $t$, when initiated from $s_0=s$. It is well-known \cite{tsitsiklisroy} that $V_\mu$ is the fixed point of the policy-specific Bellman operator $\mathcal{T}_\mu:\mathbb{R}^{n} \rightarrow \mathbb{R}^{n}$, i.e., $\mathcal{T}_\mu V_\mu = V_\mu$, where for any $V \in \mathbb{R}^n$, 

$$(\mc{T}_\mu V) (s) = R(s)+\gamma \sum_{s'\in \mc{S}} P(s,s') V(s'), \hspace{1mm} \forall s \in \mc{S}.$$

\textbf{Linear Function Approximation.} In practice, the size of the state space $\mc{S}$ can be  extremely large. This renders the task of estimating $V_\mu$ \textit{exactly} (based on observations of rewards and state transitions) intractable. One common approach to tackle this difficulty is to consider a parametric approximation $\hat{V}_\theta$ of $V_\mu$ in the linear subspace spanned by a set $\{\phi_k\}_{k\in [K]}$ of $K \ll n$  basis vectors, where $\phi_k =[\phi_k(1), \ldots, \phi_k(n)]^{\top} \in \mathbb{R}^{n}$. Specifically, we have 
$ \hat{V}_\theta(s) = \sum_{k=1}^{K} \theta(k)\phi_k(s),$ 
where $\theta = [\theta(1), \ldots, \theta(K)]^{\top} \in \mathbb{R}^{K}$ is a weight vector. Let $\Phi \in \mathbb{R}^{n \times K}$ be a matrix with $\phi_k$ as its $k$-th column; we will denote the $s$-th row of $\Phi$ by $\phi(s) \in \mathbb{R}^{K}$, and refer to it as the feature vector corresponding to state $s$. Compactly, $\hat{V}_\theta=\Phi \theta$, and for each $s\in \mc{S}$, we have that $\hat{V}_\theta(s) = \langle \phi(s), \theta \rangle$. As is standard, we assume that the columns of $\Phi$ are linearly independent, and that the feature vectors are normalized, i.e., for each $s \in \mc{S}$, $\Vert \phi(s) \Vert^2_2 \leq 1.$ 

\textbf{The \texttt{TD}(0) Algorithm.} The goal now is to find the best parameter vector $\theta^*$ that minimizes the distance (in a suitable norm) between $\hat{V}_{\theta}$ and $V_\mu$. To that end, we will focus on the classical \texttt{TD}(0) algorithm within the family of TD learning algorithms. Starting from an initial estimate $\theta_0$, at each time-step $t=0, 1, \ldots$, this algorithm receives as observation a data tuple $X_t=(s_t, s_{t+1}, r_t=R(s_t))$  comprising of the current state, the next state reached by playing action $\mu(s_t)$, and the instantaneous reward $r_t$. Given this tuple $X_t$, let us define $g_t(\theta)=g(X_t,\theta)$ as:
$$ g_t(\theta) \triangleq \left(r_t + \gamma \langle \phi(s_{t+1}), \theta \rangle -  \langle \phi(s_{t}), \theta \rangle\right) \phi(s_t), \forall \theta \in \mathbb{R}^K. $$ 
The \texttt{TD}(0) update to the current parameter $\theta_t$ with step-size $\alpha_t \in (0,1)$ can now be described succinctly as: 
\begin{equation}
    \theta_{t+1}=\theta_t + \alpha_t g_t(\theta_t). 
\label{eqn:TD(0)update}
\end{equation}
Under some mild technical conditions, it was shown in \cite{tsitsiklisroy} that the iterates generated by \texttt{TD}(0) converge almost surely to the best linear approximator in the span of $\{\phi_k\}_{k\in [K]}$. In particular, $\theta_t \rightarrow \theta^*$, where $\theta^*$ is the unique solution of the projected Bellman equation $\Pi_D \mc{T}_\mu (\Phi \theta^*) = \Phi \theta^*$. Here, $D$ is a diagonal matrix with entries given by the elements of the stationary distribution $\pi$ of the kernel $P$. Moreover, $\Pi_D(\cdot)$ is the projection operator onto the subspace spanned by $\{\phi_k\}_{k\in [K]}$ with respect to the inner product $\langle \cdot, \cdot \rangle_D$. The key question we explore in this paper is: \textit{What can be said of the convergence of \texttt{TD}(0) when the update direction  $g_t(\theta_t)$ in~\eqref{eqn:TD(0)update} is replaced by a distorted/compressed version of it?} In the next section, we will make the notion of distortion precise, and outline the various technical challenges that make it non-trivial to answer the question that we posed above.   

\begin{figure}[t]
\centering  \includegraphics[width=0.4\linewidth]{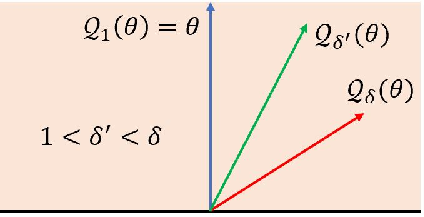}
  \caption{A geometric interpretation of the operator $\mc{Q}_{\delta}(\cdot)$ in Algorithm \ref{algo:algo1}. A larger $\delta$ induces more distortion. 
  }
  \label{fig:dist}
\end{figure} 

\section{Error-Feedback meets TD Learning}
\label{sec:sparseTD}

\begin{algorithm}[t]
\caption{The \texttt{EF-TD} Algorithm} 
\label{algo:algo1}  
 \begin{algorithmic}[1]
\State \textbf{Input:} Initial estimate $\theta_0 \in \mathbb{R}^{K}$, initial error $e_{-1} =0$,  and step-size $\alpha \in (0,1)$. 
\For {$t=0,1, \ldots$} 
\State Observe tuple $X_t = (s_t, s_{t+1}, r_t)$.
\State Compute perturbed \texttt{TD}(0) direction $h_t$:
\begin{equation}
    h_t=\mc{Q}_{\delta}\left(e_{t-1}+g_t(\theta_t)\right).
\label{eqn:compression}
\end{equation}
\State Update parameter: $\theta_{t+1}=\theta_t + \alpha h_t$. 
\State Update error: $e_t=e_{t-1}+g_t(\theta_t)-h_t$. 
\EndFor
\end{algorithmic}
 \end{algorithm}
 
In this section, we propose a general framework for analyzing TD learning algorithms with distorted update directions. Extreme forms of such distortion could involve replacing each coordinate of the \texttt{TD}(0) direction with just its sign, or retaining just $k\in [K]$ of the largest magnitude coordinates and zeroing out the rest. In the sequel, we will refer to these variants of \texttt{TD}(0) as \texttt{SignTD}(0) and \texttt{Topk-TD}(0), respectively. Coupled with a memory mechanism known as error-feedback, it is known that analogous variants of \texttt{SGD} exhibit, almost remarkably, the same behavior as \texttt{SGD} itself \cite{stichsparse}. Inspired by these findings, we ask: \textit{Does compressed  \texttt{TD}(0) with error-feedback exhibit behavior similar to that of \texttt{TD}(0)?}

Our motivation behind studying the above question is to build an understanding of: (i) communication-efficient versions of MARL algorithms where agents exchange compressed model-differentials, i.e., the gradient-like updates (see Section \ref{sec:multi-agent}); and (ii) the \textit{robustness} of TD learning algorithms to \textit{structured perturbations.} It is important to reiterate here that our motivation mirrors that of studying perturbed versions of \texttt{SGD} in the context of optimization. 

\textbf{Description of \texttt{EF-TD}.} In Algorithm~\ref{algo:algo1}, we propose the compressed \texttt{TD}(0) algorithm with error-feedback (\texttt{EF-TD}). Compared to~\eqref{eqn:TD(0)update}, we note that the parameter $\theta_t$ is updated based on $h_t$ - a compressed version of the actual \texttt{TD}(0) direction $g_t(\theta_t)$, where the compression is due to the operator $\mc{Q}_{\delta}:\mathbb{R}^K \rightarrow \mathbb{R}^{K}$. The information lost up to time $t-1$ due to compression is accumulated in the memory variable $e_{t-1}$, and injected back into the current step as in~\eqref{eqn:compression}. In line with compressors used for optimization, we consider a fairly general operator $\mc{Q}_{\delta}$ that is required to only satisfy the following contraction property for some $\delta \geq 1$:
\begin{equation}
\Vert \mc{Q}_\delta(\theta)-\theta \Vert^2_2 \leq \left(1-\frac{1}{\delta}\right) \Vert \theta \Vert^2_2, \forall \theta \in \mathbb{R}^K.
\label{eqn:compressor}
\end{equation}

\textbf{A distortion perspective.} For $\theta \neq 0$, it is easy to verify based on~\eqref{eqn:compressor} that $\langle \mc{Q}_\delta(\theta), \theta \rangle \geq 1/(2\delta) \Vert \theta \Vert^2_2 > 0$, i.e., the angle between $\mc{Q}_\delta(\theta)$ and $\theta$ is acute. This provides an alternative view to the compression perspective: one can think of  $\mc{Q}_{\delta}(\theta)$ as a tilted version of $\theta$, with a larger $\delta$ implying more tilt, and $\delta=1$ implying no distortion at all. See Figure~\ref{fig:dist} for a visual interpretation of this point.

The contraction property in~\eqref{eqn:compressor} is satisfied by several popular quantization/sparsification schemes. These include the sign operator, and the Top-$k$ operator that selects $k$ coordinates with the largest magnitude, zeroing out the rest. Importantly, note that the operator $\mc{Q}_{\delta}$ does not necessarily yield an unbiased version of its argument. In optimization, error-feedback serves to counter the bias introduced by $\mc{Q}_{\delta}$. In fact, without error-feedback, algorithms like \texttt{SignSGD} can converge very slowly, or not converge at all \cite{reddySignSGD}. In a similar spirit, we empirically observe in Fig.~\ref{fig:sim} (later in Section~\ref{sec:Sims}) that without error-feedback, \texttt{SignTD}(0) can end up making little to no progress towards $\theta^*$. However, understanding whether error-feedback guarantees convergence to $\theta^*$ is quite non-trivial. We now provide some intuition as to why this is the case. 

 \textbf{Need for Technical Novelty.} Error-feedback ensures that past (pseudo)-gradient information is injected with some delay. Thus, for optimization, error-feedback essentially leads to a delayed SGD algorithm. As long as the objective function is smooth, the gradient does not change much, and the delay-effect can be controlled. \textit{This intuition does not carry over to our setting since the \texttt{TD}(0) update direction is not a stochastic gradient of any fixed objective function}.\footnote{{As observed by~\cite{bhandari_finite} and~\cite{liuTD}, this can be seen from the fact that the derivative of the TD update direction produces a matrix that is not necessarily symmetric, unlike the symmetric Hessian matrix of a fixed objective function.}} Thus, controlling the effect of the compressor-induced delay requires new techniques for our setting.  Moreover, unlike the SGD noise, the data tuples $\{X_t\}$ are part of the same Markov trajectory, introducing further complications. Finally, since the parameter updates of Algorithm \ref{algo:algo1} are intricately linked with the error variable $e_t$, to analyze their joint behavior, we need to perform a more refined Lyapunov drift analysis relative to the standard \texttt{TD}(0) analysis. Despite these challenges, in the sequel, we will establish that \texttt{EF-TD} retains almost the same convergence guarantees as vanilla \texttt{TD}(0). 

 \begin{remark} {It is important to emphasize that even though the error vector $e_t$ in \texttt{EF-TD} might be dense, $e_t$ never gets transmitted. The communication-efficient aspect of \texttt{EF-TD} stems from the fact that it only requires communicating the compressed update direction $h_t$ - a vector that can be represented by using just a few bits (depending on the level of compression). This fact is further clarified in our description of the multi-agent version of \texttt{EF-TD} in Section~\ref{sec:multi-agent} (see line 6 of Algorithm~\ref{algo:algo2} in Section~\ref{sec:multi-agent}).}
 \end{remark}

\section{Analysis of \texttt{EF-TD} under Markovian Sampling}
\label{sec:Markov} 
The goal of this section is to provide a rigorous finite-time analysis of \texttt{EF-TD} under Markovian sampling. To that end, we need to introduce a few concepts and make certain standard assumptions. We start by assuming that all rewards are uniformly bounded by some $\bar{r} > 0$, i.e., $|R(s)| \leq \bar{r}, \forall s\in \mc{S}$. This ensures that the value functions exist. Next, we state a standard assumption that shows up in the finite-time analysis of iterative RL algorithms~\cite{bhandari_finite,srikant,chenQ, patil2023, doan,khodadadian}. 

\begin{assumption}
\label{ass:Markov}
The Markov chain induced by the policy $\mu$ is aperiodic and irreducible. 
\end{assumption}

The above assumption implies that the Markov chain induced by $\mu$ admits a unique stationary distribution $\pi$~\cite{levin2017markov}. Let $\Sigma = \Phi^\top D \Phi$. Since $\Phi$ is full column rank, $\Sigma$ is full rank with a strictly positive smallest eigenvalue $\omega <1$. To appreciate the implication of the above assumption, let us define the ``steady-state" version of the TD update direction as follows: for a fixed $\theta\in\mathbb{R}^K$, let 
$$\bar{g}(\theta) \triangleq \mathbb{E}_{s_t \sim  \pi,  s_{t+1} \sim P_{\mu}(\cdot| s_t) }\left[g(X_t,\theta)\right].$$ We now introduce the notion of the mixing time $\tau_{\epsilon}$. 
\begin{definition} \label{def:mix} 
Define
$\tau_{\epsilon} \triangleq \min\{t\geq1: \Vert \mathbb{E}\left[g(X_k,\theta)|X_0\right]-\bar{g}(\theta)\Vert_2 \leq \epsilon\left(\Vert \theta \Vert_2 +1 \right), \forall k \geq t, \forall \theta \in \mathbb{R}^K, \forall X_0\}.$ 
\end{definition}

Assumption \ref{ass:Markov} implies that the total variation distance between the conditional distribution $\mathbb{P}\left(s_t=\cdot|s_0=s\right)$ and the stationary distribution $\pi$ decays geometrically fast for all $t \geq 0$, regardless of the initial state $s\in\mc{S}$~\cite{levin2017markov}. As a consequence of this geometric mixing of the Markov chain, it is not hard to show that $\tau_{\epsilon}$ in Definition \ref{def:mix} is $O\left(\log(1/\epsilon)\right)$; see, for instance,  \cite{chenQ}. The precision that suffices for all results in this paper is $\epsilon=\alpha^2$, where recall that $\alpha$ is the step-size. Henceforth, we will simply use $\tau$ as a shorthand for $\tau_{\alpha}$. Finally, let us define $\sigma \triangleq \max\{1, \bar{r}, \Vert \theta^* \Vert_2\}$ as the variance of our noise model, and $d_t \triangleq \Vert \theta_t - \theta^* \Vert_2$. We can now state the first main result of this paper. 

\begin{theorem} 
\label{thm:Markov} Suppose Assumption \ref{ass:Markov} holds. There exist universal constants $c, C \geq 1$ such that the iterates generated by \texttt{EF-TD} with step-size $\alpha \leq \frac{\omega (1-\gamma)}{c \max\{\delta, \tau\}}$ satisfy the following $\forall T\geq \tau$:
\begin{equation}
   \mathbb{E} \left[d^2_T\right] \leq C_1 \left(1-\frac{\alpha \omega(1-\gamma)}{C\tau}\right)^{T-\tau}  +O\left(\frac{\alpha (\tau+\delta) \sigma^2}{\omega (1-\gamma)}\right), \hspace{1mm} \textrm{where} \hspace{1mm} C_1=O(d^2_0+\sigma^2). 
\label{eqn:Markov_result}
\end{equation}
\end{theorem}

The proof of Theorem~\ref{thm:Markov} is provided in Appendix~\ref{app:Markovproof}. We now discuss the key implications of this result. 

\textbf{Discussion.} Theorem~\ref{thm:Markov} tells us that \texttt{EF-TD} guarantees linear convergence of the iterates to a ball around $\theta^*$, where the size of the ball scales with the variance $\sigma^2$; this exactly matches the behavior of vanilla TD~\cite{bhandari_finite,srikant}. Since the step-size $\alpha$ scales inversely with the distortion factor $\delta$, we observe from~\eqref{eqn:Markov_result} that the exponent of linear convergence gets slackened by $\delta$; once again, this is consistent with analogous results for SGD with (biased) compression \cite{beznosikov}. The variance term, namely the second term in~\eqref{eqn:Markov_result}, has the \textit{exact same dependence on $\tau, \omega$, and $\gamma$ as one observes for vanilla TD~\cite[Theorem 3]{bhandari_finite}.} {Observe that in this term, the effects of $\tau$ and $\delta$ in inflating the variance are additive. Moreover, even in the absence of compression, the dependence of the variance term on the mixing time $\tau$ is known to be \emph{unavoidable}~\cite{nagaraj}. \textit{This immediately leads to the following interesting conclusion: when the underlying Markov chain induced by the policy mixes ``slowly", i.e., has a large mixing time $\tau$, one can afford to be more aggressive in terms of compression, i.e., use a larger $\delta$, since this would lead to a variance bound that is no worse than in the uncompressed setting.} Said differently, Theorem \ref{thm:Markov} reveals that slowly-mixing Markov chains have a higher tolerance to distortions.} This observation is novel since no prior work has studied the effect of distortions to RL algorithms under Markovian sampling. It is also interesting to note here that the phenomenon described above shows up in other contexts too: for instance, the authors in \cite{gandikota} showed that certain quantization mechanisms in optimization automatically come with privacy guarantees. 

\textbf{Theorem~\ref{thm:Markov} is significant in that it is the first result to reveal that coupled with error-feedback, \texttt{EF-TD} is robust to extreme distortions.} We will corroborate this phenomenon empirically as well in Section~\ref{sec:Sims}. The other key takeaway from Theorem \ref{thm:Markov} is that the scope of the error-feedback mechanism - now popularly used in distributed learning - extends to stochastic approximation problems well beyond just static optimization settings. 

{In Appendix~\ref{app:proofnoiseless}, we analyze a simpler steady-state version of \texttt{EF-TD} to help build intuition. There, we also provide an analysis of compressed TD learning algorithms that do not employ any error-feedback mechanism. Our analysis reveals that such schemes can also converge, provided the compression parameter $\delta$ satisfies a restrictive condition. Notably, as evident from the statement of Theorem~\ref{thm:Markov}, we do not need to impose any restrictions on $\delta$ for the convergence of \texttt{EF-TD}.}    

\section{Compressed Nonlinear Stochastic Approximation with Error-Feedback}
\label{sec:nonlin}
For the \texttt{TD}(0) algorithm, the update direction $g_t(\theta)$ is affine in the parameter $\theta$, i.e., $g_t(\theta)$ is of the form $A(X_t) \theta-b(X_t)$. As such, the recursion in~\eqref{eqn:TD(0)update} can be seen as an instance of linear stochastic approximation (SA), where the end goal is to use the data samples $\{X_t\}$ to find a $\theta$ that solves the linear equation  $A\theta=b$; here, $A$ and $b$ are the steady-state versions of $A(X_t)$ and $b(X_t)$, respectively. The more involved \texttt{TD}($\lambda$) algorithms within the TD learning family also turn out to be instances of linear SA. Instead of deriving versions of our results for \texttt{TD}($\lambda$) algorithms in particular, we will instead consider a much more general nonlinear SA setting. Accordingly, we now study a variant of Algorithm \ref{algo:algo1} where $g(X_t,\theta)$ is a general nonlinear map, and as before, ${X_t}$ comes from a finite-state Markov chain that is assumed to be aperiodic and irreducible. We assume that the nonlinear map satisfies the following regularity conditions. 

\begin{assumption} \label{ass:lips} There exist $L, \sigma \geq 1$ s.t. the following hold for any $X$ in the space of data tuples: (i) $\Vert g(X,\theta_1)-g(X,\theta_2) \Vert_2 \leq L \Vert \theta_1 - \theta_2 \Vert_2, \forall \theta_1,\theta_2 \in \mathbb{R}^{K}$, and (ii) $\Vert g(X, \theta) \Vert_2 \leq L(\Vert \theta \Vert_2 + \sigma), \forall \theta \in \mathbb{R}^K.$
\end{assumption}

\begin{assumption} \label{ass:diss} Let $\bar{g}(\theta) \triangleq \mathbb{E}_{X_t \sim  \pi}\left[g(X_t,\theta)\right]$, $\forall \theta \in \mathbb{R}^K$, where $\pi$ is the stationary distribution of the Markov process $\{X_t\}$. The equation $\bar{g}(\theta)=0$ has a solution $\theta^*$, and $ \exists \beta >0$ s.t. 
\begin{equation}
    \langle \theta - \theta^*, \bar{g}(\theta) - \bar{g}(\theta^*) \rangle \leq - \beta \Vert \theta-\theta^* \Vert^2_2, \forall \theta \in \mathbb{R}^K. 
\end{equation}
\end{assumption}

In words, Assumption \ref{ass:lips} says that $g(X,\theta)$ is globally uniformly (w.r.t. $X$) Lipschitz in the parameter $\theta$. Assumption \ref{ass:diss} is a strong monotone property of the map $-\bar{g}(\theta)$ that guarantees that the iterates generated by the steady-state version of~\eqref{eqn:TD(0)update} converge exponentially fast to $\theta^*$. To provide some context, consider the \texttt{TD}(0) setting. Under feature normalization and the bounded rewards assumption, the global  Lipschitz property is immediate. Moreover, Assumption \ref{ass:diss} corresponds to negative-definiteness of the steady-state matrix $A=\mathbb{E}_{X_t \sim \pi} [A(X_t)]$; this negative-definite property is also easy to verify~\cite{srikant}. For optimization, Assumptions \ref{ass:lips} and  \ref{ass:diss} simply correspond to $L$-smoothness and $\beta$-strong-convexity of the loss function, respectively. For simplicity, we state the main result of this section for $L=2$; the analysis for a general $L$ follows identical arguments.

\begin{theorem} 
\label{thm:nonlin}
{Suppose Assumption~\ref{ass:lips} holds with $L=2$, and Assumption ~\ref{ass:diss} holds.} Let $\bar{\beta}=\min\{\beta, 1/\beta\}$. There exist universal constants $c, C \geq 1$ such that Algorithm~\ref{algo:algo1} with step-size $\alpha \leq \frac{\bar{\beta}}{c \max\{\delta, \tau\}}$ guarantees
\begin{equation}
   \mathbb{E} \left[d^2_T\right] \leq C_1 \left(1-\frac{\alpha \beta}{C\tau}\right)^{T-\tau}  +O\left(\frac{\alpha (\tau+\delta) \sigma^2}{\beta}\right), \forall T \geq \tau, \hspace{1mm} \textrm{where} \hspace{1mm} C_1=O(d^2_0+\sigma^2). 
   \end{equation}
\end{theorem}

\textbf{Discussion.} We note that the guarantee in Theorem \ref{thm:nonlin} mirrors that in Theorem \ref{thm:Markov}, and represents our setting in its full generality, accounting for nonlinear SA, Markovian noise, compression, and error-feedback. Providing an explicit finite-time analysis for this setting is one of the main contributions of our paper. Now let us comment on the applications of this result. As noted earlier, our result applies to \texttt{TD}($\lambda$) algorithms and SGD under Markovian noise~\cite{doanMkv}; the effect of compression and error-feedback was previously unknown for both these settings. More importantly, certain instances of Q-learning with linear function approximation can also be captured via Assumptions~\ref{ass:lips} and~\ref{ass:diss}~\cite{chenQ}. \textit{The key implication is that our analysis framework is not just limited to policy evaluation, but rather extends gracefully to decision-making (control) problems.} This speaks to the significance of Theorem~\ref{thm:nonlin}. 

\section{Communication-Efficient Multi-Agent Reinforcement Learning (MARL)} \vspace{-2mm}
\label{sec:multi-agent}
A key motivation of our work is to develop communication-efficient algorithms for MARL. This is particularly relevant for networked/federated versions of RL problems where communication imposes a major bottleneck~\cite{qiFRL}. To that end, we consider a collaborative MARL setting that has appeared in various recent works~\cite{qiFRL,doan,liuMARL,jinFRL,khodadadian,shen2023}. The setup comprises of $M$ agents, all of whom interact with the \textit{same} MDP, and can communicate via a central server. Every agent seeks to evaluate the \textit{same} policy $\mu$. The purpose of collaboration is as in the standard FL setting: to achieve a $M$-fold reduction in sample-complexity by leveraging information from all agents. In particular, we ask: \textit{By exchanging compressed information via the server, is it possible to expedite the process of learning by achieving a linear speedup in the number of agents?} While such questions have been extensively studied for supervised learning, we are unaware of any work that addresses them in the context of MARL. We further note that even without compression or error-feedback, establishing linear speedups under Markovian sampling is highly non-trivial, {and the only other paper that does so is the very recent work of}~\cite{khodadadian}. As we explain later in Section~\ref{sec:Proofs}, our proof technique departs significantly from~\cite{khodadadian}. 

\begin{algorithm}[t]
\caption{Multi-Agent \texttt{EF-TD}} 
\label{algo:algo2}  
 \begin{algorithmic}[1]
\State \textbf{Input:} Initial estimate $\theta_0 \in \mathbb{R}^{K}$, initial errors $e_{i,-1} =0, \forall i \in [M]$,  and step-size $\alpha \in (0,1)$. 
\For {$t=0,1, \ldots$} 
\State Server sends $\theta_t$ to all agents.
\For {$i \in [M] $}
\State Observe tuple $X_{i,t} = (s_{i,t}, s_{i,t+1}, r_{i,t})$. 
\State Compute compressed \texttt{TD}(0) direction $h_{i,t}=\mc{Q}_{\delta}\left(e_{i,t-1}+g_{i,t}(\theta_t)\right)$, and send $h_{i,t}$ to server.
\State Update error: $e_{i,t}=e_{i,t-1}+g_{i,t}(\theta_t)-h_{i,t}$. 
\EndFor
\State Server updates the model as follows: $\theta_{t+1}=\theta_{t}+\alpha \bar{h}_t$, where $h_t=(1/M)\sum_{i\in [M]} h_{i,t}$. 
\EndFor
\end{algorithmic}
 \end{algorithm}

We propose and analyze a natural multi-agent version of \texttt{EF-TD}, outlined as  Algorithm~\ref{algo:algo2}. In a nutshell, multi-agent \texttt{EF-TD} operates as follows. At each time-step, the server sends down a model $\theta_t$; each agent $i$ observes a local data sample, computes and uploads the compressed direction $h_{i,t}$, and updates its local error. The server then updates the model. Transmitting compressed \texttt{TD}(0) pseudo-gradients is consistent with both works in FL \cite{scaffold}, and in MARL \cite{qiFRL,doan,khodadadian}, where the agents essentially exchange model-differentials (i.e., the update directions), keeping their raw data private. It is worth noting here that while all agents play the same policy, the realizations of the data tuples $\{X_{i,t}\}$ may vary across agents. Let $\tilde{\theta}_t=\theta_t+\alpha\bar{e}_{t-1}$, where $\bar{e}_{t}=(1/M) \sum_{i \in [M]} e_{i,t}$, and define $\tilde{d}_t \triangleq \Vert \tilde{\theta}_t - \theta^* \Vert_2$. We can now state our main convergence result for Algorithm~\ref{algo:algo2}. 

\begin{theorem} 
\label{thm:multi-agent} Suppose Assumption \ref{ass:Markov} holds. There exist universal constants $c, C \geq 1$ such that with step-size $\alpha \leq \frac{\omega (1-\gamma)}{c \max\{\delta, \tau\}}$, and $C_1=O(d^2_0+\sigma^2)$, Algorithm~\ref{algo:algo2} guarantees the  following $\forall T\geq 2\tau$:
\begin{equation}
   \mathbb{E} \left[\tilde{d}^2_T\right] \leq \underbrace{C_1 \left(1-\frac{\alpha \omega(1-\gamma)}{C\tau}\right)^{T-2\tau}}_{T_1}  +\underbrace{O\left(\frac{\alpha \tau}{\omega (1-\gamma)}\right) \frac{\sigma^2}{\textcolor{red}{M}   }}_{T_2} + \underbrace{O\left(\frac{\alpha^2 \max\{\delta, \tau\}
\delta}{\omega^2 (1-\gamma)^2}\right)\sigma^2}_{T_3}.
\label{eqn:MARL}
\end{equation}
\end{theorem}

The proof of Theorem~\ref{thm:multi-agent} is deferred to Appendix~\ref{app:proofMARKMK}. There are several key messages conveyed by Theorem~\ref{thm:multi-agent}. We discuss them below.

\textbf{Message 1: Linear Speedup.} Observe that the noise variance terms in~\eqref{eqn:MARL} are $T_2$ and $T_3$, where $T_3$ is $O(\alpha^2)$, i.e., a higher-order term in $\alpha$. Thus, for small $\alpha$, $T_2$ is the dominant noise term. Compared to the noise term for vanilla TD in~\cite[Theorem 3]{bhandari_finite}, $T_2$ in our bound has exactly the same dependence on $\tau$, $\omega$, and $\gamma$, and importantly, exhibits an inverse scaling w.r.t. $M$, \textit{implying a $M$-fold reduction in the variance $\sigma^2$ relative to the single agent setting.} This is precisely what we wanted. For exact convergence, we can set $\alpha=O(\log(MT^2)/T)$ to make $T_1$ and $T_3$ of order $\tilde{O}(1/T^2)$, and the dominant term $T_2=\tilde{O}{(\sigma^2/(MT))}$. Thus, relative to the $O(\sigma^2/T)$ rate of \texttt{TD}(0), \textit{we achieve a linear speedup w.r.t. the number of agents $M$ in our dominant term $T_2$.} 

\textbf{Message 2: Communication-efficiency comes (nearly) for free.} The second key thing to note is that the dominant $T_2$ term is \textit{completely unaffected by the compression factor $\delta$}. Indeed, the compression factor $\delta$ only affects higher-order terms that decay much faster than $T_2$. \textit{This means that we can significantly ramp up $\delta$, thereby communicating very little, while preserving the asymptotic rate of convergence of \texttt{TD}(0) and achieving optimal speedup in the number of agents, i.e., communication-efficiency comes almost for free.} For instance, suppose $\mc{Q}_{\delta}(\cdot)$ is a top-$1$ operator, i.e., $h_{i,t}$ has only one non-zero entry which can be encoded using $\tilde{O}(1)$ bits. In this case, although $\delta=K$, where $K$ is the potentially large number of features, the dominant $\tilde{O}\left(\sigma^2/(MT)\right)$ term remains unaffected by $K$. \textbf{Thus, in the context of MARL, our work is the first to show that asymptotically optimal rates can be achieved by transmitting just $\tilde{O}(1)$ bits per-agent per time-step}. 

\textbf{Message 3: Tight Dependence on the Mixing Time.} Compared to~\cite{khodadadian} - the only other paper in MARL that provides a linear speedup under Markovian sampling -  our dominant term $T_2$ has a tighter $O(\tau)$ dependence on the mixing time $\tau$ as compared to the $O(\tau^2)$ dependence in~\cite[Theorem 4.1]{khodadadian}. It should be noted here that the $O(\tau)$ dependence is known to be information-theoretically optimal~\cite{nagaraj}. 

A few remarks are in order before we end this section. 

\begin{remark} We note that the bound in Theorem~\ref{thm:multi-agent} is stated for $\tilde{\theta}_T$, not $\theta_T$. Since $\tilde{\theta}_T=\theta_T+\alpha\bar{e}_{T-1}$, to output $\tilde{\theta}_T$, the server needs to query $e_{i,T-1}$ from each agent $i$ \textit{only once} at time $T-1$. We believe that this one extra step of communication can also be avoided via a slightly sharper analysis. We provide such an analysis in Appendix~\ref{app:iidMARL}, albeit under a common i.i.d. sampling model~\cite{dalal, doan, liuMARL}. 
\end{remark}

\begin{remark} While our MARL result in Theorem~\ref{thm:multi-agent} is for TD learning, in light of the developments in Section~\ref{sec:nonlin}, we note that one can develop analogs of Theorem~\ref{thm:multi-agent} for multi-agent Q-learning as well. 
\end{remark}

\begin{remark} 
Suppose we set $M=1$ in~\eqref{eqn:MARL}, and compare the resulting bound with that in~\eqref{eqn:Markov_result}. While the effect of the distortion $\delta$ shows up as a higher-order $O(\alpha^2)$ term in the former, it manifests as a $O(\alpha)$ term in the latter. The difference in these bounds can be attributed to the fact that we use a finer Lyapunov function to prove Theorem~\ref{thm:multi-agent} relative to that which we use to prove Theorem~\ref{thm:Markov}. We do so primarily for the sake of exposition: the relatively simpler proof of Theorem~\ref{thm:Markov} (compared to Theorem~\ref{thm:multi-agent}) helps build much of the key intuition needed to understand the proof of Theorem~\ref{thm:multi-agent}. 
\end{remark}

\section{Technical Challenges in Analysis and Overview of our Proof Techniques}
\label{sec:Proofs}
We now discuss the novel steps in our analysis. Complete proof details are provided in the Appendix. 

\textbf{Proof Sketch for Theorem~\ref{thm:Markov}.} Inspired by the perturbed iterate framework of \cite{mania}, our first step is to define the perturbed iterate $\tilde{\theta}_t=\theta_t+\alpha e_{t-1}$. Simple calculations reveal: $\tilde{\theta}_{t+1}=\tilde{\theta}_t+\alpha g_t(\theta_t)$. Notice that this recursion is almost the same as~\eqref{eqn:TD(0)update}, other than the fact that the \texttt{TD}(0) update direction is evaluated at $\theta_t$ instead of $\tilde{\theta}_t$. Thus, to analyze the above recursion, we need to account for the gap between $\theta_t$ and $\tilde{\theta}_t$, the cause of which is the memory variable $e_{t-1}$. Accordingly, our next key step is to construct a novel Lyapunov function $\psi_t$ that captures the \textit{joint dynamics} of $\tilde{\theta}_t$ and $e_{t-1}$:  
\begin{equation}
    \psi_t \triangleq \mathbb{E} [\tilde{d}^2_{t} + \alpha^2 \Vert e_{t-1} \Vert_2^2], \hspace{1mm} \textrm{where} \hspace{1mm} \tilde{d}^2_t=\Vert \tilde{\theta}_t - \theta^* \Vert^2_2. 
\label{eqn:Lyap}
\end{equation}
\textit{As far as we are aware, a potential function of the above form has not been analyzed in prior RL work.} Our goal now is to exploit the geometric mixing property in Assumption~\ref{ass:Markov} to establish that $\psi_t$ decays over time (up to noise terms). This is precisely where the intricate coupling between the parameter $\tilde{\theta}_t$, the memory variable $e_t$, and the Markovian data tuples $\{X_t\}$ makes the analysis quite challenging. Let us elaborate. To exploit mixing-time arguments, we need to condition sufficiently into the past and bound drift terms of the form $\Vert \tilde{\theta}_t - \tilde{\theta}_{t-\tau}\Vert_2.$ This is where the coupling between $\tilde{\theta}_t$ and $e_t$ introduces non-standard delay terms, precluding the direct use of prior approaches in RL~\cite{bhandari_finite,srikant,chenQ}. This difficulty does not show up in compressed optimization since one deals with i.i.d. data, precluding the need for mixing-time arguments. A workaround here is to employ a projection step (as in~\cite{bhandari_finite,doan}) to simplify the analysis. This is not satisfying for two reasons: (i) as we show in the Appendix, the simplification in the analysis comes at the cost of a sub-optimal dependence on the distortion $\delta$; and (ii) to project, one needs prior knowledge of the set containing $\theta^*$; this turns out to be unnecessary. Our key innovation here to bound the drift $\Vert \tilde{\theta}_t - \tilde{\theta}_{t-\tau}\Vert$ is the following technical lemma.

\begin{lemma} 
\label{lemma:shock}
(\textbf{Relating Drift to Past Shocks}) For \texttt{EF-TD}, the following is true $\forall t\geq \tau$:
\begin{equation}
\mathbb{E}[\Vert \tilde{\theta}_t - \tilde{\theta}_{t-\tau}\Vert^2_2] = O(\alpha^2 \tau^2) \max_{t-\tau \leq \ell \leq t} \psi_{\ell} + O(\alpha^2 \tau^2 \sigma^2).
\label{eqn:shock}
\end{equation}
\end{lemma}
The above lemma relates the drift to \textit{damped} ``shocks" (delay terms) from the past, where (i) the amplitude of the shocks is captured by our Lyapunov function; (ii) the duration of these shocks is the mixing time $\tau$; and (iii) the damping arises from the fact that shocks are scaled by $\alpha^2$. In attempting to overcome one difficulty, we have created another for ourselves via the ``max" term in~\eqref{eqn:shock}. This is where we make a connection to the analysis of the \texttt{IAG} algorithm in~\cite{IAG} where a similar ``max" term shows up. Via this connection, we are able to analyze the following final recursion:
\begin{equation}
\psi_{t+1} \leq (1-c \alpha \omega (1-\gamma)) \psi_t + O(\alpha^2 \tau) \max_{t-\tau \leq \ell \leq t} \psi_{\ell} + O(\alpha^2) (\tau + \delta) \sigma^2, \hspace{1mm} \textrm{where} \hspace{1mm} c <1.
\label{eqn:final_recurse}
\end{equation}
\textbf{Proof Sketch for Theorem~\ref{thm:multi-agent}.} To establish the linear speedup property, we need a finer Lyapunov function (and much finer analysis)  relative to Theorem~\ref{thm:Markov}. Accordingly, we construct: 
\begin{equation}
    \Xi_t \triangleq \mathbb{E}\left[ \Vert \tilde{\theta}_t-\theta^* \Vert^2_2 \right]+C \alpha^3 E_{t-1}, \hspace{2mm} \textrm{where} \hspace{2mm} E_t \triangleq \frac{1}{M} \mathbb{E}\left[ \sum_{i=1}^M \Vert e_{i,t} \Vert^2_2 \right],
    \label{eqn:LyapMARL}
\end{equation}
and $C$ is a suitable design parameter. Using the techniques in~\cite{bhandari_finite}, \cite{srikant}, and \cite{chenQ} to bound $\Vert g_{i,t}(\theta) \Vert^2$ unfortunately do not yield the desired linear speedup. Moreover, it is unclear whether the Generalized Moreau Envelope framework in~\cite{khodadadian} can be extended to analyze \eqref{eqn:LyapMARL}. As such, we depart from these works by establishing the following key result by carefully exploiting the geometric mixing property.

\begin{lemma} 
\label{lemma:normbnd}
Let $z_t(\theta_t) \triangleq (1/M) \sum_{i\in [M]} g_{i,t}(\theta_t)$. Under Assumption~\ref{ass:Markov}, the following bound holds for Algorithm~\ref{algo:algo2}: $\mathbb{E}[\Vert z_t(\theta_t) \Vert^2_2] = O(1) \mathbb{E}[\tilde{d}^2_t] + O(\alpha^2) E_{t-1} + O(1/M+\alpha^4) \sigma^2, \forall t \geq \tau.$
\end{lemma}

The above norm bound on the average TD direction turns out to be the main ingredient in bounding the drift terms for Algorithm~\ref{algo:algo2}, and eventually establishing a recursion of the form in \eqref{eqn:final_recurse} for $\Xi_t$. 

\color{black}
\begin{figure}[t]
\begin{tabular}{cc}
\hspace{8mm} \includegraphics[scale=0.45]{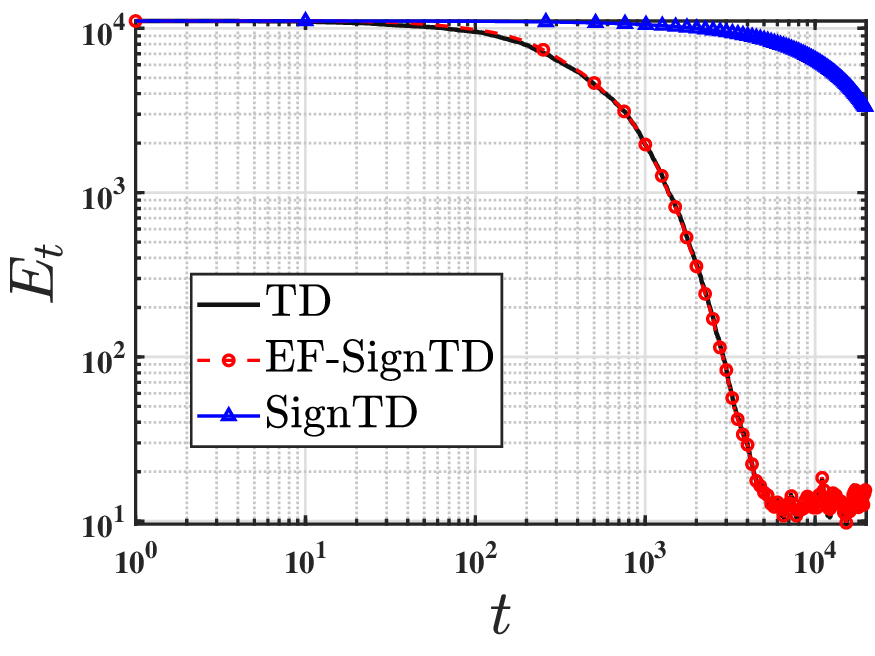}& \hspace{-3mm}  \includegraphics[scale=0.45]{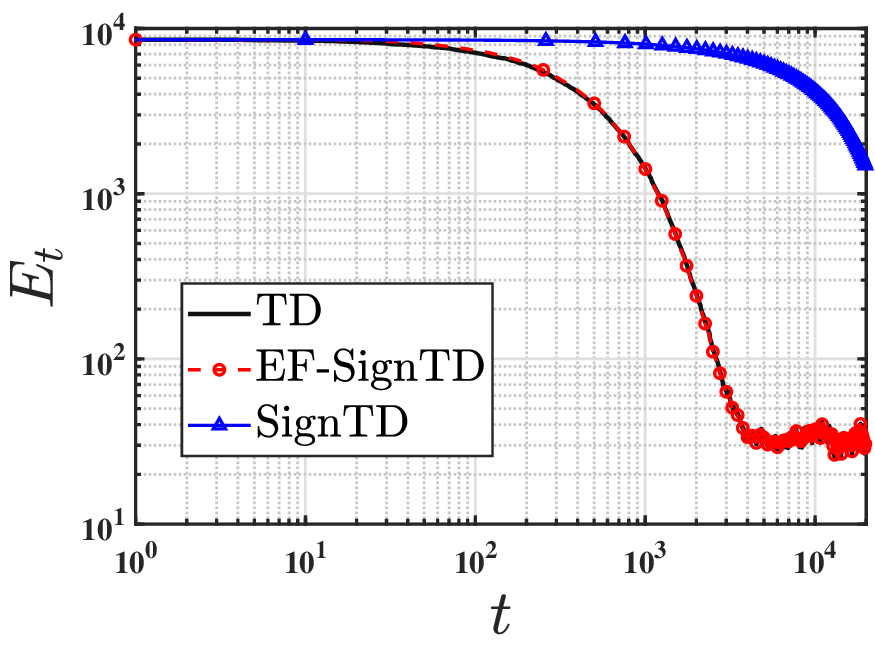}
\end{tabular}
\caption{{Plots of the mean-squared error $E_t=\Vert \theta_T-\theta^* \Vert^2_2$ for vanilla \texttt{TD}(0) without compression, and \texttt{SignTD}(0) with (\texttt{EF-SignTD}) and without (\texttt{SignTD}) error-feedback. (\textbf{Left}) Discount factor $\gamma=0.5$. (\textbf{Right}) Discount factor $\gamma=0.9$.}} 
\label{fig:sim}
\end{figure}

\begin{figure*}[t]
\includegraphics[scale=0.35]{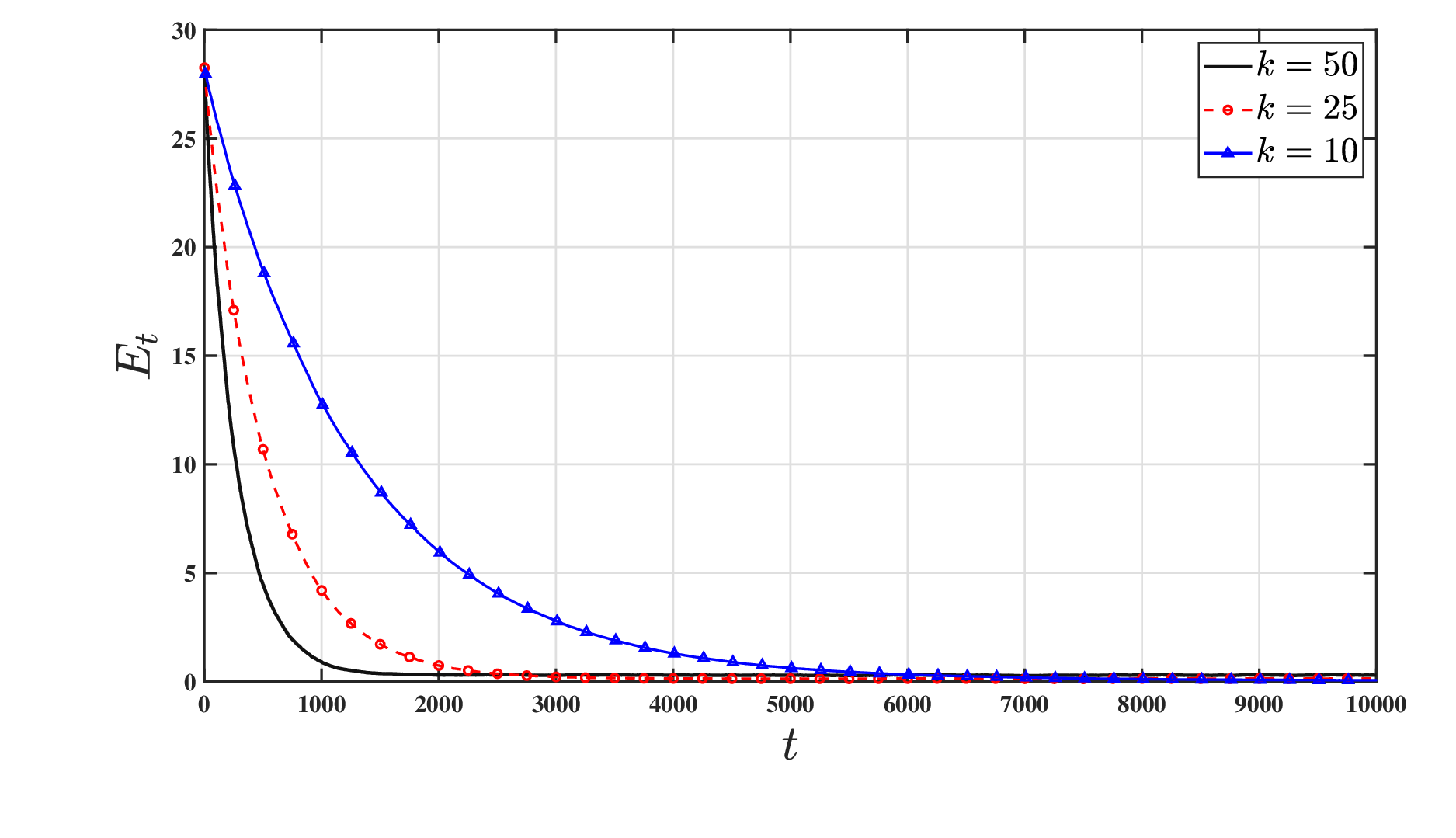}
\caption{{{Plot of the mean-squared error $E_t=\Vert \theta_T-\theta^* \Vert^2_2$ for \texttt{EF-TD} (Algo.~\ref{algo:algo1}), with $\mc{Q}_{\delta}(\cdot)$ chosen to be the top-$k$ operator. We study the effect of varying the number of components transmitted $k$.}}}
\label{fig:topk}
\centering
\end{figure*}

\begin{figure}[t]
\begin{tabular}{cc}
\hspace{8mm} \includegraphics[scale=0.225]{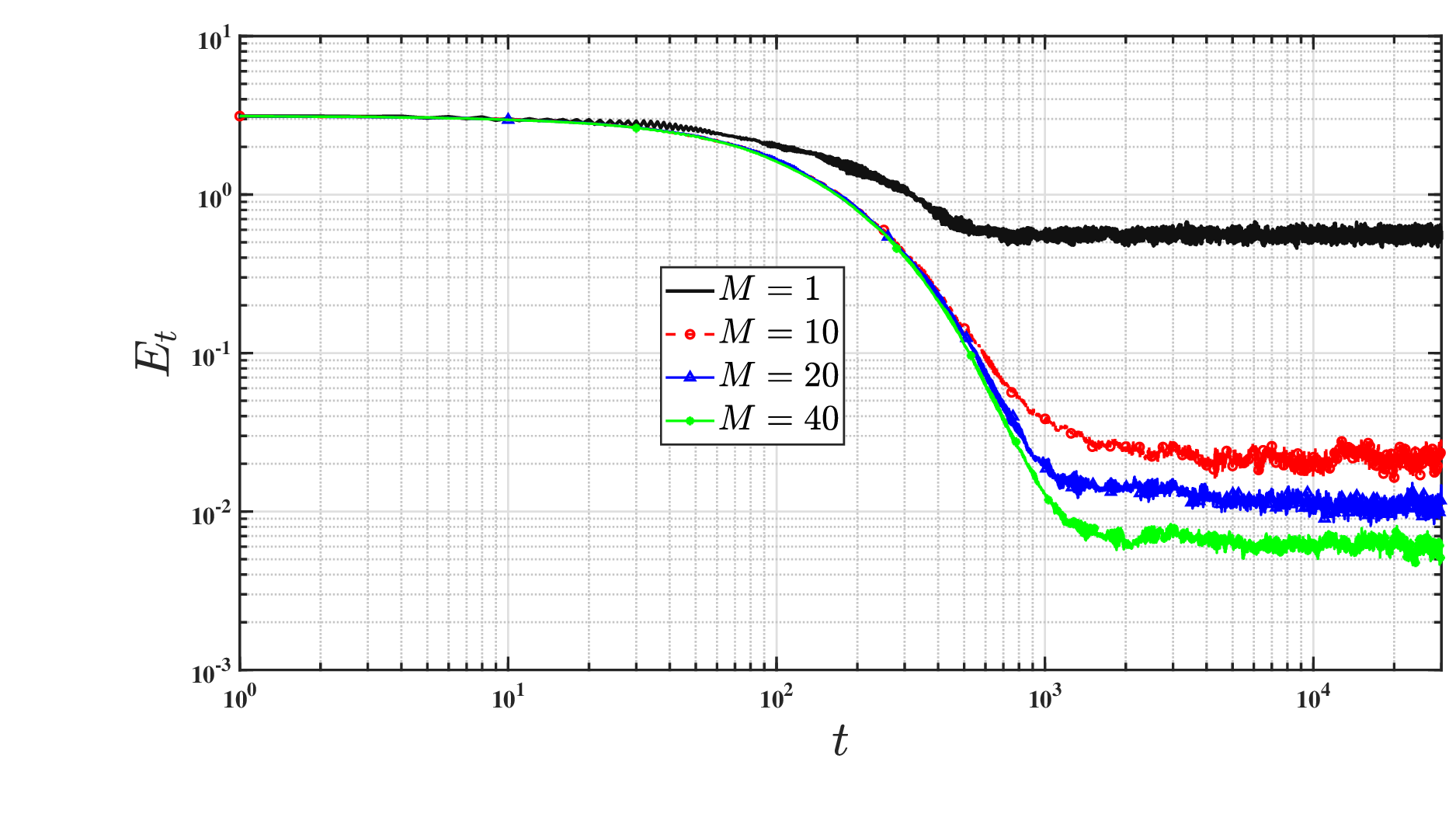}& \hspace{-3mm}  \includegraphics[scale=0.225]{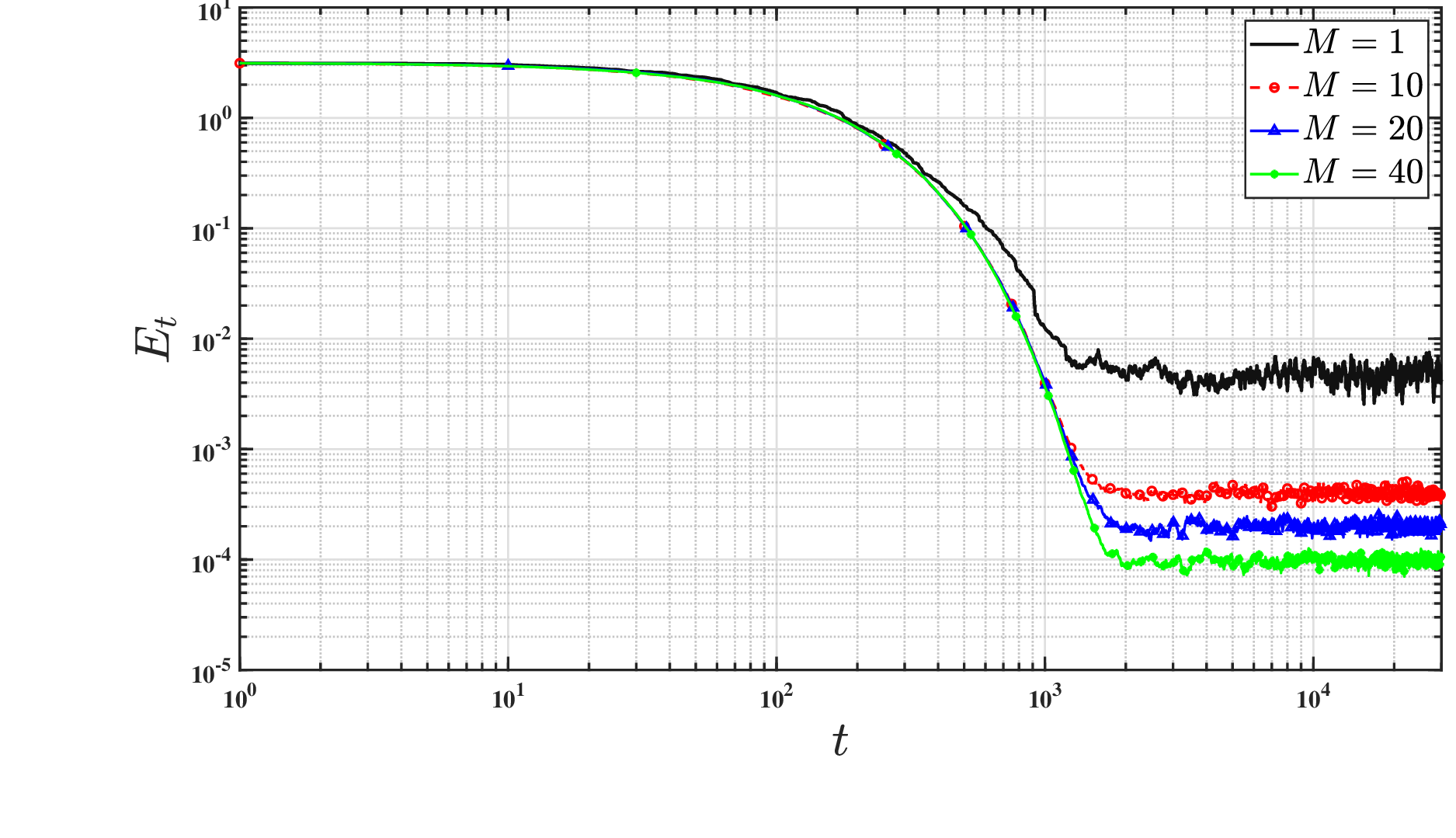}
\end{tabular}
\caption{Plots of the mean-squared error $E_t=\Vert \theta_T-\theta^* \Vert^2_2$ for multi-agent \texttt{EF-TD} (Algorithm~\ref{algo:algo2}). (\textbf{Left}) $\mc{Q}_{\delta}(\cdot)$ is the sign operator. (\textbf{Right}) $\mc{Q}_{\delta}(\cdot)$ is the Top-$k$ operator with $k=2$.} 
\label{fig:sim1}
\end{figure}

\section{Simulations}
\label{sec:Sims}
To corroborate our theory, we construct a MDP with $100$ states, and use a feature matrix $\Phi$ with $K=10$ independent basis vectors. Using this MDP, we generate the state transition matrix $P_{\mu}$ and reward vector $R_{\mu}$ associated with a fixed policy $\mu$. We compare the performance of the vanilla uncompressed \texttt{TD}(0) algorithm with linear function approximation to \texttt{SignTD}(0) with and without error-feedback. We perform $30$ independent trials, and average the errors from each of these trials to report the mean-squared error $E_t=\Vert \theta_T-\theta^* \Vert^2_2$ for each of the algorithms mentioned above. The results of this experiment are reported in Fig.~\ref{fig:sim}. We observe that in the absence of error-feedback, \texttt{SignTD}(0) can make little to no progress towards  $\theta^*$. In contrast, the behavior of \texttt{SignTD}(0) with error-feedback almost exactly matches that of vanilla uncompressed \texttt{TD}(0). Our results align with similar observations made in the context of optimization,  where \texttt{SignSGD} with error-feedback retains almost the same behavior as \texttt{SGD} with no compression~\cite{stichsparse, reddySignSGD}. We provide some additional experiments below.

{$\bullet$ \textbf{Simulation of \texttt{Topk-TD(0)} with Error-Feedback}. We simulate the behavior of \texttt{EF-TD} with the operator $\mc{Q}_{\delta}(\cdot)$ chosen to be a top-$k$ operator. We consider the same MDP as above, with the rewards for this experiment chosen from the interval $\begin{bmatrix}0 & 1\end{bmatrix}$. The size of the parameter vector $K$ is $50$, and the discount factor $\gamma$ is set to be $0.5$. We vary the level of distortion introduced by $\mc{Q}_{\delta}(\cdot)$ by changing the number of components $k$ transmitted. Note that $\delta=K/k$. As one might expect, increasing the distortion $\delta$ by transmitting fewer components impacts the exponential decay rate; this is clearly reflected in Fig.~\ref{fig:topk}.} 

\textbf{Simulation of Multi-Agent \texttt{EF-TD}}. To simulate the behavior of multi-agent \texttt{EF-TD} (Algorithm~\ref{algo:algo2}), we consider the same MDP on 100 states as before with rewards in $[0 \hspace{1mm} 1]$. The dimension $K$ of the parameter vector is set to $10$ and the discount factor $\gamma$ to $0.3$. We consider two cases: one where $\mc{Q}_{\delta}(\cdot)$ is the sign operator, and another where it is a top-$2$ operator, i.e., only two components are transmitted by each agent at every time-step. We report our findings in Fig.~\ref{fig:sim1}, where we vary the number of agents $M$, and observe that for both the sign- and top-$k$ operator experiments, the residual mean-squared error goes down as we scale up $M$. Since the residual error is essentially an indicator of the noise variance, our plots display a clear effect of variance reduction by increasing the number of agents. Our observations thus align with Theorem~\ref{thm:multi-agent}.

\section{Conclusion and Future Work}
We contributed to the development of a robustness theory for iterative RL algorithms subject to general compression schemes. In particular, we proposed and analyzed \texttt{EF-TD} - a compressed version of the classical \texttt{TD}(0) algorithm coupled with error-compensation. We then significantly generalized our analysis to nonlinear stochastic approximation and multi-agent settings. Concretely, our work conveys the following key messages: (i) compressed TD learning algorithms with error-feedback can be just as robust as their optimization counterparts; (ii) the popular error-feedback mechanism extends gracefully beyond the static optimization problems it has been explored for thus far; and (iii) linear convergence speedups in multi-agent TD learning can be achieved with very little communication. Our work opens up several research directions. Studying alternate quantization schemes for RL and exploring other RL algorithms (beyond TD and Q-learning) are immediate next steps. {A more open-ended question is the following. \texttt{SignSGD} with momentum~\cite{bernstein} is known to exhibit convergence behavior very similar to that of adaptive optimization algorithms such as \texttt{ADAM}~\cite{balles}, explaining their use in fast training of deep neural networks. In a similar spirit, can we make connections between \texttt{SignTD} and adaptive RL algorithms? This is an interesting question to explore since its resolution can potentially lead to faster RL algorithms.} 

\bibliography{refs}
\bibliographystyle{unsrt}
\clearpage

\tableofcontents

\appendix
\newpage
\section{Preliminary Results and Facts}
\label{app:basic}
In this section, we will compile and derive some preliminary results that will play a key role in our subsequent analysis. In what follows, unless otherwise stated, we will use $\Vert \cdot \Vert$ to refer to the standard Euclidean norm. The next three results are from~\cite{bhandari_finite}.

\begin{lemma} 
\label{lemma:norm}
For all $\theta_1, \theta_2 \in \mathbb{R}^{K}$, we have:
$$
\sqrt{\omega} \Vert \theta_1 - \theta_2 \Vert \leq \Vert \hat{V}_{\theta_1} - \hat{V}_{\theta_2} \Vert_D \leq \Vert \theta_1 - \theta_2 \Vert.
$$
\end{lemma}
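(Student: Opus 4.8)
The plan is to reduce everything to a quadratic form in $\theta_1-\theta_2$ and then invoke eigenvalue bounds on the matrix $\Sigma=\Phi^\top D \Phi$. First I would write $u \triangleq \theta_1 - \theta_2$ and use the fact that $\hat{V}_{\theta_1}-\hat{V}_{\theta_2} = \Phi\theta_1 - \Phi\theta_2 = \Phi u$. Since $\langle \cdot,\cdot\rangle_D$ is the inner product induced by the diagonal matrix $D$, we have the identity
\begin{equation*}
\Vert \hat{V}_{\theta_1}-\hat{V}_{\theta_2} \Vert_D^2 = (\Phi u)^\top D (\Phi u) = u^\top \Sigma u,
\end{equation*}
so the problem becomes: show $\omega \Vert u \Vert^2 \le u^\top \Sigma u \le \Vert u\Vert^2$ for all $u$.

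The lower bound is immediate: $\Sigma$ is symmetric positive definite with smallest eigenvalue $\omega>0$ (as noted in the setup, since $\Phi$ is full column rank and $D$ has strictly positive diagonal entries on the support of $\pi$), so $u^\top \Sigma u \ge \lambda_{\min}(\Sigma)\Vert u\Vert^2 = \omega \Vert u\Vert^2$, and taking square roots gives the left inequality. For the upper bound I would use the feature-normalization assumption. Writing $\Sigma = \Phi^\top D \Phi = \sum_{s\in\mc{S}} \pi(s)\,\phi(s)\phi(s)^\top$ (where I use that $D=\mathrm{diag}(\pi)$), for any $u$ we get $u^\top \Sigma u = \sum_{s} \pi(s) \langle \phi(s),u\rangle^2 \le \sum_s \pi(s)\Vert\phi(s)\Vert^2\Vert u\Vert^2 \le \Vert u\Vert^2$, using Cauchy--Schwarz and then $\Vert\phi(s)\Vert_2^2 \le 1$ together with $\sum_s \pi(s)=1$. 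Taking square roots yields the right inequality.

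There is no real obstacle here; the only thing to be careful about is making explicit that the upper constant of $1$ comes precisely from the normalization $\Vert\phi(s)\Vert_2^2\le 1$ (equivalently $\lambda_{\max}(\Sigma)\le 1$), and that the lower constant is exactly $\omega=\lambda_{\min}(\Sigma)$. The full details, which amount to the two-line eigenvalue argument above, can be found in \cite{bhandari_finite}.
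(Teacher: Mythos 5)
Your proof is correct: writing $\hat{V}_{\theta_1}-\hat{V}_{\theta_2}=\Phi(\theta_1-\theta_2)$ and reducing to the quadratic form $u^\top \Sigma u$ with $\omega=\lambda_{\min}(\Sigma)$ and $\lambda_{\max}(\Sigma)\le 1$ (from $\Vert\phi(s)\Vert_2^2\le 1$ and $\sum_s\pi(s)=1$) is exactly the intended argument. The paper itself states this lemma without proof, importing it from \cite{bhandari_finite}, and the same two facts you rely on are precisely the ones the paper invokes elsewhere (e.g., in the proof of its Lipschitz lemma for $\bar{g}$), so there is nothing to add.
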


We remind the reader here that in the above result, $\omega$ is the smallest eigenvalue of the matrix $\Sigma = \Phi^\top D \Phi$. Before stating the next result, we recall the definition of the steady-state TD update direction: for a fixed $\theta\in\mathbb{R}^K$, let 
 $$\bar{g}(\theta) \triangleq \mathbb{E}_{s_t \sim  \pi,  s_{t+1} \sim P_{\mu}(\cdot| s_t) }\left[g(X_t,\theta)\right].$$

The next lemma provides intuition as to why the expected steady-state \texttt{TD}(0) update direction $\bar{g}(\theta)$ acts like a ``pseudo-gradient", driving the \texttt{TD}(0) iterates towards the minimizer $\theta^*$ of the projected Bellman equation. 

\begin{lemma}
\label{lemma:convex}
For any $\theta \in \mathbb{R}^K$, the following holds:
$$ \langle \theta^* - \theta, \bar{g}(\theta) \rangle \geq (1-\gamma) \Vert \hat{V}_{\theta^*} - \hat{V}_{\theta} \Vert^2_D. $$ 
\end{lemma}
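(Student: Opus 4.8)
The plan is to convert the inner product $\langle \theta^* - \theta, \bar{g}(\theta)\rangle$ into a statement about value functions in the weighted norm $\Vert\cdot\Vert_D$, and then lean on two facts: the fixed-point characterization $\Pi_D \mc{T}_\mu \hat{V}_{\theta^*} = \hat{V}_{\theta^*}$, and the $\gamma$-contractivity of $\mc{T}_\mu$ with respect to $\Vert\cdot\Vert_D$.

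First I would rewrite the mean-path direction. Taking the expectation in the definition of $g(X_t,\theta)$ over $s_t \sim \pi$ and $s_{t+1} \sim P(s_t,\cdot)$, the next-state term collapses into the Bellman operator, yielding $\bar{g}(\theta) = \Phi^{\top} D\,(\mc{T}_\mu \hat{V}_\theta - \hat{V}_\theta)$. Hence
\[
\langle \theta^* - \theta, \bar{g}(\theta)\rangle = \langle \Phi(\theta^*-\theta),\, D(\mc{T}_\mu \hat{V}_\theta - \hat{V}_\theta)\rangle = \langle \hat{V}_{\theta^*} - \hat{V}_\theta,\; \mc{T}_\mu \hat{V}_\theta - \hat{V}_\theta \rangle_D .
\]
Since $\hat{V}_{\theta^*} - \hat{V}_\theta$ lies in the column span of $\Phi$, $\Pi_D$ is the $\langle\cdot,\cdot\rangle_D$-orthogonal projection onto that span, and $\Pi_D \hat{V}_\theta = \hat{V}_\theta$, the right-hand side equals $\langle \hat{V}_{\theta^*} - \hat{V}_\theta,\; \Pi_D\mc{T}_\mu \hat{V}_\theta - \hat{V}_\theta \rangle_D$.

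Next I would add and subtract $\hat{V}_{\theta^*}$ in the second argument and use $\hat{V}_{\theta^*} = \Pi_D \mc{T}_\mu \hat{V}_{\theta^*}$ to get
\[
\langle \hat{V}_{\theta^*} - \hat{V}_\theta,\; \Pi_D\mc{T}_\mu \hat{V}_\theta - \hat{V}_\theta \rangle_D = \Vert \hat{V}_{\theta^*} - \hat{V}_\theta \Vert_D^2 + \langle \hat{V}_{\theta^*} - \hat{V}_\theta,\; \Pi_D(\mc{T}_\mu \hat{V}_\theta - \mc{T}_\mu \hat{V}_{\theta^*}) \rangle_D .
\]
I would then lower-bound the cross term by $-\Vert \hat{V}_{\theta^*} - \hat{V}_\theta \Vert_D \,\Vert \Pi_D(\mc{T}_\mu \hat{V}_\theta - \mc{T}_\mu \hat{V}_{\theta^*}) \Vert_D$ via Cauchy--Schwarz, use that $\Pi_D$ is a non-expansion in $\Vert\cdot\Vert_D$, and invoke the contraction property $\Vert \mc{T}_\mu u - \mc{T}_\mu v\Vert_D \le \gamma \Vert u - v\Vert_D$ so that $\Vert \Pi_D(\mc{T}_\mu \hat{V}_\theta - \mc{T}_\mu \hat{V}_{\theta^*})\Vert_D \le \gamma\Vert \hat{V}_\theta - \hat{V}_{\theta^*}\Vert_D$. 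Substituting back gives $\langle \theta^*-\theta,\bar{g}(\theta)\rangle \ge (1-\gamma)\Vert \hat{V}_{\theta^*}-\hat{V}_\theta\Vert_D^2$, as claimed.

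The only genuinely nontrivial ingredient is the $\gamma$-contractivity of $\mc{T}_\mu$ in the $\pi$-weighted norm; this is a classical fact that follows from the stationarity of $\pi$ under $P$ together with Jensen's inequality, and I would simply cite it (e.g., \cite{tsitsiklisroy}). Everything else is bookkeeping with the orthogonal projection $\Pi_D$; the one spot to be careful about is verifying that the span-membership of $\hat{V}_{\theta^*} - \hat{V}_\theta$ genuinely licenses inserting $\Pi_D$ in first the full Bellman residual and then the operator difference, so I do not expect real difficulty beyond that bookkeeping.
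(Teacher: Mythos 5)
Your proof is correct and is essentially the standard argument for this lemma, which the paper itself does not prove but imports from \cite{bhandari_finite}: write $\bar{g}(\theta)=\Phi^{\top}D(\mc{T}_\mu\hat{V}_\theta-\hat{V}_\theta)$, pass to the $D$-inner product, use the projected Bellman fixed point $\hat{V}_{\theta^*}=\Pi_D\mc{T}_\mu\hat{V}_{\theta^*}$ together with $D$-orthogonality of the projection residual, and finish with Cauchy--Schwarz and the $\gamma$-contractivity of $\mc{T}_\mu$ in $\Vert\cdot\Vert_D$. All the span-membership and projection steps you flag are legitimate, so there is nothing to fix.
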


We will have occasion to use the following upper bound on the norm of the steady-state \texttt{TD}(0) update direction. 

\begin{lemma}
\label{lemma:gradbnd}
For any $\theta \in \mathbb{R}^K$, the following holds:
$$ \Vert \bar{g}(\theta) \Vert \leq 2 \Vert \hat{V}_{\theta^*} - \hat{V}_{\theta} \Vert_D. 
$$ 
\end{lemma}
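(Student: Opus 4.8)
The plan is to first rewrite $\bar{g}(\theta)$ in a form that exposes its dependence on the value-function error $\hat{V}_{\theta^*}-\hat{V}_\theta$, and then dispatch the bound by peeling off one elementary norm inequality at a time. Taking the conditional expectation of $g(X_t,\theta)$ given $s_t$ (which replaces $r_t$ by $R(s_t)$ and $\hat{V}_\theta(s_{t+1})$ by $(P\hat{V}_\theta)(s_t)$) and then averaging over $s_t\sim\pi$, one obtains from the definitions the identity
\[
\bar{g}(\theta) = \Phi^{\top} D\left(\mathcal{T}_\mu \hat{V}_\theta - \hat{V}_\theta\right),
\]
where $D=\mathrm{diag}(\pi)$. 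Since $\theta^*$ solves the projected Bellman equation $\Pi_D\mathcal{T}_\mu\hat{V}_{\theta^*}=\hat{V}_{\theta^*}$, the first-order condition $\Phi^{\top}D(\mathcal{T}_\mu\hat{V}_{\theta^*}-\hat{V}_{\theta^*})=0$ holds, i.e. $\bar{g}(\theta^*)=0$. Subtracting this from the previous display and using $\mathcal{T}_\mu\hat{V}_\theta-\mathcal{T}_\mu\hat{V}_{\theta^*}=\gamma P(\hat{V}_\theta-\hat{V}_{\theta^*})$ yields the clean representation
\[
\bar{g}(\theta) = \Phi^{\top} D\,(\gamma P - I)\,(\hat{V}_\theta - \hat{V}_{\theta^*}).
\]

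Next I would invoke two standard facts. First, for every $v\in\mathbb{R}^n$ we have $\Vert\Phi^{\top}Dv\Vert \le \Vert v\Vert_D$: writing $\Phi^{\top}Dv=\sum_s \pi(s)v(s)\phi(s)$, the triangle inequality together with feature normalization $\Vert\phi(s)\Vert\le 1$ gives $\Vert\Phi^{\top}Dv\Vert\le\sum_s\pi(s)|v(s)|$, and Cauchy--Schwarz with $\sum_s\pi(s)=1$ gives $\sum_s\pi(s)|v(s)|\le\sqrt{\sum_s\pi(s)v(s)^2}=\Vert v\Vert_D$. Second, $P$ is a non-expansion in the $\Vert\cdot\Vert_D$ norm, $\Vert Pw\Vert_D\le\Vert w\Vert_D$ for all $w\in\mathbb{R}^n$; this follows from Jensen's inequality applied row-wise, $(Pw)(s)^2\le\sum_{s'}P(s,s')w(s')^2$, summed against $\pi$ and using the stationarity identity $\pi^{\top}P=\pi^{\top}$.

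Combining, with $w=\hat{V}_\theta-\hat{V}_{\theta^*}$, the first fact gives $\Vert\bar{g}(\theta)\Vert\le\Vert(\gamma P-I)w\Vert_D$, and then the triangle inequality and the second fact give $\Vert(\gamma P-I)w\Vert_D\le\gamma\Vert Pw\Vert_D+\Vert w\Vert_D\le(1+\gamma)\Vert w\Vert_D\le 2\Vert\hat{V}_{\theta^*}-\hat{V}_\theta\Vert_D$, using $\gamma\in(0,1)$ in the last step; this is the claim. I do not anticipate a genuine obstacle here: the only bookkeeping is the passage between the Euclidean norm on the parameter side and the $D$-weighted norm on the value-function side, and the $\Vert\cdot\Vert_D$-non-expansiveness of $P$, both of which are routine. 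Indeed, the representation $\bar{g}(\theta)-\bar{g}(\theta^*)=\Phi^{\top}D(\gamma P-I)(\hat{V}_\theta-\hat{V}_{\theta^*})$ together with these two norm facts is precisely the computation underlying Lemmas~\ref{lemma:norm} and~\ref{lemma:convex}, so this bound can equally well be read off from the same analysis in~\cite{bhandari_finite}.
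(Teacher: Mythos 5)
Your proof is correct. The paper does not prove Lemma~\ref{lemma:gradbnd} itself --- it imports it from~\cite{bhandari_finite} --- and your argument is exactly the standard derivation underlying that reference: the representation $\bar{g}(\theta)=\Phi^{\top}D(\gamma P-I)(\hat{V}_{\theta}-\hat{V}_{\theta^*})$ (using $\bar{g}(\theta^*)=0$ from the projected Bellman equation), the bound $\Vert\Phi^{\top}Dv\Vert\le\Vert v\Vert_D$ from feature normalization and Cauchy--Schwarz, and the $\Vert\cdot\Vert_D$-non-expansiveness of $P$ via Jensen and stationarity, giving the factor $1+\gamma\le 2$.
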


The following bound on the norm of the random \texttt{TD}(0) update direction will also be invoked several times in our analysis~\cite{srikant}. 

\begin{lemma} 
\label{lemma:gradbnd2} 
For any $\theta \in \mathbb{R}^K$, the following holds $\forall t \geq 0$:
\begin{equation}\label{eqn:noisyTDbnd}
    \Vert g_t(\theta) \Vert \leq 2 \Vert \theta \Vert + 2 \bar{r} \leq 2\Vert \theta \Vert + 2 \sigma,
\end{equation}
where $\sigma=\max\{1,\bar{r},\Vert \theta^* \Vert\}.$
\end{lemma}

As mentioned earlier in Section \ref{sec:sparseTD}, compressed SGD with error-feedback essentially acts like delayed SGD. Thus, for smooth functions where the gradients do not change much, the effect of the delay can be effectively controlled. Unlike this optimization setting, we do not have a fixed objective function at our disposal. So how do we leverage any kind of smoothness property? Fortunately for us, the steady-state \texttt{TD}(0) update direction does satisfy a Lipschitz property; we prove this fact below. 

\begin{lemma} (\textbf{Lipschitz property of steady-state \texttt{TD}(0) update direction}) 
\label{lemma:smoothness}
For all $\theta_1, \theta_2 \in \mathbb{R}^K$, we have:
$$ \Vert \bar{g}(\theta_1) - \bar{g}(\theta_2) \Vert \leq \Vert \theta_1 - \theta_2 \Vert. 
$$
\end{lemma}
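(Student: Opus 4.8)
The plan is to exploit that the steady-state direction is \emph{affine} in the parameter. Writing $g(X,\theta)=A(X)\theta+b(X)$ and taking expectations over $X\sim\pi$ gives $\bar g(\theta)=A\theta+b$ with $A=\mathbb{E}_\pi[A(X)]$, so $\bar g(\theta_1)-\bar g(\theta_2)=A(\theta_1-\theta_2)$ and the entire claim reduces to bounding the Euclidean operator norm of $A$ by one. A short computation identifies $A=\Phi^\top D(\gamma P-I)\Phi$. Using $\hat V_\theta=\Phi\theta$ and the fact that the Bellman operator $\mathcal{T}_\mu$ in Eq.~\eqref{eqn:Bellman_op} is affine (so that $\mathcal{T}_\mu\hat V_{\theta_1}-\mathcal{T}_\mu\hat V_{\theta_2}=\gamma P u$, where $u\triangleq\hat V_{\theta_1}-\hat V_{\theta_2}$), I would rewrite the increment as $\bar g(\theta_1)-\bar g(\theta_2)=\Phi^\top D(\gamma P-I)u$, and carry out the bound in the weighted norm $\Vert\cdot\Vert_D$, since $\mathcal{T}_\mu$ contracts precisely in $\Vert\cdot\Vert_D$.

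The first step peels off the outer $\Phi^\top D$. For any $z\in\mathbb{R}^n$ one has $\Vert\Phi^\top D z\Vert_2=\sup_{\Vert x\Vert_2\le 1}\langle x,\Phi^\top D z\rangle=\sup_{\Vert x\Vert_2\le 1}\langle\hat V_x,z\rangle_D\le\sup_{\Vert x\Vert_2\le 1}\Vert\hat V_x\Vert_D\,\Vert z\Vert_D\le\Vert z\Vert_D$, where the final inequality is the upper bound $\Vert\hat V_x\Vert_D\le\Vert x\Vert_2$ of Lemma~\ref{lemma:norm} (i.e. feature normalization). Applying this with $z=(\gamma P-I)u$ yields $\Vert\bar g(\theta_1)-\bar g(\theta_2)\Vert_2\le\Vert(\gamma P-I)u\Vert_D$. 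The second step reduces the outermost factor back to the parameter: since $u=\hat V_{\theta_1}-\hat V_{\theta_2}$ lies in the column space of $\Phi$, Lemma~\ref{lemma:norm} again gives $\Vert u\Vert_D\le\Vert\theta_1-\theta_2\Vert_2$. Thus it remains to establish the key inequality $\Vert(\gamma P-I)u\Vert_D\le\Vert u\Vert_D$, for which the natural tool is the non-expansiveness of a stochastic matrix in the stationary norm, $\Vert Pu\Vert_D\le\Vert u\Vert_D$, a one-line consequence of Jensen's inequality together with the stationarity identity $\pi^\top P=\pi^\top$.

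The main obstacle is precisely this key step, and specifically matching the displayed constant of \emph{exactly one}. A naive triangle inequality gives only $\Vert(\gamma P-I)u\Vert_D\le\gamma\Vert Pu\Vert_D+\Vert u\Vert_D\le(1+\gamma)\Vert u\Vert_D$, which overshoots by the factor $1+\gamma$; moreover this loss is genuine in the worst case of a near-periodic (though still aperiodic) chain, where $u$ is close to an eigenvector of $P$ with eigenvalue near $-1$ and the mode $\gamma P-I$ acts like $-(1+\gamma)$. Consequently, reaching the stated constant cannot proceed through the triangle inequality and must instead retain the cancellation that $\Phi^\top D(\gamma P-I)\Phi$ hides. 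The route I would scrutinize is to keep the $D$-orthogonal projection $\Pi_D$ onto the feature space and expand the square $\Vert\gamma\Pi_D Pu-u\Vert_D^2=\gamma^2\Vert\Pi_D Pu\Vert_D^2-2\gamma\langle Pu,u\rangle_D+\Vert u\Vert_D^2$, aiming to absorb the leading term into the cross term via the negative-definiteness of the steady-state map (the same monotonicity underlying Lemma~\ref{lemma:convex}). Verifying whether this cross-term cancellation is strong enough to collapse the right-hand side to $\Vert u\Vert_D^2$ — rather than leaving the residual $\gamma$-dependence visible in the near-periodic case — is the crux of the argument and the step on which the sharp constant hinges.
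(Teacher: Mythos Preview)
Your decomposition and the paper's are the same up to the point where one must bound the operator norm of $\bar A=\Phi^\top D(\gamma P-I)\Phi$. The paper sidesteps your step-by-step peeling by invoking, as a black box from~\cite{bhandari_finite}, the matrix inequality $\bar A^\top\bar A\preceq\Sigma$; combined with $\lambda_{\max}(\Sigma)\le 1$ (feature normalization) this yields the constant $1$ in one line via Rayleigh--Ritz.

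Your hesitation about that constant is well-founded, though: no cross-term cancellation will recover it, because the constant $1$ is simply not attainable. Take $n=2$, $K=1$, $\Phi=(1,-1)^\top$, $D=\tfrac12 I$, and let $P$ be a small aperiodic perturbation of the $2\times2$ swap matrix. Then $\Phi$ is (close to) an eigenvector of $P$ with eigenvalue near $-1$, the projection $\Pi_D$ acts as the identity on the one-dimensional $\mathrm{range}(\Phi)$, and a direct computation gives $\bar A\approx -(1+\gamma)$ while $\Sigma=1$. Hence $\bar A^\top\bar A\approx(1+\gamma)^2\not\preceq\Sigma$, so both the lemma's stated constant and the cited matrix inequality fail on this example. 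What \emph{does} hold is $\bar A^\top\bar A\preceq 4\Sigma$ (equivalently, Lipschitz constant $2$), which is exactly what Lemma~\ref{lemma:gradbnd} already delivers; your ``naive'' triangle-inequality bound $\|(\gamma P-I)u\|_D\le(1+\gamma)\|u\|_D$ in fact achieves the sharp constant $1+\gamma$. So the route you were about to abandon is the correct one. The discrepancy is harmless for the paper's purposes: the lemma is only used to control $\|\bar g(\theta_t)-\bar g(\tilde\theta_t)\|^2$ in Lemma~\ref{lemma:nnmemory}, and replacing $1$ by $4$ there merely inflates a few absolute constants downstream.
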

\begin{proof}
We will make use of the explicit affine form of $\bar{g}(\theta)$ shown below~\cite{tsitsiklisroy}:
\begin{equation}
    \bar{g}(\theta)=\Phi^{\top} D \left(\mc{T}_{\mu} \Phi \theta - \Phi \theta\right) = \bar{A}\theta-\bar{b}, \hspace{1mm} \textrm{where} \hspace{1mm} \bar{A} = \Phi^{\top} D \left(\gamma P_{\mu} - I \right) \Phi, \hspace{1mm} \textrm{and} \hspace{1mm} \bar{b} = - \Phi^{\top} D R_{\mu}.
    \end{equation}
In~\cite{bhandari_finite}, it was shown that $\bar{A}^{\top} \bar{A} \preceq \Sigma$, where $\Sigma = \Phi^{\top} D \Phi$. Furthermore, due to feature normalization, it is easy to see that $\lambda_{\max}(\Sigma) \leq 1$. Using these properties, we have:
\begin{equation}
    \begin{aligned}
    \Vert \bar{g}(\theta_1) - \bar{g}(\theta_2) \Vert^2 &= \left(\theta_1 - \theta_2\right)^{\top} \bar{A}^{\top} \bar{A} \left(\theta_1 - \theta_2\right) \\
    & \leq \lambda_{\max}(\bar{A}^{\top} \bar{A}) \Vert \theta_1-\theta_2 \Vert^2 \\
    & \leq \lambda_{\max}(\Sigma) \Vert \theta_1-\theta_2 \Vert^2 \\
    &\leq \Vert \theta_1-\theta_2 \Vert^2,
    \end{aligned}
\end{equation}
which leads to the desired claim. For the first inequality above, we used the Rayleigh-Ritz theorem~\cite[Theorem 4.2.2]{horn}. 
\end{proof}

An immediate consequence of the above result, in tandem with the fact that $\bar{g}(\theta^*)=0$, is the following upper bound on the norm of the steady-state \texttt{TD}(0) update direction: $\forall \theta \in \mathbb{R}^K$, we have
\begin{equation}
\Vert \bar{g}(\theta) \Vert = \Vert \bar{g}(\theta) - \bar{g}(\theta^*) \Vert \leq \Vert \theta - \theta^* \Vert \leq \Vert \theta \Vert + \sigma.
\label{eqn:gradbnd3}
\end{equation}

Essentially, this shows that the bound in Lemma~\ref{lemma:gradbnd2} applies to the steady-state \texttt{TD}(0) update direction as well. Next, we prove an analog of the Lipschitz property in Lemma \ref{lemma:smoothness} for the random \texttt{TD}(0) update direction. 

\begin{lemma} (\textbf{Lipschitz property of the noisy \texttt{TD}(0) update direction}) 
\label{lemma:rndsmoothness}
For all $\theta_1, \theta_2 \in \mathbb{R}^K$, we have:
$$ \Vert {g}_t(\theta_1) - {g}_t(\theta_2) \Vert \leq 2 \Vert \theta_1 - \theta_2 \Vert.
$$
\end{lemma}
\begin{proof}
As in the proof of Lemma \ref{lemma:smoothness}, we will use the fact that the \texttt{TD}(0) update direction is an affine function of the parameter $\theta$. In particular, we have 
$$ g_t(\theta)= A(X_t) \theta - b(X_t), \hspace{2mm} \textrm{where} \hspace{2mm} A(X_t) = \gamma \Phi(s_t) \Phi^{\top}(s_{t+1}) - \Phi(s_t) \Phi^{\top}(s_t), \hspace{2mm} \textrm{and} \hspace{2mm} b_t = -\phi(s_t) r_t.$$
Thus, we have
\begin{equation}
    \begin{aligned}
    \Vert {g}_t(\theta_1) - {g}_t(\theta_2) \Vert &= \Vert A(X_t) \left(\theta_1 - \theta_2\right) \Vert \\
    & \leq \Vert A(X_t) \Vert \Vert \theta_1 - \theta_2 \Vert \\
    & \leq \left(\gamma \Vert \Phi(s_t) \Vert \Vert \Phi(s_{t+1}) \Vert + \Vert \Phi(s_t) \Vert^2 \right) \Vert \theta_1 - \theta_2 \Vert \\
    &\leq 2 \Vert \theta_1 - \theta_2 \Vert,
    \end{aligned}
\end{equation}
where for the last step we used that $\Vert \Phi (s) \Vert \leq 1, \forall s \in \mc{S}$. 
\end{proof}

In addition to the above results, we will make use of the following facts.

\begin{itemize}
    \item Given any two vectors $x,y\in\mathbb{R}^K$, the following holds for any $\eta >0$:
\begin{equation}
    {\Vert x+y \Vert}^2 \leq (1+\eta){\Vert x \Vert}^2 + \left(1+\frac{1}{\eta}\right){\Vert y \Vert}^2.
\label{eqn:rel_triangle}
\end{equation}
 \item Given $m$ vectors $x_1,\ldots,x_m\in\mathbb{R}^K$, the following is a simple application of Jensen's inequality:
 \begin{equation}
     \norm[\bigg]{ \sum\limits_{i=1}^{m} x_i}^2 \leq m \sum\limits_{i=1}^{m} {\Vert x_i \Vert}^2.
\label{eqn:Jensens}
 \end{equation}
\end{itemize}
\newpage
\section{Building Intuition: Analysis of Mean-Path \texttt{EF-TD}}
\label{app:proofnoiseless}
{Since the dynamics of \texttt{EF-TD} are quite complex, and have not been studied before, we provide an analysis in this section for the simplest setting in our paper: the noiseless steady-state version of \texttt{EF-TD} where 
$g_t(\theta_t)$ in line 4 of Algorithm~\ref{algo:algo1} is replaced by $\bar{g}(\theta_t)$.  We refer to this variant as the mean-path version of \texttt{EF-TD}, and compile its governing equations below: 

\begin{equation}
    \begin{aligned}
    h_t &= \mc{Q}_{\delta}\left(e_{t-1}+\bar{g}(\theta_t)\right),\\
    \theta_{t+1} &= \theta_t + \alpha h_t, \\
    e_t &= e_{t-1}+\bar{g}(\theta_t)-h_t,
\label{eqn:nngoverneqsmp}   
    \end{aligned}
\end{equation}
where the above equations hold for $t=0,1, \ldots$, with $\theta_0 \in \mathbb{R}^K$, and $e_{-1}=0$.  
The goal of this section is to provide an analysis of mean-path \texttt{EF-TD}, and, in the process, outline some of the key ideas that will aid us in the more involved settings to follow. We have the following result. 

\begin{theorem} 
\label{thm:noiseless}
(\textbf{Noiseless Setting}) There exist universal constants $c, C \geq 1$, such that the iterates generated by the mean-path version of \texttt{EF-TD} with step-size $\alpha=(1-\gamma)/(c\delta)$ satisfy the following after $T$ iterations: 
\begin{equation}
    \Vert \theta_T-\theta^* \Vert^2_2 \leq 2 \left(1-\frac{(1-\gamma)^2 \omega}{C\delta}\right)^T \Vert \theta_0-\theta^* \Vert^2_2.
\label{eqn:noiseless_result}
\end{equation}
\end{theorem}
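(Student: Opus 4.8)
The plan is to set up a coupled Lyapunov recursion in two quantities: the parameter error $\Vert \theta_t - \theta^* \Vert_2^2$ and a suitably scaled version of the memory variable $\Vert e_t \Vert_2^2$, and then show that the combined potential $\Phi_t \triangleq \Vert \theta_t - \theta^* \Vert_2^2 + B \alpha \Vert e_t \Vert_2^2$ contracts geometrically for an appropriate constant $B$ and the stated step-size. The starting point is the standard observation (from \cite{bhandari_finite}) about the mean-path TD operator $\bar{g}(\cdot)$: it satisfies the two key inequalities $\langle \bar{g}(\theta) - \bar{g}(\theta^*), \theta - \theta^* \rangle \leq -(1-\gamma)\omega \Vert \theta - \theta^* \Vert_2^2$ (negative-definiteness / strong monotonicity in the $D$-weighted sense, after using $\bar{g}(\theta^*)=0$) and $\Vert \bar{g}(\theta) - \bar{g}(\theta^*) \Vert_2 \leq L \Vert \theta - \theta^* \Vert_2$ for $L = O(1)$ (Lipschitzness). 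These will be the only properties of $\bar{g}$ I use; everything else is driven by the contraction property \eqref{eqn:compressor} of $\mathcal{Q}_\delta$.

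First I would introduce the ``shifted'' iterate that cleanly exposes the delay structure of error-feedback: define $\tilde{\theta}_t \triangleq \theta_t - \alpha e_{t-1}$. A short computation using lines 5--6 of Algorithm~\ref{algo:algo1} (with $g_t$ replaced by $\bar{g}(\theta_t)$) shows that $\tilde{\theta}_{t+1} = \tilde{\theta}_t + \alpha \bar{g}(\theta_t)$, i.e.\ the shifted sequence runs exact mean-path TD but evaluated at the \emph{un}-shifted point $\theta_t$. Expanding $\Vert \tilde{\theta}_{t+1} - \theta^* \Vert_2^2$ and writing $\theta_t = \tilde{\theta}_t + \alpha e_{t-1}$ inside the inner-product cross term, I get the descent term $-2\alpha(1-\gamma)\omega \Vert \theta_t - \theta^* \Vert_2^2$ plus correction terms of the form $\alpha \langle \bar{g}(\theta_t), e_{t-1} \rangle$ and $\alpha^2 \Vert \bar{g}(\theta_t) \Vert_2^2$; these are handled by Young's inequality and the Lipschitz bound, each producing an $O(\alpha^2)\Vert \theta_t - \theta^* \Vert_2^2$ term (absorbed into the descent since $\alpha = O((1-\gamma)/\delta)$ is small) plus an $O(\alpha) \Vert e_{t-1} \Vert_2^2$ term. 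Separately, I need a contraction for the memory: from $e_t = (e_{t-1} + \bar{g}(\theta_t)) - \mathcal{Q}_\delta(e_{t-1} + \bar{g}(\theta_t))$ and \eqref{eqn:compressor}, $\Vert e_t \Vert_2^2 \leq (1 - 1/\delta)\Vert e_{t-1} + \bar{g}(\theta_t) \Vert_2^2 \leq (1-1/\delta)(1+\beta)\Vert e_{t-1}\Vert_2^2 + (1-1/\delta)(1+1/\beta)\Vert \bar{g}(\theta_t)\Vert_2^2$ for a free parameter $\beta$; choosing $\beta \asymp 1/\delta$ gives $\Vert e_t \Vert_2^2 \leq (1 - 1/(2\delta))\Vert e_{t-1}\Vert_2^2 + O(\delta) L^2 \Vert \theta_t - \theta^* \Vert_2^2$.

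The final step is to combine these two recursions into the potential $\Phi_t$, noting that $\Vert \theta_t - \theta^* \Vert_2^2 \leq 2\Vert \tilde{\theta}_t - \theta^* \Vert_2^2 + 2\alpha^2 \Vert e_{t-1}\Vert_2^2$ so that a contraction in $\Phi_t$ (stated in terms of $\tilde\theta$) converts back to the bound on $\Vert \theta_T - \theta^* \Vert_2^2$ in \eqref{eqn:noiseless_result}, with the factor $2$ out front coming from this conversion and from $e_{-1}=0$. The constant $B$ multiplying $\alpha \Vert e_t \Vert_2^2$ must be chosen large enough that the $O(\alpha)\Vert e_{t-1}\Vert_2^2$ leakage from the parameter recursion is dominated by the $\alpha B/(2\delta)$ contraction of the memory term, yet small enough that the $O(\delta) B \alpha L^2 \Vert \theta_t - \theta^*\Vert_2^2$ feedback from the memory recursion is dominated by the $2\alpha(1-\gamma)\omega$ descent; a choice like $B \asymp \delta/(1-\gamma)$ together with $\alpha = (1-\gamma)/(c\delta)$ for a large enough universal $c$ makes both constraints compatible and yields a per-step contraction factor $1 - \Omega((1-\gamma)^2\omega/\delta)$, matching the claimed $(1 - (1-\gamma)^2\omega/(C\delta))$.

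I expect the main obstacle to be the bookkeeping in the coupled drift: every cross term must be split so that its ``bad'' part lands on the term that is currently contracting with enough room to spare, and getting the $\delta$-dependence right in all three places simultaneously (the $1-1/\delta$ in the compressor bound, the choice of the Young parameter $\beta$, and the relative weight $B$) is delicate — too aggressive a choice of $\alpha$ breaks the memory contraction, too conservative a $B$ lets the memory feedback overwhelm the descent. Isolating the clean identity $\tilde{\theta}_{t+1} = \tilde{\theta}_t + \alpha\bar{g}(\theta_t)$ up front is what makes this tractable, since it reduces everything to perturbation arguments around the already-understood mean-path recursion.
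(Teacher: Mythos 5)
Your overall architecture is the right one and matches the paper's: a perturbed iterate satisfying $\tilde{\theta}_{t+1}=\tilde{\theta}_t+\alpha\bar{g}(\theta_t)$, a coupled Lyapunov function in the iterate error and the memory, the compressor contraction \eqref{eqn:compressor} with a Young split tuned to $\delta$, and a final conversion back to $\theta_T$ costing the factor $2$. (Minor slip: you need $\tilde{\theta}_t=\theta_t+\alpha e_{t-1}$, not $\theta_t-\alpha e_{t-1}$; with the minus sign the identity you assert does not hold.) However, the quantitative bookkeeping as you have set it up does not close, for two concrete reasons.

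First, your weight $B\alpha$ with $B\asymp\delta/(1-\gamma)$ is far too large. With it, the feedback from the memory recursion into the parameter error is $B\alpha\cdot O(\delta)\Vert\theta_t-\theta^*\Vert^2=O(\delta/c)\Vert\theta_t-\theta^*\Vert^2$, while the available descent is only $2\alpha(1-\gamma)\omega\Vert\theta_t-\theta^*\Vert^2=O\left((1-\gamma)^2\omega/(c\delta)\right)\Vert\theta_t-\theta^*\Vert^2$; the former exceeds the latter by a factor of order $\delta^2/((1-\gamma)^2\omega)\geq 1$, so your two constraints on $B$ are incompatible rather than compatible. The reason the paper can use the much smaller weight $\alpha^2$ on $\Vert e_{t-1}\Vert^2$ is that the leakage from the parameter recursion is not $O(\alpha)\Vert e_{t-1}\Vert^2$ as you claim but $O(\alpha^3/(1-\gamma))\Vert e_{t-1}\Vert^2$: the cross term is $2\alpha\langle\tilde{\theta}_t-\theta_t,\bar{g}(\theta_t)\rangle$ with $\tilde{\theta}_t-\theta_t=\alpha e_{t-1}$ carrying an explicit factor of $\alpha$, and the Young parameter should be chosen as $\eta\asymp 1/(1-\gamma)$ so that the companion term $\tfrac{\alpha}{\eta}\Vert\bar{g}(\theta_t)\Vert^2$ is absorbed by the descent. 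You lost two powers of $\alpha$ here, and that is exactly what forces your $B$ to be too big. Second, carrying out the descent and the absorption of $\Vert\bar{g}(\theta_t)\Vert^2$ in the Euclidean parameter norm, via $\langle\theta-\theta^*,\bar{g}(\theta)\rangle\leq-(1-\gamma)\omega\Vert\theta-\theta^*\Vert^2$ and $\Vert\bar{g}(\theta)\Vert\leq L\Vert\theta-\theta^*\Vert$, forces the step-size condition $\alpha\lesssim(1-\gamma)\omega/L^2$; this contradicts the $\omega$-free step-size $\alpha=(1-\gamma)/(c\delta)$ in the statement and would degrade the exponent to $\omega^2$. The paper avoids this by keeping both the descent (Lemma \ref{lemma:convex}) and the bound on $\Vert\bar{g}(\theta)\Vert$ (Lemma \ref{lemma:gradbnd}) in the value-function norm $\Vert\hat{V}_\theta-\hat{V}_{\theta^*}\Vert_D$, so that the quadratic term is absorbed under $\alpha\leq(1-\gamma)/8$ alone, and $\omega$ enters only once, at the very end, through $\Vert\hat{V}_{\tilde{\theta}_t}-\hat{V}_{\theta^*}\Vert^2_D\geq\omega\Vert\tilde{\theta}_t-\theta^*\Vert^2$. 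Both fixes are needed to obtain the claimed rate $1-(1-\gamma)^2\omega/(C\delta)$ with the claimed step-size.
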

}

\textbf{Discussion.} Theorem \ref{thm:noiseless} reveals linear convergence of the iterates  to $\theta^*$. When $\delta=1$, i.e., when there is no distortion to the TD(0) direction, the linear rate of convergence exactly matches that in \cite{bhandari_finite}. Moreover, when $\delta >1$, the slowdown in the linear rate by a factor of $\delta$ is also exactly consistent with analogous results for SGD with (biased) compression \cite{beznosikov}. Thus, our result captures - in a transparent way - precisely what one could have hoped for.

To proceed with the analysis of mean-path \texttt{EF-TD}, we will make use of the perturbed iterate framework from \cite{mania}. In particular, let us define the perturbed iterate $\tilde{\theta}_t \triangleq \theta_t + \alpha e_{t-1}.$ Using~\eqref{eqn:nngoverneqsmp}, we then obtain:
\begin{equation}
    \begin{aligned}
    \tilde{\theta}_{t+1} &= \theta_{t+1}+\alpha e_t \\
                        &= \theta_t + \alpha h_t + \alpha \left(e_{t-1}+ \bar{g}(\theta_t) - h_t \right)\\
                        &=\tilde{\theta}_t + \alpha \bar{g}(\theta_t).\\                        
    \end{aligned}
    \label{eqn:nnperturbed}
\end{equation}
The final recursion above looks almost like the the standard steady-state \texttt{TD}(0) update direction, other than the fact that $\bar{g}(\theta_t)$ is evaluated at $\theta_t$, and not $\tilde{\theta}_t$. To account for this ``mismatch" introduced by the memory-variable $e_{t-1}$, we will analyze the following composite Lyapunov function: 
\begin{equation}
    \psi_t \triangleq \Vert \tilde{\theta}_t - \theta^* \Vert^2 + \alpha^2 \Vert e_{t-1} \Vert^2.
\label{eqn:nnLyap1}
\end{equation}
Note that the above energy function captures the \textit{joint dynamics} of the perturbed iterate and the memory variable. Our goal is to prove that this energy function decays exponentially over time. To that end, we start by establishing a bound on $\Vert \tilde{\theta}_{t+1} - \theta^* \Vert^2$ in the following lemma. 

\begin{lemma} (\textbf{Bound on Perturbed Iterate})
\label{lemma:nnperturbbnd1}
Suppose the step-size $\alpha$ is chosen such that $\alpha \leq (1-\gamma)/8$. Then, the iterates generated by the mean-path version of \texttt{EF-TD} satisfy:
\begin{equation}
    \Vert \tilde{\theta}_{t+1} - \theta^* \Vert^2 \leq \Vert \tilde{\theta}_{t} - \theta^* \Vert^2 - \frac{\alpha (1-\gamma)}{4} \Vert \hat{V}_{\tilde{\theta}_t} - \hat{V}_{\theta^*} \Vert^2_D + \frac{5 \alpha^3}{(1-\gamma)} \Vert e_{t-1} \Vert^2.
\label{eqn:nnLyapbnd}
\end{equation}
\end{lemma}
\begin{proof}
Subtracting $\theta^*$ from each side of~\eqref{eqn:nnperturbed} and then squaring both sides yields:
\begin{equation}
\begin{aligned}
    \Vert \tilde{\theta}_{t+1} - \theta^* \Vert^2 &= \Vert \tilde{\theta}_{t} - \theta^* \Vert^2 + 2 \alpha \langle \tilde{\theta}_t - \theta^*, \bar{g}(\theta_t) \rangle + \alpha^2 \Vert \bar{g}(\theta_t) \Vert^2 \\
    &= \Vert \tilde{\theta}_{t} - \theta^* \Vert^2 + 2 \alpha \langle {\theta}_t - \theta^*, \bar{g}(\theta_t) \rangle + \alpha^2 \Vert \bar{g}(\theta_t) \Vert^2 + 2 \alpha \langle \tilde{\theta}_t - {\theta}_t, \bar{g}(\theta_t) \rangle\\
    & \overset{(a)} \leq \Vert \tilde{\theta}_{t} - \theta^* \Vert^2 + 2 \alpha \langle {\theta}_t - \theta^*, \bar{g}(\theta_t) \rangle + \alpha \left(\alpha+\frac{1}{\eta}\right) \Vert \bar{g}(\theta_t) \Vert^2 + \alpha \eta \Vert \tilde{\theta}_t - \theta_t \Vert^2 \\
    & \overset{(b)} = \Vert \tilde{\theta}_{t} - \theta^* \Vert^2 + 2 \alpha \langle {\theta}_t - \theta^*, \bar{g}(\theta_t) \rangle + \alpha \left(\alpha+\frac{1}{\eta}\right) \Vert \bar{g}(\theta_t) \Vert^2 + \alpha^3 \eta \Vert e_{t-1} \Vert^2. 
\end{aligned}
\end{equation}
For (a), we used the fact that for any two vectors $x, y \in \mathbb{R}^K$, the following holds for all $\eta > 0$,
$$ \langle x, y \rangle \leq \frac{1}{2\eta} \Vert x \Vert^2 + \frac{\eta}{2} \Vert y \Vert^2. $$
We will pick an appropriate $\eta$ shortly. For (b), we used the fact that from the definition of the perturbed iterate, it holds that  $\tilde{\theta}_t - \theta_t = \alpha e_{t-1}.$ We will now make use of Lemma's \ref{lemma:convex} and \ref{lemma:gradbnd}. We proceed as follows. 
\begin{equation}
\begin{aligned}
    \Vert \tilde{\theta}_{t+1} - \theta^* \Vert^2 &\leq \Vert \tilde{\theta}_{t} - \theta^* \Vert^2 + 2 \alpha \langle {\theta}_t - \theta^*, \bar{g}(\theta_t) \rangle + \alpha \left(\alpha+\frac{1}{\eta}\right) \Vert \bar{g}(\theta_t) \Vert^2 + \alpha^3 \eta \Vert e_{t-1} \Vert^2\\
    &\overset{(a)}\leq \Vert \tilde{\theta}_{t} - \theta^* \Vert^2 - 2 \alpha (1-\gamma) \Vert  \hat{V}_{\theta_t} - \hat{V}_{\theta^*}  \Vert^2_D + \alpha \left(\alpha+\frac{1}{\eta}\right) \Vert \bar{g}(\theta_t) \Vert^2 + \alpha^3 \eta \Vert e_{t-1} \Vert^2 \\
    & \overset{(b)}\leq \Vert \tilde{\theta}_{t} - \theta^* \Vert^2 - \alpha \left(2 (1-\gamma) - 4\left(\alpha+\frac{1}{\eta}\right) \right) \Vert \hat{V}_{{\theta}_t} - \hat{V}_{\theta^*} \Vert^2_D + \alpha^3 \eta \Vert e_{t-1} \Vert^2 \\
    & \overset{(c)} \leq \Vert \tilde{\theta}_{t} - \theta^* \Vert^2 - \alpha (1-\gamma) \left(1 - \frac{4\alpha}{(1-\gamma)} \right) \Vert \hat{V}_{{\theta}_t} - \hat{V}_{\theta^*} \Vert^2_D + \frac{4\alpha^3}{(1-\gamma)}  \Vert e_{t-1} \Vert^2 \\
    & \overset{(d)} \leq \Vert \tilde{\theta}_{t} - \theta^* \Vert^2 - \frac{\alpha (1-\gamma)}{2} \Vert \hat{V}_{{\theta}_t} - \hat{V}_{\theta^*} \Vert^2_D + \frac{4\alpha^3}{(1-\gamma)}  \Vert e_{t-1} \Vert^2. \\
\end{aligned}
\label{eqn:nniterimbnd1}
\end{equation}
In the above steps, (a) follows from Lemma \ref{lemma:convex}; (b) follows from Lemma \ref{lemma:gradbnd}; (c) follows by setting $\eta = 4/(1-\gamma)$; and (d) is a consequence of the fact that $\alpha \leq (1-\gamma)/8$. To complete the proof, we need to relate  $\Vert \hat{V}_{{\theta}_t} - \hat{V}_{\theta^*} \Vert^2_D$ to $\Vert \hat{V}_{\tilde{\theta}_t} - \hat{V}_{\theta^*} \Vert^2_D$. We do so by using the fact that for any $x, y \in \mathbb{R}^n$, it holds that $\Vert x + y \Vert^2_D \leq 2 \Vert x \Vert^2_D + 2 \Vert y \Vert^2_D.$ This yields:
\begin{equation}
\begin{aligned}
    \Vert \hat{V}_{\tilde{\theta}_t} - \hat{V}_{\theta^*} \Vert^2_D &\leq 2 \Vert \hat{V}_{{\theta}_t} - \hat{V}_{\theta^*} \Vert^2_D + 2 \Vert \hat{V}_{\tilde{\theta}_t} - \hat{V}_{\theta_t}  \Vert^2_D \\
    &\overset{(a)}\leq 2 \Vert \hat{V}_{{\theta}_t} - \hat{V}_{\theta^*} \Vert^2_D + 2 \Vert {\tilde{\theta}_t} - {\theta_t}  \Vert^2 \\
    &\leq 2 \Vert \hat{V}_{{\theta}_t} - \hat{V}_{\theta^*} \Vert^2_D + 2 \alpha^2 \Vert e_{t-1} \Vert^2, \\
\end{aligned}
\end{equation}
where for (a), we used Lemma~\ref{lemma:norm}. Rearranging and simplifying, we obtain: 
$$ - \Vert \hat{V}_{{\theta}_t} - \hat{V}_{\theta^*} \Vert^2_D \leq - \frac{1}{2} \Vert \hat{V}_{\tilde{\theta}_t} - \hat{V}_{\theta^*} \Vert^2_D + \alpha^2 \Vert e_{t-1} \Vert^2. $$
Plugging the above inequality in~\eqref{eqn:nniterimbnd1} leads to~\eqref{eqn:nnLyapbnd}. This completes the proof. 
\end{proof}

The last term in~\eqref{eqn:nnLyapbnd} is one which does not show up in the standard analysis of \texttt{TD}(0), and is unique to our setting. In our next result, we control this extra term (depending on the memory variable) by appealing to the contraction property of the compression operator $\mc{Q}_{\delta}(\cdot)$ in~\eqref{eqn:compressor}. 

\begin{lemma} 
\label{lemma:nnmemory1}
(\textbf{Bound on Memory Variable}) For the mean-path version of \texttt{EF-TD}, the following holds:
\begin{equation}
    \Vert e_t \Vert^2 \leq \left(1-\frac{1}{2\delta}+4\alpha^2 \delta \right) \Vert e_{t-1} \Vert^2 + 16 \delta \Vert \hat{V}_{\tilde{\theta}_t}-\hat{V}_{{\theta}^*} \Vert^2_D. 
\label{eqn:nnmemorybnd1}
\end{equation}
\end{lemma}
\begin{proof}
We begin as follows:
\begin{equation}
    \begin{aligned}
    \Vert e_t \Vert^2 &= \Vert e_{t-1}+\bar{g}(\theta_t) - h_t \Vert^2 \\
     &= \Vert e_{t-1}+\bar{g}(\theta_t) - \mc{Q}_{\delta}\left(e_{t-1}+\bar{g}(\theta_t)\right) \Vert^2 \\
    &\overset{(a)} \leq \left(1-\frac{1}{\delta} \right) \Vert e_{t-1}+\bar{g}(\theta_t) \Vert^2 \\
    &\overset{(b)} \leq \left(1-\frac{1}{\delta} \right) \left(1+\frac{1}{\eta} \right) \Vert e_{t-1} \Vert^2 + \left(1-\frac{1}{\delta} \right) \left(1+ \eta \right)   \Vert \bar{g}(\theta_t) \Vert^2,
    \end{aligned}
\label{eqn:nnmemoryinterim1}
\end{equation}
for some $\eta >0$ to be chosen by us shortly. Here, for (a), we used the contraction property of $\mc{Q}_{\delta}(\cdot)$ in~\eqref{eqn:compressor}; for (b), we used the relaxed triangle inequality~\eqref{eqn:rel_triangle}. To ensure that 
$\Vert e_t \Vert^2$ contracts over time, we want
 $$\left(1-\frac{1}{\delta}\right) \left(1+\frac{1}{\eta} \right) < 1 \implies \eta > (\delta-1).$$
Accordingly, suppose 
$\eta= \delta -1.$ Simple calculations then yield 
$$\left(1-\frac{1}{\delta}\right) \left(1+\frac{1}{\eta}\right) = \left(1-\frac{1}{2\delta}\right) ; \hspace{5mm} \left(1-\frac{1}{\delta}\right) \left(1+ \eta \right) < 2\delta. $$
Plugging these bounds back in~\eqref{eqn:nnmemoryinterim1}, we obtain 
\begin{equation}
    \begin{aligned}
    \Vert e_t \Vert^2 &\leq \left(1-\frac{1}{2\delta} \right) \Vert e_{t-1} \Vert^2 + 2 \delta \Vert \bar{g}(\theta_t) \Vert^2\\
    &\leq \left(1-\frac{1}{2\delta} \right) \Vert e_{t-1} \Vert^2 + 2 \delta \Vert \bar{g}(\theta_t) - \bar{g}(\tilde{\theta}_t) + \bar{g}(\tilde{\theta}_t) \Vert^2\\
    &\leq \left(1-\frac{1}{2\delta} \right) \Vert e_{t-1} \Vert^2 + 4 \delta \Vert \bar{g}(\theta_t) - \bar{g}(\tilde{\theta}_t) \Vert^2 + 4 \delta \Vert \bar{g}(\tilde{\theta}_t) \Vert^2\\
    &\overset{(a)}\leq \left(1-\frac{1}{2\delta} \right) \Vert e_{t-1} \Vert^2 + 4 \delta \Vert \theta_t - \tilde{\theta}_t  \Vert^2 + 4 \delta \Vert \bar{g}(\tilde{\theta}_t) \Vert^2\\
    &= \left(1-\frac{1}{2\delta} + 4 \alpha^2 \delta \right) \Vert e_{t-1} \Vert^2 +  4 \delta \Vert \bar{g}(\tilde{\theta}_t) \Vert^2\\
    &\overset{(b)}\leq \left(1-\frac{1}{2\delta} + 4 \alpha^2 \delta \right) \Vert e_{t-1} \Vert^2 +  16 \delta \Vert \hat{V}_{\tilde{\theta}_t}-\hat{V}_{{\theta}^*} \Vert^2_D.\\
    \end{aligned}
\end{equation}
In the above steps, for (a) we used the Lipschitz property of the steady-state \texttt{TD}(0) update direction, namely Lemma \ref{lemma:smoothness}; and for (b), we used Lemma \ref{lemma:gradbnd}. This concludes the proof. 
\end{proof}

Lemmas \ref{lemma:nnperturbbnd1} and \ref{lemma:nnmemory1} reveal that the error-dynamics of the perturbed iterate and the memory variable are coupled with each other. As such, they cannot be studied in isolation. This precisely motivates the choice of the Lyapunov function $\psi_t$ in~\eqref{eqn:nnLyap1}. We are now ready to complete the proof of Theorem \ref{thm:noiseless}. 

\begin{proof} (\textbf{Proof of Theorem \ref{thm:noiseless}}) Using the bounds from Lemmas \ref{lemma:nnperturbbnd1} and \ref{lemma:nnmemory1}, and recalling the definition of the Lyapunov function $\psi_t$ from~\eqref{eqn:nnLyap1}, we have
\begin{equation}
\begin{aligned}
    \psi_{t+1} & \leq \Vert \tilde{\theta}_{t} - \theta^* \Vert^2 - \frac{\alpha (1-\gamma)}{4} \left(1-\frac{64\alpha \delta}{(1-\gamma)}\right) \Vert \hat{V}_{\tilde{\theta}_t} - \hat{V}_{\theta^*} \Vert^2_D + \alpha^2 \left(1-\frac{1}{2\delta} + 4 \alpha^2 \delta+\frac{5 \alpha}{(1-\gamma)} \right) \Vert e_{t-1} \Vert^2 \\
     & \leq \underbrace{\left(1- \frac{\alpha \omega (1-\gamma)}{4} \left(1-\frac{64\alpha \delta}{(1-\gamma)}\right) \right)}_{\mc{A}_1}   \Vert \tilde{\theta}_{t} - \theta^* \Vert^2  + \alpha^2 \underbrace{\left(1-\frac{1}{2\delta} + 4 \alpha^2 \delta+\frac{5 \alpha}{(1-\gamma)} \right)}_{\mc{A}_2} \Vert e_{t-1} \Vert^2,
\label{eqn:nninterim3}
\end{aligned}
\end{equation}
where we used Lemma \ref{lemma:norm} in the last step. Our goal is to establish an inequality of the form $\psi_{t+1} \leq \nu \psi_{t}$ for some $\nu <1$. To that end, the next step of the proof is to pick the step-size $\alpha$ in a way such that $\max\{\mc{A}_1, \mc{A}_2\} < 1.$ Accordingly, with $\alpha=(1-\gamma)/(128 \delta)$, we have that 
$$ \mc{A}_1 = \left(1-\frac{(1-\gamma)^2 \omega}{1024 \delta} \right); \hspace{2mm} \textrm{and} \hspace{2mm}  \mc{A}_2 \leq  \left(1 - \frac{1}{4 \delta} \right). $$
Furthermore, it is easy to check that $\mc{A}_2 \leq \mc{A}_1$. Combining these observations with~\eqref{eqn:nninterim3}, we obtain:
$$ \psi_{t+1} \leq \left(1-\frac{(1-\gamma)^2 \omega}{1024 \delta} \right) \underbrace{\left(\Vert \tilde{\theta}_{t} - \theta^* \Vert^2  + \alpha^2  \Vert e_{t-1} \Vert^2\right)}_{\psi_t}.$$
Unrolling the above recursion yields:
\begin{equation}
\begin{aligned}
 \psi_T &\leq \left(1-\frac{(1-\gamma)^2 \omega}{1024 \delta} \right)^T \psi_0 \\
 &= \left(1-\frac{(1-\gamma)^2 \omega}{1024 \delta} \right)^T \Vert \theta_0 - \theta^* \Vert^2,
\end{aligned}
\end{equation}
where for the last step, we used the fact that $e_{-1}=0$. To conclude the proof, it suffices to notice that:
\begin{equation}
\begin{aligned}
    {\Vert {\theta}_{T}-\theta^* \Vert}^2 & = {\Vert {\theta}_{T}- \tilde{\theta}_{T} + \tilde{\theta}_{T} - \theta^* \Vert}^2 \\
    & \leq 2 {\Vert \tilde{\theta}_{T} - \theta^* \Vert}^2+ 2 {\Vert {\theta}_{T}- \tilde{\theta}_{T} \Vert}^2 \\
    & = 2{\Vert \tilde{\theta}_{T} - \theta^* \Vert}^2 + 2 {\alpha}^2 {\Vert e_{T-1}\Vert}^2\\
    & = 2\psi_{T}. 
\end{aligned}
\end{equation}
\end{proof}
\subsection{Can Compressed TD Methods Without Error-Feedback Still Converge?}
\label{app:compTD}
Earlier in this section, we provided intuition as to why \texttt{EF-TD} converges by studying its dynamics in the steady-state. One might ask: \textit{Can compressed TD algorithms without error-feedback still converge?} If so, under what conditions? We turn to answering these questions in this subsection. In what follows, we will show that compressed TD without error-feedback can still converge, provided certain restrictive conditions on the compression parameter $\delta$ are met. Notably, these conditions are no longer needed when one employs error-feedback. To convey the key ideas, we consider a mean-path version of compressed TD shown below:
\begin{equation}
\theta_{t+1} = \theta_t + \alpha \mathcal{Q}_{\delta}(\bar{g}(\theta_t)),
\label{eqn:compressedTD}
\end{equation}
where $\mathcal{Q}_{\delta}(\cdot)$ is the compression operator in~\eqref{eqn:compressor}. From the above display, we immediately have
\begin{equation}
\Vert \theta_{t+1} - \theta^* \Vert^2 = \Vert \theta_{t} - \theta^* \Vert^2 + 2 \alpha \langle \theta_t - \theta^*, \mathcal{Q}_{\delta}(\bar{g}(\theta_t)) \rangle + \alpha^2 \Vert \mathcal{Q}_{\delta}(\bar{g}(\theta_t)) \Vert^2. 
\label{eqn:compTD1}
\end{equation}

Among the three terms on the R.H.S. of the above equation, notice that the only term that can lead to a decrease in the iterate error $\Vert \theta_{t+1} - \theta^* \Vert^2 $ is clearly $2 \alpha \langle \theta_t - \theta^*, \mathcal{Q}_{\delta}(\bar{g}(\theta_t)) \rangle$. As such, let us fix a $\theta \in \mathbb{R}^K$, and investigate what we can say about $\langle \theta - \theta^*, \mathcal{Q}_{\delta}(\bar{g}(\theta)) \rangle.$ First, notice that if there is no compression, i.e., $\delta =1$, then $\mathcal{Q}_{\delta}(\bar{g}(\theta)) = \bar{g}(\theta)$, and we know from Lemmas~\ref{lemma:norm} and~~\ref{lemma:convex} that 
\begin{equation}
\langle \theta - \theta^*, \bar{g}(\theta) \rangle \leq - \beta \Vert \theta - \theta^* \Vert^2,
\label{eqn:maincontraction}
\end{equation}

where $\beta = \omega (1-\gamma) \in (0, 1).$ It is precisely the above key property that causes uncompressed TD to converge to $\theta^*$. Now let us observe:
\begin{equation}
\begin{aligned}
\langle \theta - \theta^*, \mathcal{Q}_{\delta}(\bar{g}(\theta)) \rangle &= \langle \theta - \theta^*, \bar{g}(\theta) \rangle + \langle \theta - \theta^*, \mathcal{Q}_{\delta}(\bar{g}(\theta)) - \bar{g}(\theta) \rangle \\
& \leq  \langle \theta - \theta^*, \bar{g}(\theta) \rangle + \Vert \theta - \theta^* \Vert \Vert \mathcal{Q}_{\delta}(\bar{g}(\theta)) - \bar{g}(\theta) \Vert \\
& \overset{(a)}\leq  \langle \theta - \theta^*, \bar{g}(\theta) \rangle + \sqrt{\left(1-\frac{1}{\delta}\right)} \Vert \theta - \theta^* \Vert \Vert \bar{g}(\theta) \Vert\\
& \overset{(b)}\leq \langle \theta - \theta^*, \bar{g}(\theta) \rangle + \sqrt{\left(1-\frac{1}{\delta}\right)} \Vert \theta - \theta^* \Vert^2\\
& \overset{(c)}\leq - \left(\beta - \sqrt{\left(1-\frac{1}{\delta}\right)} \right) \Vert \theta - \theta^* \Vert^2.
\end{aligned}
\label{eqn:distortedcontraction}
\end{equation}

In the above steps, (a) follows from~\eqref{eqn:compressor}, (b) follows from~\eqref{eqn:gradbnd3}, and (c) from \eqref{eqn:maincontraction}. Comparing~\eqref{eqn:distortedcontraction} to~\eqref{eqn:maincontraction}, we conclude that for the distorted TD direction $\mathcal{Q}_{\delta}(\bar{g}(\theta))$ to ensure progress towards $\theta^*$, we need the following condition to hold:
\begin{equation}
\boxed{
\sqrt{\left(1-\frac{1}{\delta}\right)} < \beta.}
\end{equation}

Simplifying, the above condition amounts to
\begin{equation}
\boxed{
\delta < \frac{1}{(1-\beta^2)}. 
}
\label{eqn:conv_cond}
\end{equation}

The parameter $\beta \in (0,1)$ gets fixed when one fixes an MDP, the policy to be evaluated, and the feature vectors for linear function approximation. The condition for contraction/convergence in~\eqref{eqn:conv_cond} tells us that this parameter $\beta$ limits the extent of compression $\delta$. Said differently, one cannot choose the compression level $\delta$ to be arbitrarily large; rather it is dictated by the problem-dependent parameter $\beta$. \emph{It is important to note here that no such restriction on $\delta$ is necessary when one uses error-feedback, as revealed by our analysis for mean-path \texttt{EF-TD}. This highlights the benefit of using error-feedback in the context of compressed TD learning.} With these observations in place, let us return to our analysis of the update rule in~\eqref{eqn:compressedTD}. For ease of notation, let us define

$$ \zeta \triangleq \left(\beta - \sqrt{\left(1-\frac{1}{\delta}\right)} \right),$$

and note that if the compression parameter $\delta$ satisfies the condition in~\eqref{eqn:conv_cond}, then $\zeta >0$. Plugging the bound from~\eqref{eqn:distortedcontraction} in~\eqref{eqn:compTD1}, we obtain
\begin{equation}
\begin{aligned}
    \Vert \theta_{t+1} - \theta^* \Vert^2 &= \Vert \theta_{t} - \theta^* \Vert^2 + 2 \alpha \langle \theta_t - \theta^*, \mathcal{Q}_{\delta}(\bar{g}(\theta_t)) \rangle + \alpha^2 \Vert \mathcal{Q}_{\delta}(\bar{g}(\theta_t)) \Vert^2\\
    & \leq \left(1- 2 \alpha \zeta\right)\Vert \theta_{t} - \theta^* \Vert^2 + \alpha^2 \Vert \mathcal{Q}_{\delta}(\bar{g}(\theta_t)) - \bar{g}(\theta_t) + \bar{g}(\theta_t)\Vert^2\\
    & \leq \left(1- 2 \alpha \zeta\right)\Vert \theta_{t} - \theta^* \Vert^2 + 2\alpha^2 \Vert \mathcal{Q}_{\delta}(\bar{g}(\theta_t)) - \bar{g}(\theta_t) \Vert^2 + 2\alpha^2 \Vert \bar{g}(\theta_t)\Vert^2\\
    & \overset{(a)}\leq \left(1- 2 \alpha \zeta\right)\Vert \theta_{t} - \theta^* \Vert^2 + 2\left(2- \frac{1}{\delta}\right) \alpha^2 \Vert \bar{g}(\theta_t)\Vert^2\\
    & \overset{(b)}\leq \left(1-2 \alpha \zeta + 4\alpha^2 \right) \Vert \theta_{t} - \theta^* \Vert^2,
\end{aligned}
\end{equation}
where (a) follows from~\eqref{eqn:compressor} and (b) from~\eqref{eqn:gradbnd3}. Thus, with $\alpha \leq \zeta/4$, we have 
$$ \Vert \theta_{t+1} - \theta^* \Vert^2 \leq  \left(1- \alpha \zeta \right) \Vert \theta_{t} - \theta^* \Vert^2.$$

We conclude that when the compression parameter $\delta$ satisfies the condition in~\eqref{eqn:conv_cond}, and the step-size is chosen to be suitably small, the compressed TD update rule in~\eqref{eqn:compressedTD} does converge linearly to $\theta^*$. 
\newpage
\color{black}
\section{Warm-Up: Analysis of \texttt{EF-TD}  with a Projection Step}
\label{app:Markov_proj}
Before attempting to prove Theorem~\ref{thm:Markov}, it is instructive to analyze the behavior of \texttt{EF-TD} with a projection step. The benefit of this projection step is that it makes it relatively easier to argue that the iterates generated by \texttt{EF-TD} remain uniformly bounded; nonetheless, as we shall soon see, the analysis remains quite non-trivial even in light of this simplification. Let us now jot down the governing equations of the dynamics we plan to study. 
\begin{equation}
    \begin{aligned}
    h_t &= \mc{Q}_{\delta}\left(e_{t-1}+{g}_t(\theta_t)\right),\\
    \theta_{t+1} &= \Pi_{2,\mc{B}}\left(\theta_t + \alpha h_t\right), \\
    e_t &= e_{t-1}+{g}_t(\theta_t)-h_t,
\label{eqn:governeqMarkov}\end{aligned}
\end{equation}
where $\Pi_{2,\mc{B}}(\cdot)$ denotes the standard Euclidean projection on to a convex compact subset $\mc{B} \subset \mathbb{R}^K$ that is assumed to contain the fixed point $\theta^*$. 
 We also note here that a projection step of the form in~\eqref{eqn:governeqMarkov} is common in the literature on stochastic approximation \cite{borkar} and RL \cite{bhandari_finite,doan}. 

Our main result concerning the performance of the projected version of \texttt{EF-TD} is the following.

\begin{theorem} 
\label{thm:Markovproj} Suppose Assumption \ref{ass:Markov} holds. There exists a universal constant $c \geq 1$ such that the iterates generated by the projected version of \texttt{EF-TD} (i.e.,~\eqref{eqn:governeqMarkov}) with step-size $\alpha \leq (1-\gamma)/c$ satisfy the following after $T \geq \tau$ iterations: 
\begin{equation}
   \mathbb{E} \left[\Vert \theta_T-\theta^* \Vert^2_2 \right] \leq C_1 \left(1-\alpha \omega(1-\gamma)\right)^{T-\tau}  +O\left(\frac{\alpha \tau \delta^2 G^2}{\omega (1-\gamma)}\right),
\label{eqn:Markov_proj}
\end{equation}
where $C_1=O(\alpha^2\delta^2 G^2 +G^2)$, and $G$ is the radius of the convex compact set $\mc{B}$. 
\end{theorem}

\textbf{Main Takeaway.} We note that the nature of the above guarantee is similar to that of Theorem~\ref{thm:Markov}. That said, while the noise term in Theorem~\ref{thm:Markov} is $O(\tau + \delta)$, it is $O(\tau \delta^2)$ in Theorem~\ref{thm:Markovproj}. In words, with the somewhat cruder bounds we obtain via projection, we end up with a looser dependency on the distortion parameter $\delta$. Moreover, the mixing time $\tau$ and the distortion parameter $\delta$ show up in multiplicative form in Theorem~\ref{thm:Markovproj}. In Section~\ref{app:Markovproof}, we will provide a finer analysis (without the need for projection) that yields the tighter $O(\tau + \delta)$ bound. 

We now proceed with the proof of Theorem~\ref{thm:Markovproj}. Let us start by defining the projection error $e_{p,t}$ at time-step $t$ as follows: $e_{p,t}=\theta_t - (\theta_{t-1}+\alpha h_{t-1})$. We also define an intermediate sequence $\{\bar{\theta}_t\}$ as follows: $\bar{\theta}_t \triangleq \theta_{t-1}+\alpha h_{t-1}$. Thus, $\theta_t - \bar{\theta}_t = e_{p,t}.$ Next, inspired by the perturbed iterate framework in~\cite{mania}, we define a modified \textit{perturbed iterate} as follows:
\begin{equation}
    \tilde{\theta}_t \triangleq  \bar{\theta}_t+\alpha e_{t-1}. 
\end{equation}
Based on the above definitions, observe that
\begin{equation}
    \begin{aligned}
    \tilde{\theta}_{t+1} &= \bar{\theta}_{t+1}+\alpha e_{t} \\
    &=\theta_t+\alpha h_t + \alpha \left(e_{t-1}+g_t(\theta_t)-h_t\right)\\
    &=\theta_t+\alpha g_t(\theta_t)+\alpha e_{t-1}\\
    & = \bar{\theta}_t+\alpha e_{t-1}+\alpha g_t(\theta_t) + e_{p,t}\\
    & = \tilde{\theta}_t + \alpha g_t(\theta_t)+e_{p,t}.
    \end{aligned}
\label{eqn:proj_perturb}
\end{equation}
Subtracting $\theta^*$ from each side of~\eqref{eqn:proj_perturb} and then squaring both sides, we obtain:
\begin{equation}
 \Vert \tilde{\theta}_{t+1} - \theta^* \Vert^2   = \Vert \tilde{\theta}_{t} - \theta^* \Vert^2 + \underbrace{2 \alpha \langle \tilde{\theta}_t - \theta^*, {g}_t(\theta_t) \rangle}_{\mc{C}_1} + \underbrace{\alpha^2 \Vert {g}_t(\theta_t) \Vert^2}_{\mc{C}_2} + \underbrace{2 \langle \tilde{\theta}_t - \theta^*+\alpha g_t(\theta_t), e_{p,t} \rangle}_{\mc{C}_3} + \underbrace{\Vert e_{p,t} \Vert^2}_{\mc{C}_4}.
 \label{eqn:decomp}
\end{equation}

In what follows, we outline the key steps of our proof that involve bounding each of the terms $\mc{C}_1 - \mc{C}_4$.

$\bullet$ \textbf{Step 1.} The dynamics of the model parameter $\theta_t$, the Markov variable $X_t$, the memory variable $e_t$, and the projection error $e_{p,t}$ are all closely coupled, leading to a dynamical system far more complex than the standard \texttt{TD}(0) system. To start unravelling this complex dynamical system, our key strategy is to \textit{disentangle the memory variable and the projection error from the perturbed iterate and the Markov data tuple}. To do so, we derive uniform bounds on $e_{t}, h_t$, and $e_{p,t}$ by exploiting the contraction property in~\eqref{eqn:compressor}. This is achieved in Lemma \ref{lemma:uniform}.

$\bullet$ \textbf{Step 2.} Using the uniform bounds from the previous step in tandem with properties of the Euclidean projection operator, we control terms $\mc{C}_2-\mc{C}_4$ in Lemma \ref{lemma:C3bnd}. 

$\bullet$ \textbf{Step 3.} Bounding $\mc{C}_1$ takes the most work. For this step, we exploit the idea of conditioning on the system state sufficiently into the past, and using the geometric mixing property of the Markov chain. As we shall see, conditioning into the past creates the need to control $\Vert \theta_t - \theta_{t-\tau} \Vert, \forall t \geq \tau$, where $\tau$ is the mixing time. This is done in Lemma~\ref{lemma:drift_bnd}. Using the result from Lemma~\ref{lemma:drift_bnd}, we bound $\mc{C}_1$ in Lemma~\ref{lemma:mainMarkov}.

At the end of the three steps above, what we wish to establish is a recursion of the following form $\forall t\geq \tau$:
$$
 \mathbb{E} \left[\Vert \tilde{\theta}_{t+1}-\theta^* \Vert^2 \right] \leq  \left(1-\alpha\omega(1-\gamma)\right) \mathbb{E} \left[\Vert \tilde{\theta}_t-\theta^* \Vert^2 \right] +O(\alpha^2 \tau \delta^2  G^2)
.$$

To proceed with Step 1, we recall the following result from~\cite{nedic2010}. 

\begin{lemma}
\label{lemma:projection}
Let $\mc{B}$ be a nonempty, closed, convex set in $\mathbb{R}^K$. Then, for any $x\in \mathbb{R}^K$, we have:
\begin{enumerate}
    \item[(a)] $\langle \Pi_{2,\mc{B}}(x)-x, x-y \rangle \leq - \Vert \Pi_{2,\mc{B}}(x)-x \Vert^2, \forall y \in \mc{B}$.
    \item[(b)] $ \Vert \Pi_{\mc{B}}(x)-y \Vert^2 \leq \Vert x-y \Vert^2 - \Vert \Pi_{\mc{B}}(x)-x \Vert^2, \forall y \in \mc{B}$.
\end{enumerate}
\end{lemma}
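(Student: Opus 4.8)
The plan is to derive both inequalities from the single obtuse-angle characterization of the Euclidean projection onto a convex set. Write $p \triangleq \Pi_{2,\mc{B}}(x)$. The key fact I would establish first is the variational inequality
\[
\langle x - p, z - p \rangle \leq 0 \quad \text{for all } z \in \mc{B}.
\]
This is standard, but for self-containedness one can argue as follows: since $\mc{B}$ is convex, $p + \lambda(z - p) \in \mc{B}$ for every $z \in \mc{B}$ and $\lambda \in (0,1]$, and since $p$ minimizes $\Vert x - \cdot \Vert^2$ over $\mc{B}$ we get $\Vert x - p \Vert^2 \leq \Vert x - p - \lambda(z - p)\Vert^2$; expanding the right-hand side, cancelling $\Vert x - p\Vert^2$, dividing by $2\lambda$, and letting $\lambda \to 0^{+}$ yields the claim.

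Given this, part (a) follows by a one-line computation: fix $y \in \mc{B}$ and write $x - y = (x - p) + (p - y)$, so that $\langle p - x, x - y \rangle = -\Vert p - x \Vert^2 + \langle p - x, p - y \rangle = -\Vert p - x \Vert^2 + \langle x - p, y - p \rangle$; the variational inequality applied with $z = y$ shows $\langle x - p, y - p \rangle \leq 0$, giving $\langle p - x, x - y \rangle \leq -\Vert p - x \Vert^2$, which is exactly claim (a).

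For part (b), I would expand $\Vert p - y \Vert^2 = \Vert (p - x) + (x - y)\Vert^2 = \Vert p - x \Vert^2 + 2\langle p - x, x - y \rangle + \Vert x - y \Vert^2$ and substitute the bound $2\langle p - x, x - y \rangle \leq -2\Vert p - x \Vert^2$ from part (a); this collapses to $\Vert p - y \Vert^2 \leq \Vert x - y \Vert^2 - \Vert p - x \Vert^2$, as stated.

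I do not foresee any genuine obstacle here: the only conceptual ingredient is recognizing that both statements are immediate consequences of the first-order optimality condition for projecting onto a convex set, and everything else is routine expansion of squared norms. (Since the lemma is quoted essentially verbatim from \cite{nedic2010}, one could also simply cite it; the sketch above is what a self-contained proof would look like.)
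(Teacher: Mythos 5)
Your proof is correct. The paper does not actually prove this lemma---it is quoted from \cite{nedic2010} without argument---and your self-contained derivation via the first-order optimality (variational inequality) characterization of the Euclidean projection, followed by the two routine norm expansions, is the standard and correct way to establish both parts.
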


To lighten notation, let us assume without loss of generality that all rewards are uniformly bounded by $1$. Our results can be trivially extended to the case where the uniform bound is some finite number $R_{max}$. To make the calculations cleaner, we also assume that the projection radius $G$ is greater than $1$. We have the following key result that provides uniform bounds on the memory variable and the projection error. 

\begin{lemma} \label{lemma:uniform}  (\textbf{Uniform bounds on memory variable and projection error}) For the dynamics in~\eqref{eqn:governeqMarkov}, the following hold $\forall t\geq 0$:
\begin{enumerate}
    \item[(a)] $\Vert e_t \Vert \leq 6 \delta G.$
    \item[(b)] $\Vert h_t \Vert \leq 15 \delta G.$
    \item[(c)] $\Vert e_{p,t} \Vert \leq 15 \alpha \delta G.$ 
\end{enumerate}
\end{lemma}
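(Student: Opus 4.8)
The plan is to prove the three bounds in increasing order of difficulty: part~(a) is the one that carries content and is obtained by induction on $t$, after which parts~(b) and~(c) drop out almost immediately. Before starting the induction I would record three elementary preliminaries. First, the projection in Eq.~\eqref{eqn:governeqMarkov} keeps every iterate in $\mc{B}$, so $\Vert \theta_t \Vert \leq G$ for all $t$. Second, combining this with $\Vert \phi(s)\Vert \leq 1$, $|r_t|\leq 1$, $\gamma\in(0,1)$, and $G\geq 1$, the \texttt{TD}(0) direction satisfies $\Vert g_t(\theta_t)\Vert \leq |r_t| + \gamma \Vert\phi(s_{t+1})\Vert\Vert\theta_t\Vert + \Vert\phi(s_t)\Vert\Vert\theta_t\Vert \leq 1 + 2G \leq 3G$. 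Third, the contraction property~\eqref{eqn:compressor} gives $\Vert \mc{Q}_\delta(x)\Vert \leq \Vert x\Vert + \Vert \mc{Q}_\delta(x)-x\Vert \leq (1+\sqrt{1-1/\delta})\,\Vert x\Vert \leq 2\Vert x\Vert$ for every $x$.

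For part~(a), I would write $a_t \triangleq e_{t-1} + g_t(\theta_t)$, so that $e_t = a_t - \mc{Q}_\delta(a_t)$, and apply~\eqref{eqn:compressor} directly to get $\Vert e_t\Vert^2 \leq (1-1/\delta)\Vert a_t\Vert^2 \leq (1-1/\delta)\,(\Vert e_{t-1}\Vert + 3G)^2$. Then I would induct with the hypothesis $\Vert e_{t-1}\Vert \leq 6\delta G$. The base case uses $e_{-1}=0$: $\Vert e_0\Vert^2 \leq 9G^2(1-1/\delta) \leq 9G^2 \leq 36\delta^2 G^2$. For the inductive step, substituting $\Vert e_{t-1}\Vert \leq 6\delta G$ gives $\Vert e_t\Vert^2 \leq (1-1/\delta)\,9G^2(2\delta+1)^2$, and the one computation that matters is the identity $(1-1/\delta)(2\delta+1)^2 = 4\delta^2 - 3 - 1/\delta \leq 4\delta^2$, which yields $\Vert e_t\Vert^2 \leq 36\delta^2 G^2$ and closes the induction. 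The hard part — such as it is — is exactly this step: one has to notice that the induction constant must scale with $\delta$ (the leading $\delta$-coefficient of $(1-1/\delta)(c\delta+3)^2 - c^2\delta^2$ is $c(6-c)$, which is nonpositive precisely for $c\geq 6$), so that the geometric contraction of $\mc{Q}_\delta$ beats the additive $3G$ term at every step; everything else is bookkeeping.

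Part~(b) would follow by writing $h_t = a_t - e_t = e_{t-1} + g_t(\theta_t) - e_t$ and combining part~(a) with the preliminaries: $\Vert h_t\Vert \leq \Vert e_{t-1}\Vert + \Vert g_t(\theta_t)\Vert + \Vert e_t\Vert \leq 6\delta G + 3G + 6\delta G \leq 15\delta G$, using $3G\leq 3\delta G$ in the last step (bounding $\Vert h_t\Vert = \Vert\mc{Q}_\delta(a_t)\Vert \leq 2\Vert a_t\Vert$ instead would lose a factor, so I would avoid it). Part~(c) is then purely a projection fact: since $e_{p,t} = \Pi_{2,\mc{B}}(\theta_{t-1}+\alpha h_{t-1}) - (\theta_{t-1}+\alpha h_{t-1})$ and $\theta_{t-1}\in\mc{B}$, the point $\theta_{t-1}+\alpha h_{t-1}$ is no farther from its projection than from $\theta_{t-1}$, whence $\Vert e_{p,t}\Vert \leq \Vert \theta_{t-1} - (\theta_{t-1}+\alpha h_{t-1})\Vert = \alpha\Vert h_{t-1}\Vert \leq 15\alpha\delta G$; equivalently this is Lemma~\ref{lemma:projection}(b) applied with $y=\theta_{t-1}$. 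So there is no conceptual obstacle in the whole lemma — the only delicate point is choosing the induction constant in part~(a) correctly and verifying the accompanying scalar inequality.
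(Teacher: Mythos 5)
Your proof is correct, and parts (b) and (c) follow the paper's argument exactly (bound $h_t = e_{t-1} - e_t + g_t(\theta_t)$ by the triangle inequality, then bound the projection error by the distance from $\bar{\theta}_t$ to the feasible point $\theta_{t-1}$, which is $\alpha \Vert h_{t-1}\Vert$). The only place you diverge is in how the recursion for $\Vert e_t\Vert^2$ is closed in part (a). The paper first converts the contraction bound $\Vert e_t\Vert^2 \leq (1-1/\delta)\Vert e_{t-1} + g_t(\theta_t)\Vert^2$ into a \emph{linear} recursion $\Vert e_t\Vert^2 \leq (1-\tfrac{1}{2\delta})\Vert e_{t-1}\Vert^2 + 2\delta\Vert g_t(\theta_t)\Vert^2 \leq (1-\tfrac{1}{2\delta})\Vert e_{t-1}\Vert^2 + 18\delta G^2$ via Young's inequality with the tuned parameter $\eta = \delta - 1$ (the same step as in Lemma~\ref{lemma:nnmemory}), and then simply unrolls the geometric series to get $36\delta^2 G^2$ with no need to guess the constant in advance. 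You instead keep the crude bound $(1-1/\delta)(\Vert e_{t-1}\Vert + 3G)^2$ and run an induction with the ansatz $\Vert e_{t-1}\Vert \leq 6\delta G$, verifying the scalar inequality $(1-1/\delta)(2\delta+1)^2 \leq 4\delta^2$; your observation that the ansatz constant must be at least $6$ (from the sign of $c(6-c)$) is exactly the price of this route. Both arguments yield the identical constant $6\delta G$; the paper's version is slightly more modular since the intermediate linear recursion is reused elsewhere, while yours is more elementary in that it avoids the Young-inequality splitting. One cosmetic point: your statement ``$\Vert \theta_t\Vert \leq G$ due to projection'' is the intended reading of the paper (which has a typo ``$\Vert\theta_t\Vert\leq 1$'' at that spot), so no issue there.
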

\begin{proof}
We start by noting that for all $t\geq0$, 
\begin{equation}
    \Vert g_t(\theta_t) \Vert = \Vert A(X_t) \theta_t-b(X_t) \Vert \leq \Vert A(X_t) \Vert \Vert \theta_t \Vert + \Vert b(X_t) \Vert \leq 2G+1 \leq 3G,
\label{eqn:unigrad}
\end{equation}
where we used (i) the feature normalization property; (ii) the fact that the rewards are uniformly bounded by 1; and (iii) the fact that due to projection, $\Vert \theta_t \Vert \leq 1, \forall t\geq 0.$ Next, observe that 
\begin{equation}
    \begin{aligned}
    \Vert e_t \Vert^2 &= \Vert e_{t-1}+g_t(\theta_t) - h_t \Vert^2 \\
     &= \Vert e_{t-1}+g_t(\theta_t) - \mc{Q}_{\delta}\left(e_{t-1}+g_t(\theta_t)\right) \Vert^2 \\
    &\overset{(a)} \leq \left(1-\frac{1}{\delta} \right) \Vert e_{t-1}+g_t(\theta_t) \Vert^2 \\
    &\overset{(b)} \leq \left(1-\frac{1}{\delta} \right) \left(1+\frac{1}{\eta} \right) \Vert e_{t-1} \Vert^2 + \left(1-\frac{1}{\delta} \right) \left(1+ \eta \right)   \Vert g_t(\theta_t) \Vert^2,
    \end{aligned}
\label{eqn:memoryinterim1}
\end{equation}
for some $\eta >0$ to be chosen by us shortly. Here, for (a), we used the contraction property of $\mc{Q}_{\delta}(\cdot)$ in~\eqref{eqn:compressor}; for (b), we used the relaxed triangle inequality in~\eqref{eqn:rel_triangle}. To ensure that 
$\Vert e_t \Vert^2$ contracts over time, we want
 $$\left(1-\frac{1}{\delta}\right) \left(1+\frac{1}{\eta} \right) < 1 \implies \eta > (\delta-1).$$
Accordingly, suppose 
$\eta= \delta -1.$ Simple calculations then yield 
$$\left(1-\frac{1}{\delta}\right) \left(1+\frac{1}{\eta}\right) = \left(1-\frac{1}{2\delta}\right) ; \hspace{5mm} \left(1-\frac{1}{\delta}\right) \left(1+ \eta \right) < 2\delta. $$
Plugging these bounds back in~\eqref{eqn:memoryinterim1} and using~\eqref{eqn:unigrad}, we obtain 
\begin{equation}
    \begin{aligned}
    \Vert e_t \Vert^2 &\leq \left(1-\frac{1}{2\delta} \right) \Vert e_{t-1} \Vert^2 + 2 \delta \Vert {g}_t(\theta_t) \Vert^2\\
    &\leq \left(1-\frac{1}{2\delta} \right) \Vert e_{t-1} \Vert^2 + 18 \delta G^2.\\ 
\end{aligned}
\nonumber
\end{equation}
Unrolling the dynamics of the memory variable thus yields:
\begin{equation}
    \begin{aligned}
    \Vert e_t \Vert^2 &\leq \left(1-\frac{1}{2\delta} \right)^{t+1} \Vert e_{-1} \Vert^2 + 18\delta G^2 \sum_{k=0}^{t} \left(1-\frac{1}{2\delta}\right)^k\\
    &\leq 18\delta G^2 \sum_{k=0}^{\infty} \left(1-\frac{1}{2\delta}\right)^k\\
    &= 36 \delta^2 G^2,
    \end{aligned}
\end{equation}
where we used the fact that $e_{-1}=0$. Thus, $\Vert e_t \Vert \leq 6 \delta G$, which establishes part (a). For part (b), we notice that $h_t=e_{t-1}-e_{t}+g_t(\theta_t).$ This immediately yields:
$$ \Vert h_t \Vert \leq \Vert e_t \Vert + \Vert e_{t-1} \Vert + \Vert g_t(\theta_t) \Vert \leq 12\delta G+3G \leq 15\delta G,$$
where we used the fact that $\delta \geq 1$, the bound from part (a), and the uniform bound on $g_t(\theta_t)$ established earlier. 

Next, for part (c), we use part (b) of Lemma \ref{lemma:projection} to observe that
$$ \Vert e_{p,t} \Vert^2 = \Vert \Pi_{2,\mc{B}}(\bar{\theta}_t)-\bar{\theta}_t\Vert^2 \leq \Vert \bar{\theta}_t-\theta \Vert^2, \forall \theta \in \mc{B}.$$
Since the above bound holds for all $\theta \in \mc{B}$, and $\theta_{t-1} \in \mc{B}$, we have 
$$ \Vert e_{p,t} \Vert^2 \leq \Vert \bar{\theta}_t - \theta_{t-1} \Vert^2 = \alpha^2 \Vert h_t \Vert^2 \leq 225 \alpha^2 \delta^2 G^2,$$
where we used the fact that $\bar{\theta}_t=\theta_{t-1}+\alpha h_{t-1}$ by definition, and also the bound on $\Vert h_t \Vert$ from part (b). This concludes the proof. 
\end{proof}

From the proof of the above lemma, bounds on terms $\mc{C}_2$ and $\mc{C}_4$ in~\eqref{eqn:decomp} follow immediately. In our next result, we bound the term $\mc{C}_3$. 

\begin{lemma}
\label{lemma:C3bnd}
For the dynamics in~\eqref{eqn:governeqMarkov}, the following holds for all $ t\geq 0$:
$$ 2 \langle \tilde{\theta}_t - \theta^*+\alpha g_t(\theta_t), e_{p,t} \rangle \leq 45 \alpha^2 \delta^2 G^2. $$
\end{lemma}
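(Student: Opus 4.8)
The plan is to single out the projection-error direction $e_{p,t}$ and exploit the fact that, by construction, it always points ``back into'' the feasible set $\mc{B}$ — a fact made quantitative by the variational characterization of the Euclidean projection in Lemma~\ref{lemma:projection}(a). First I would use the identity $\tilde{\theta}_t = \bar{\theta}_t + \alpha e_{t-1}$ (with $\bar{\theta}_t = \theta_{t-1}+\alpha h_{t-1}$ and $e_{p,t} = \theta_t - \bar{\theta}_t$) to decompose the inner product as
\begin{align*}
2 \langle \tilde{\theta}_t - \theta^* + \alpha g_t(\theta_t), e_{p,t} \rangle &= 2\langle \bar{\theta}_t - \theta^*, e_{p,t}\rangle \\
&\quad + 2\alpha \langle e_{t-1}, e_{p,t}\rangle + 2\alpha\langle g_t(\theta_t), e_{p,t}\rangle.
\end{align*}

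Next I would handle the first term. Since $e_{p,t} = \Pi_{2,\mc{B}}(\bar{\theta}_t) - \bar{\theta}_t$ and $\theta^* \in \mc{B}$, applying Lemma~\ref{lemma:projection}(a) with $x = \bar{\theta}_t$ and $y = \theta^*$ gives $\langle e_{p,t}, \bar{\theta}_t - \theta^*\rangle \le -\Vert e_{p,t}\Vert^2$, hence $2\langle \bar{\theta}_t - \theta^*, e_{p,t}\rangle \le -2\Vert e_{p,t}\Vert^2$. This negative quantity is the key ``budget'' that will absorb the two remaining cross terms. For those, I would apply Young's inequality in the sharp form $2\langle u,v\rangle \le \Vert u\Vert^2 + \Vert v\Vert^2$, choosing the split so that each cross term contributes exactly one copy of $\Vert e_{p,t}\Vert^2$:
\[
2\alpha \langle e_{t-1}, e_{p,t}\rangle \le \alpha^2\Vert e_{t-1}\Vert^2 + \Vert e_{p,t}\Vert^2, \qquad 2\alpha \langle g_t(\theta_t), e_{p,t}\rangle \le \alpha^2\Vert g_t(\theta_t)\Vert^2 + \Vert e_{p,t}\Vert^2.
\]
Summing the three bounds, the $\Vert e_{p,t}\Vert^2$ contributions cancel ($-2+1+1=0$), leaving $2 \langle \tilde{\theta}_t - \theta^* + \alpha g_t(\theta_t), e_{p,t} \rangle \le \alpha^2\Vert e_{t-1}\Vert^2 + \alpha^2\Vert g_t(\theta_t)\Vert^2$. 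To finish, I would substitute the uniform bounds already established: $\Vert e_{t-1}\Vert \le 6\delta G$ from Lemma~\ref{lemma:uniform}(a) (valid for all $t\ge 0$ since $e_{-1}=0$) and $\Vert g_t(\theta_t)\Vert \le 3G$ from Eq.~\eqref{eqn:unigrad}, together with $\delta \ge 1$, giving $\alpha^2(36\delta^2 + 9)G^2 \le 45\alpha^2\delta^2 G^2$, as claimed.

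I do not expect a genuine obstacle here; the proof is short and the only points requiring care are bookkeeping ones. Specifically: (i) the variational inequality of Lemma~\ref{lemma:projection}(a) must be applied at the \emph{pre-projection} iterate $\bar{\theta}_t$ rather than at $\theta_t$ or $\tilde{\theta}_t$, since it is $\bar{\theta}_t$ that gets projected; and (ii) the Young's-inequality weights must be chosen so that the two $+\Vert e_{p,t}\Vert^2$ terms exactly cancel the $-2\Vert e_{p,t}\Vert^2$ gained from the projection step. A cruder estimate (plain Cauchy–Schwarz on the cross terms without cancelling the $\Vert e_{p,t}\Vert^2$ terms) would still give a bound of the form $O(\alpha^2\delta^2 G^2)$, which is all that the subsequent analysis requires, but would inflate the constant.
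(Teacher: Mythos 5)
Your proposal is correct and follows the paper's proof essentially verbatim: the same three-way decomposition via $\tilde{\theta}_t = \bar{\theta}_t + \alpha e_{t-1}$, the same use of Lemma~\ref{lemma:projection}(a) at the pre-projection point $\bar{\theta}_t$ to harvest $-2\Vert e_{p,t}\Vert^2$, the same Young's-inequality splits that cancel it, and the same final constants $36\alpha^2\delta^2G^2 + 9\alpha^2G^2 \le 45\alpha^2\delta^2G^2$. No gaps.
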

\begin{proof}
We start by decomposing the term we wish to bound into three parts:
\begin{equation}
    \begin{aligned}
    2 \langle \tilde{\theta}_t - \theta^*+\alpha g_t(\theta_t), e_{p,t} \rangle &= 2 \langle \bar{\theta}_t - \theta^*+\alpha e_{t-1}+\alpha g_t(\theta_t), e_{p,t} \rangle \\
    &= \underbrace{2 \langle \bar{\theta}_t - \theta^*, e_{p,t} \rangle}_{\mc{C}_{31}} + \underbrace{2 \alpha \langle e_{t-1}, e_{p,t} \rangle}_{\mc{C}_{32}} + \underbrace{2 \alpha \langle g_t(\theta_t), e_{p,t} \rangle}_{\mc{C}_{33}}.
    \end{aligned}
\end{equation}
We now bound each of the three terms above separately. For $\mc{C}_{31}$, we have
\begin{equation}
    2 \langle \bar{\theta}_t - \theta^*, e_{p,t} \rangle = 2 \langle \bar{\theta}_t - \theta^*, \theta_t -\bar{\theta}_t \rangle \leq - 2 \Vert \theta_t - \bar{\theta}_t \Vert^2 = -2 \Vert e_{p,t} \Vert^2,
\label{eqn:C31}
\end{equation}
where we used part (a) of the projection lemma, namely Lemma \ref{lemma:projection},  with $x=\bar{\theta}_t$ and $y=\theta^*$; here, note that we used the fact that $\theta^* \in \mc{B}$. Next, for $\mc{C}_{32}$, observe that
\begin{equation}
\begin{aligned}
    2 \alpha \langle e_{t-1}, e_{p,t} \rangle &\leq \alpha^2 \Vert e_{t-1} \Vert^2 + \Vert e_{p,t} \Vert^2\\
    &\leq 36 \alpha^2 \delta^2 G^2 + \Vert e_{p,t} \Vert^2,
\end{aligned}
\label{eqn:C32}
\end{equation}
where we used the bound on $\Vert e_{t-1} \Vert$ from part (a) of Lemma \ref{lemma:uniform}. Notice that we have kept $\Vert e_{p,t} \Vert^2$ as is in the above bound since we will cancel off its effect with one of the negative terms from the upper bound on $\mc{C}_{31}.$ Finally, we bound the term $\mc{C}_{33}$ as follows:
\begin{equation}
\begin{aligned}
    2 \alpha \langle g_t(\theta_t), e_{p,t} \rangle &\leq \alpha^2 \Vert g_{t}(\theta_t) \Vert^2 + \Vert e_{p,t} \Vert^2\\
    &\leq 9 \alpha^2 G^2 + \Vert e_{p,t} \Vert^2,
\end{aligned}
\label{eqn:C33}
\end{equation}
where we used the uniform bound on $g_t(\theta_t)$ from~\eqref{eqn:unigrad}. Combining the bounds in equations~\ref{eqn:C31}, \ref{eqn:C32}, and \ref{eqn:C33}, and using the fact that $\delta \geq 1$ yields the desired result. 
\end{proof}

Notice that up until now, we have not made any use of the geometric mixing property of the underlying Markov chain. We will call upon this property while bounding $\mc{C}_1$. But first, we need the following intermediate result.

\begin{lemma} \label{lemma:drift_bnd} For the dynamics in~\eqref{eqn:governeqMarkov}, the following holds for all $t\geq \tau$:
\begin{equation}
    \Vert \theta_t - \theta_{t-\tau} \Vert \leq 60 \alpha \tau \delta G. 
\end{equation}
\end{lemma}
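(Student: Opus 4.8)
The plan is to bound the one-step movement of the iterate $\theta_t$ and then telescope over the $\tau$ steps separating $t-\tau$ and $t$. The key point is that the only change in $\theta_t$ from one round to the next is the (projected) additive update $\alpha h_t$, and we have already established a uniform bound $\Vert h_t \Vert \leq 15\delta G$ in Lemma~\ref{lemma:uniform}(b).

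First I would record the per-step bound. Since $\theta_{t+1} = \Pi_{2,\mc{B}}\left(\theta_t + \alpha h_t\right)$ and, for $t\geq 0$ (assuming, as is standard, $\theta_0 \in \mc{B}$), the current iterate $\theta_t$ lies in $\mc{B}$ so that $\Pi_{2,\mc{B}}(\theta_t)=\theta_t$, the non-expansiveness of the Euclidean projection onto the convex set $\mc{B}$ gives
\begin{equation}
\Vert \theta_{t+1} - \theta_t \Vert = \Vert \Pi_{2,\mc{B}}(\theta_t+\alpha h_t) - \Pi_{2,\mc{B}}(\theta_t) \Vert \leq \alpha \Vert h_t \Vert \leq 15\alpha\delta G,
\end{equation}
where the last inequality is Lemma~\ref{lemma:uniform}(b). (Equivalently, one can invoke Lemma~\ref{lemma:projection}(b) directly with $y=\theta_t$.)

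Next I would telescope: for any $t\geq \tau$, the triangle inequality yields
\begin{equation}
\Vert \theta_t - \theta_{t-\tau} \Vert \leq \sum_{k=t-\tau}^{t-1} \Vert \theta_{k+1} - \theta_k \Vert \leq \tau \cdot 15\alpha\delta G \leq 60\alpha\tau\delta G,
\end{equation}
which is the claimed bound (in fact with a smaller constant, but $15 \le 60$ suffices, and the looser constant leaves slack for the way $\tau$ and the uniform bounds are used in the subsequent lemma bounding $\mc{C}_1$).

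There is no real obstacle here: the statement is a deterministic consequence of the uniform bound on $h_t$ and the non-expansiveness of projection, and the only thing to be slightly careful about is that $\theta_t \in \mc{B}$ for every $t$ in the range (which holds since each $\theta_t$ with $t\geq 1$ is a projection onto $\mc{B}$, and $\theta_0\in\mc{B}$ by the standing convention for the projected variant). This lemma is the ``drift control'' ingredient that will later let us compare $g_t(\theta_t)$ with $g_t(\theta_{t-\tau})$ when we condition $\tau$ steps into the past to exploit geometric mixing in the bound on $\mc{C}_1$.
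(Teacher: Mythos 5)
Your proof is correct, but it takes a genuinely different (and in fact shorter and tighter) route than the paper's. The paper never touches the raw iterate difference directly: it first bounds the one-step drift of the \emph{perturbed} iterate via the recursion $\tilde{\theta}_{t+1}=\tilde{\theta}_t+\alpha g_t(\theta_t)+e_{p,t}$, giving $\Vert \tilde{\theta}_{k+1}-\tilde{\theta}_k\Vert \leq \alpha\Vert g_k(\theta_k)\Vert + \Vert e_{p,k}\Vert \leq 18\alpha\delta G$, telescopes to $\Vert \tilde{\theta}_t-\tilde{\theta}_{t-\tau}\Vert \leq 18\alpha\tau\delta G$, and then converts back to $\theta$ using $\tilde{\theta}_t=\theta_t+\alpha e_{t-1}-e_{p,t}$ together with the uniform bounds on $e_t$ and $e_{p,t}$ from Lemma~\ref{lemma:uniform}, paying an additional $42\alpha\delta G$ to land on the constant $60$. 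You instead work directly with $\theta_t$, using non-expansiveness of $\Pi_{2,\mc{B}}$ (which, as you note, follows from Lemma~\ref{lemma:projection}(b) with $y=\theta_t\in\mc{B}$) and the uniform bound $\Vert h_t\Vert\leq 15\delta G$ from Lemma~\ref{lemma:uniform}(b), obtaining $\Vert\theta_t-\theta_{t-\tau}\Vert\leq 15\alpha\tau\delta G$ in two lines. Both arguments ultimately rest on the same uniform bounds of Lemma~\ref{lemma:uniform}; yours avoids the detour through $\tilde{\theta}$ entirely and yields a sharper constant, while the paper's version has the mild advantage that the intermediate bound on $\Vert\tilde{\theta}_t-\tilde{\theta}_{t-\tau}\Vert$ is itself reusable. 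Your caveat that $\theta_0\in\mc{B}$ (so that $\Pi_{2,\mc{B}}(\theta_t)=\theta_t$ for every $t$ in the telescoping range) is the right one to flag, and it is consistent with the paper's standing use of $\Vert\theta_t\Vert\leq G$ for all $t\geq 0$.
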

\begin{proof}
Based on~\eqref{eqn:proj_perturb}, observe that
\begin{equation}
    \begin{aligned}
    \Vert \tilde{\theta}_{t+1}-\tilde{\theta}_{t} \Vert &\leq \alpha \Vert g_t(\theta_t) \Vert + \Vert e_{p,t} \Vert \\
    &\leq 3\alpha G + 15 \alpha \delta G\\
    &\leq 18 \alpha \delta G.
    \end{aligned}
\label{eqn:incre_bnd}
\end{equation}
We also note that
$$ \tilde{\theta}_t - \tilde{\theta}_{t-\tau} = \sum_{k=t-\tau}^{t-1} \left(\tilde{\theta}_{k+1}-\tilde{\theta}_k\right). $$
Based on~\eqref{eqn:incre_bnd}, we then immediately have
\begin{equation}
    \begin{aligned}
    \Vert \tilde{\theta}_t - \tilde{\theta}_{t-\tau} \Vert &\leq \sum_{k=t-\tau}^{t-1} \Vert \tilde{\theta}_{k+1}-\tilde{\theta}_k\Vert \\
    &\leq \sum_{k=t-\tau}^{t-1} (18\alpha \delta G)\\
    &\leq 18 \alpha \tau \delta G.
    \end{aligned}
\label{eqn:incre_bnd2}
\end{equation}
Our goal is to now relate the above bound on $\Vert \tilde{\theta}_t - \tilde{\theta}_{t-\tau} \Vert$ to one on $\Vert {\theta}_t - {\theta}_{t-\tau} \Vert$. To that end, observe that
\begin{equation}
    \begin{aligned}
    \tilde{\theta}_t &= \theta_t+\alpha e_{t-1}-e_{p,t},\\
    \tilde{\theta}_{t-\tau} &= \theta_{t-\tau}+\alpha e_{t-\tau-1}-e_{p,t-\tau}.
    \end{aligned}
\nonumber
\end{equation}
This gives us exactly what we need:
\begin{equation}
\begin{aligned}
\Vert {\theta}_t - {\theta}_{t-\tau} \Vert &\leq \Vert \tilde{\theta}_t - \tilde{\theta}_{t-\tau} \Vert + \alpha \Vert e_{t-1}-e_{t-\tau-1} \Vert + \Vert e_{p,t-\tau}-e_{p,t}\Vert\\
&\leq \Vert \tilde{\theta}_t - \tilde{\theta}_{t-\tau} \Vert + \alpha \Vert e_{t-1}\Vert + \alpha \Vert e_{t-\tau-1} \Vert + \Vert e_{p,t-\tau}\Vert + \Vert e_{p,t}\Vert\\
&\leq 18 \alpha \tau \delta G + 12 \alpha \delta G + 30 \alpha \delta G \leq 60 \alpha \tau \delta G,
\end{aligned}
\end{equation}
where we used~\eqref{eqn:incre_bnd2}, parts (a) and (c) of Lemma \ref{lemma:uniform}, and assumed that the mixing time $\tau \geq 1$ to state the bounds more cleanly. This completes the proof.
\end{proof}

We now turn towards establishing an upper bound on the term $\mc{C}_1$ in~\eqref{eqn:decomp}. 

\begin{lemma}
\label{lemma:mainMarkov} Suppose Assumption~\ref{ass:Markov} holds. 
For the dynamics in~\eqref{eqn:governeqMarkov}, the following then holds $\forall t\geq \tau:$
$$ \mathbb{E}\left[2 \alpha \langle \tilde{\theta}_t - \theta^*, {g}_t(\theta_t) \rangle \right] \leq -2\alpha (1-\gamma) \mathbb{E}\left[\Vert V_{\theta_t}-V_{\theta^*}\Vert^2_D\right] + 1454 \alpha^2 \delta \tau G^2.$$
\end{lemma}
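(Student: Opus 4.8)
\textbf{Proof plan for Lemma \ref{lemma:mainMarkov}.} This is the only place where the temporal correlations of the Markov chain must be tamed, and the strategy follows the ``condition on the distant past'' idea of \cite{srikant}, adapted to the perturbed iterate. First I would peel off the perturbation,
\[
\langle \tilde{\theta}_t - \theta^*, g_t(\theta_t)\rangle = \langle \theta_t - \theta^*, g_t(\theta_t)\rangle + \langle \tilde{\theta}_t - \theta_t, g_t(\theta_t)\rangle,
\]
and since $\tilde{\theta}_t - \theta_t = \alpha e_{t-1} - e_{p,t}$, the uniform bounds $\Vert e_{t-1}\Vert \le 6\delta G$ and $\Vert e_{p,t}\Vert \le 15\alpha\delta G$ from Lemma \ref{lemma:uniform}, together with $\Vert g_t(\theta_t)\Vert \le 3G$ from Eq.~\eqref{eqn:unigrad}, make the second inner product $O(\alpha\delta G^2)$ deterministically. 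For the first inner product I would split off the mean field, $\langle \theta_t - \theta^*, g_t(\theta_t)\rangle = \langle \theta_t - \theta^*, \bar g(\theta_t)\rangle + \langle \theta_t - \theta^*, g_t(\theta_t) - \bar g(\theta_t)\rangle$, and bound the first term by $-(1-\gamma)\Vert \hat V_{\theta_t} - \hat V_{\theta^*}\Vert_D^2$ via Lemma \ref{lemma:convex}. This already produces the negative drift in the claim, so everything remaining must be shown to be $O(\alpha\tau\delta G^2)$ in expectation.

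The heart of the argument is the ``noise'' term $N_t := \langle \theta_t - \theta^*, g_t(\theta_t) - \bar g(\theta_t)\rangle$. Let $\mc{F}_j$ denote the $\sigma$-algebra generated by $X_0,\dots,X_j$. I would introduce the frozen iterate $\theta_{t-\tau}$, which is $\mc{F}_{t-\tau-1}$-measurable (it is built from $X_0,\dots,X_{t-\tau-1}$), and telescope:
\[
N_t = \underbrace{\langle \theta_t - \theta^*, g_t(\theta_t) - g_t(\theta_{t-\tau})\rangle}_{T_1} + \underbrace{\langle \theta_t - \theta_{t-\tau}, g_t(\theta_{t-\tau}) - \bar g(\theta_{t-\tau})\rangle}_{T_2} + \underbrace{\langle \theta_{t-\tau} - \theta^*, g_t(\theta_{t-\tau}) - \bar g(\theta_{t-\tau})\rangle}_{T_3} + \underbrace{\langle \theta_t - \theta^*, \bar g(\theta_{t-\tau}) - \bar g(\theta_t)\rangle}_{T_4}.
\]
Terms $T_1$ and $T_4$ are controlled purely by Lipschitzness --- Lemma \ref{lemma:rndsmoothness} (constant $2$) for $g_t$, Lemma \ref{lemma:smoothness} (constant $1$) for $\bar g$ --- combined with $\Vert \theta_t - \theta^*\Vert \le 2G$ and the drift bound $\Vert \theta_t - \theta_{t-\tau}\Vert \le 60\alpha\tau\delta G$ of Lemma \ref{lemma:drift_bnd}; both are $O(\alpha\tau\delta G^2)$ deterministically. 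Term $T_2$ is bounded by $\Vert \theta_t - \theta_{t-\tau}\Vert\,(\Vert g_t(\theta_{t-\tau})\Vert + \Vert \bar g(\theta_{t-\tau})\Vert)$, where the second factor is $O(G)$ by Eq.~\eqref{eqn:unigrad}, Lemma \ref{lemma:gradbnd}, and Lemma \ref{lemma:norm}, again giving $O(\alpha\tau\delta G^2)$.

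The only genuinely stochastic piece is $T_3$, and this is where the mixing property enters. I would take $\mathbb{E}[\,\cdot\mid\mc{F}_{t-\tau}]$: since $\theta_{t-\tau}$ and $\theta^*$ are $\mc{F}_{t-\tau}$-measurable and, by the Markov property, $\mathbb{E}[g(X_t,\theta)\mid\mc{F}_{t-\tau}]$ depends only on $X_{t-\tau}$, freezing $\theta=\theta_{t-\tau}$ gives
\[
\mathbb{E}[T_3\mid\mc{F}_{t-\tau}] = \big\langle \theta_{t-\tau}-\theta^*,\ \mathbb{E}[g(X_t,\theta_{t-\tau})\mid X_{t-\tau}] - \bar g(\theta_{t-\tau})\big\rangle.
\]
Because the lag is exactly $t-(t-\tau)=\tau=\tau_\alpha$ and the chain is time-homogeneous, Definition \ref{def:mix} applies with $\epsilon=\alpha$, so the second factor has norm at most $\alpha(\Vert\theta_{t-\tau}\Vert+1)\le 2\alpha G$; with $\Vert\theta_{t-\tau}-\theta^*\Vert\le 2G$ this yields $\mathbb{E}[T_3]=O(\alpha G^2)\le O(\alpha\tau\delta G^2)$ after taking total expectation. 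Assembling $T_1$--$T_4$, the mismatch term, and the mean-field term, then multiplying through by $2\alpha$ and absorbing the $O(\alpha^2\delta G^2)$ pieces into $O(\alpha^2\tau\delta G^2)$ using $\tau,\delta\ge1$, produces the stated inequality, with the explicit constant $1454$ coming from bookkeeping the constants above. \textbf{The main obstacle} is $T_3$: getting the conditional-expectation and measurability structure exactly right --- that $\theta_{t-\tau}$ is genuinely frozen, that the Markov property collapses the conditioning to $X_{t-\tau}$, and that the lag is precisely $\tau$ so that Definition \ref{def:mix} is applicable --- while keeping $\Vert\theta_t-\theta_{t-\tau}\Vert$ (which simultaneously feeds $T_1$, $T_2$, $T_4$) under control; the remaining estimates are routine Cauchy--Schwarz steps built on the uniform bounds already in hand.
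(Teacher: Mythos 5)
Your proposal is correct and follows essentially the same route as the paper's proof: peel off the perturbation $\tilde{\theta}_t-\theta_t=\alpha e_{t-1}-e_{p,t}$ using the uniform bounds of Lemma \ref{lemma:uniform}, extract the negative drift via Lemma \ref{lemma:convex}, and control the noise term by recentering both arguments of the inner product at the frozen iterate $\theta_{t-\tau}$ so that only the single term $\langle \theta_{t-\tau}-\theta^*, g_t(\theta_{t-\tau})-\bar g(\theta_{t-\tau})\rangle$ needs the mixing property, with all remaining pieces handled by Lemmas \ref{lemma:smoothness}, \ref{lemma:rndsmoothness}, and \ref{lemma:drift_bnd}. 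Your four-term telescoping is organized slightly differently from the paper's nested two-stage split, but it isolates the identical mixing term and uses the identical ingredients, so the two arguments are equivalent.
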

\begin{proof}
Let us first focus on bounding $T \triangleq \langle \theta_t - \theta^*, g_t(\theta_t)-\bar{g}(\theta_t) \rangle$. Trivially, observe that
$$ T=\underbrace{\langle \theta_t-\theta_{t-\tau}, g_t(\theta_t)-\bar{g}(\theta_t)\rangle}_{T_1} + \underbrace{\langle \theta_{t-\tau}-\theta^*, g_t(\theta_t)-\bar{g}(\theta_t)\rangle}_{T_2}. $$

To bound $T_1$, we recall from Lemma~\ref{lemma:smoothness} that for all $\theta_1, \theta_2 \in \mathbb{R}^K$, it holds that
$$ \Vert \bar{g}(\theta_1)-\bar{g}(\theta_2) \Vert \leq \Vert \theta_1-\theta_2\Vert.$$
Since $\bar{g}(\theta^*)=0$, the above inequality immediately implies that $\Vert \bar{g}(\theta) \Vert \leq \Vert \theta \Vert + \Vert \theta^*\Vert, \forall \theta \in \mathbb{R}^K$. In particular, for any $\theta \in \mc{B}$, we then have that $\Vert \bar{g}(\theta) \Vert \leq 2G$ (since $\theta^* \in \mc{B}$). Using the bound on $\Vert \theta_t - \theta_{t-\tau} \Vert$ from Lemma \ref{lemma:drift_bnd}, and the uniform bound on the noisy \texttt{TD}(0) update direction from~\eqref{eqn:unigrad}, we then obtain
\begin{equation}
\begin{aligned}
    T_1 &\leq \left(\Vert \theta_{t}-\theta_{t-\tau} \Vert \right) \left(\Vert g_t(\theta_{t})-\bar{g}(\theta_t) \Vert \right)\\
    &\leq \left(\Vert \theta_{t}-\theta_{t-\tau} \Vert \right) \left(\Vert g_t(\theta_t) \Vert + \Vert \bar{g}(\theta_t) \Vert \right)\\
    &\leq 60\alpha\tau \delta G \left(3G+2G\right) = 300 \alpha \tau \delta G^2. 
\end{aligned}
\label{eqn:T1}
\end{equation}

To bound $T_2$, we further split it into two parts as follows:
$$ T_2=\underbrace{\langle \theta_{t-\tau} -\theta^*, g_t(\theta_{t-\tau})-\bar{g}(\theta_{t-\tau})\rangle}_{T_{21}} + \underbrace{\langle \theta_{t-\tau} -\theta^*, g_t(\theta_t)-{g}_t(\theta_{t-\tau}) + \bar{g}(\theta_{t-\tau})-\bar{g}(\theta_t)\rangle}_{T_{22}}. $$

To bound $T_{22}$, we will exploit the Lipschitz property of the \texttt{TD}(0) update directions in tandem with Lemma \ref{lemma:drift_bnd}. Specifically, observe that:
\begin{equation}
\begin{aligned}
    T_{22} &\leq \Vert \theta_{t-\tau} -\theta^* \Vert \left(\Vert g_t(\theta_t)-{g}_t(\theta_{t-\tau}) \Vert + \Vert \bar{g}(\theta_{t-\tau})-\bar{g}(\theta_t) \Vert\right)\\
    &\overset{(a)}\leq 2G \left(\Vert g_t(\theta_t)-{g}_t(\theta_{t-\tau}) \Vert + \Vert \bar{g}(\theta_{t-\tau})-\bar{g}(\theta_t) \Vert\right)\\
    &\overset{(b)}\leq 6G \Vert \theta_t - \theta_{t-\tau} \Vert\\
    &\overset{(c)}\leq 360 \alpha \tau \delta G^2.
    \end{aligned}
\label{eqn:T22}
\end{equation}
In the above steps, (a) follows from projection; (b) follows from Lemmas \ref{lemma:smoothness} and \ref{lemma:rndsmoothness}; and (c) follows from Lemma \ref{lemma:drift_bnd}. It remains to bound $T_{21}$. This is precisely the only place in the entire proof that we will use the geometric mixing property of the Markov chain in Definition \ref{def:mix}. We proceed as follows.
\begin{equation}
\begin{aligned}
    \mathbb{E}\left[T_{21}\right] &= \mathbb{E}\left[\langle \theta_{t-\tau} -\theta^*, g_t(\theta_{t-\tau})-\bar{g}(\theta_{t-\tau})\rangle\right]\\
    &=\mathbb{E}\left[\mathbb{E}\left[\langle \theta_{t-\tau} -\theta^*, g_t(\theta_{t-\tau})-\bar{g}(\theta_{t-\tau})\rangle | \theta_{t-\tau}, X_{t-\tau}\right]\right]\\
    &=\mathbb{E}\left[\langle \theta_{t-\tau} -\theta^*, \mathbb{E}\left[ g_t(\theta_{t-\tau})-\bar{g}(\theta_{t-\tau})| \theta_{t-\tau}, X_{t-\tau}\right]\rangle\right]\\
    &\leq \mathbb{E}\left[\Vert \theta_{t-\tau} -\theta^*\Vert \Vert \mathbb{E}\left[ g_t(\theta_{t-\tau})-\bar{g}(\theta_{t-\tau})| \theta_{t-\tau}, X_{t-\tau}\right]\Vert\right]\\
    &\overset{(a)}\leq 2\alpha G \left(\mathbb{E}\left[\Vert \theta_{t-\tau} -\theta^*\Vert\right]\right)\\
    &\leq 4 \alpha G^2,
\end{aligned}
\end{equation}
where (a) follows from the definition of the mixing time $\tau$. Combining the above bound with those in equations \ref{eqn:T1} and \ref{eqn:T22}, we obtain 
\begin{equation}
    \mathbb{E}\left[T\right] \leq 664 \alpha \tau \delta G^2,
\label{eqn:boundT}
\end{equation}
where we used $\tau \geq 1$ and $\delta \geq 1$. 

We can now go back to bounding $\mc{C}_1$ as follows: 
\begin{equation}
    \begin{aligned}
    \mc{C}_1 &= 2\alpha \langle \theta_t-\theta^*+\alpha e_{t-1}-e_{p,t}, g_t(\theta_t) \rangle \\
    &= 2\alpha \langle \theta_t - \theta^*, g_t(\theta_t) \rangle + 2\alpha^2 \langle e_{t-1}, g_t(\theta_t) \rangle - 2\alpha \langle e_{p,t}, g_t(\theta_t) \rangle\\
    &\leq 2\alpha \langle \theta_t - \theta^*, g_t(\theta_t) \rangle + 2\alpha^2 \Vert e_{t-1} \Vert \Vert g_t(\theta_t) \Vert + 2\alpha \Vert e_{p,t} \Vert \Vert g_t(\theta_t) \Vert\\
    &\leq 2\alpha \langle \theta_t - \theta^*, g_t(\theta_t) \rangle + 126 \alpha^2 \delta G^2,
    \end{aligned}
\end{equation}
where we used~\eqref{eqn:unigrad}, and parts (a) and (c) of Lemma~\ref{lemma:uniform}. We continue as follows:
$$ \mc{C}_1 \leq 2\alpha \langle \theta_t -\theta^*, \bar{g}(\theta_t) \rangle + 2\alpha T++ 126 \alpha^2 \delta G^2.$$
Using Lemma \ref{lemma:convex} and the bound we derived on $T$ in~\eqref{eqn:boundT}, we finally obtain
$$ \mathbb{E}\left[\mc{C}_1\right] \leq -2\alpha (1-\gamma) \mathbb{E}\left[\Vert \hat{V}_{\theta_t}-\hat{V}_{\theta^*}\Vert^2_D\right] + 1454 \alpha^2 \delta \tau G^2,$$
where in the last step, we used Lemma~\ref{lemma:convex}.
\end{proof}

We can now complete the proof of Theorem~\ref{thm:Markovproj}.

\begin{proof} (\textbf{Proof of Theorem \ref{thm:Markovproj}}) We combine the bounds derived previously on the terms $\mc{C}_1$-$\mc{C}_4$ in Lemmas~\ref{lemma:uniform}, \ref{lemma:C3bnd}, and \ref{lemma:mainMarkov} to obtain that $\forall t\geq \tau$, 
\begin{equation}
\begin{aligned}
     \mathbb{E} \left[\Vert \tilde{\theta}_{t+1}-\theta^* \Vert^2_2 \right] &\leq \mathbb{E} \left[\Vert \tilde{\theta}_t-\theta^* \Vert^2_2 \right] -2\alpha (1-\gamma) \mathbb{E}\left[\Vert \hat{V}_{\theta_t}-\hat{V}_{\theta^*}\Vert^2_D\right] + 1454 \alpha^2 \delta \tau G^2\\ 
     &\hspace{2mm} + 9 \alpha^2 G^2 + 45 \alpha^2 \delta^2 G^2 + 225 \alpha^2 \delta^2 G^2 \\
     & \leq \mathbb{E} \left[\Vert \tilde{\theta}_t-\theta^* \Vert^2_2 \right] -2\alpha (1-\gamma) \mathbb{E}\left[\Vert \hat{V}_{\theta_t}-\hat{V}_{\theta^*}\Vert^2_D\right] + 1733 \alpha^2 \delta^2 \tau G^2. 
\end{aligned}
\label{eqn:markovfinalbnd1}
\end{equation}

To proceed, we need to relate  $\Vert \hat{V}_{{\theta}_t} - \hat{V}_{\theta^*} \Vert^2_D$ to $\Vert \hat{V}_{\tilde{\theta}_t} - \hat{V}_{\theta^*} \Vert^2_D$. We do so by using the fact that for any $x, y \in \mathbb{R}^n$, it holds that $\Vert x + y \Vert^2_D \leq 2 \Vert x \Vert^2_D + 2 \Vert y \Vert^2_D.$ This yields:
\begin{equation}
\begin{aligned}
    \Vert \hat{V}_{\tilde{\theta}_t} - \hat{V}_{\theta^*} \Vert^2_D &\leq 2 \Vert \hat{V}_{{\theta}_t} - \hat{V}_{\theta^*} \Vert^2_D + 2 \Vert \hat{V}_{\tilde{\theta}_t} - \hat{V}_{\theta_t}  \Vert^2_D \\
    &\overset{(a)}\leq 2 \Vert \hat{V}_{{\theta}_t} - \hat{V}_{\theta^*} \Vert^2_D + 2 \Vert {\tilde{\theta}_t} - {\theta_t}  \Vert^2,
\end{aligned}
\label{eqn:reduced_bnd}
\end{equation}
where for (a), we used Lemma~\ref{lemma:norm}. 
 We thus have
\begin{equation}
    \begin{aligned}
    - 2\alpha (1-\gamma) \Vert \hat{V}_{{\theta}_t} - \hat{V}_{\theta^*} \Vert^2_D &\leq - \alpha (1-\gamma) \Vert \hat{V}_{\tilde{\theta}_t} - \hat{V}_{\theta^*} \Vert^2_D + 2\alpha (1-\gamma) \Vert \tilde{\theta}_t -\theta_t \Vert^2\\
    & \leq - \alpha (1-\gamma) \Vert \hat{V}_{\tilde{\theta}_t} - \hat{V}_{\theta^*} \Vert^2_D + 2\alpha (1-\gamma) \Vert \alpha e_{t-1}-e_{p,t} \Vert^2\\
    &\leq - \alpha (1-\gamma) \Vert \hat{V}_{\tilde{\theta}_t} - \hat{V}_{\theta^*} \Vert^2_D + 4\alpha (1-\gamma) \left(\alpha^2 \Vert e_{t-1} \Vert^2 + \Vert e_{p,t} \Vert^2\right)\\
    &\leq - \alpha (1-\gamma) \Vert \hat{V}_{\tilde{\theta}_t} - \hat{V}_{\theta^*} \Vert^2_D + 1044 \alpha^3 (1-\gamma)\delta^2G^2, 
    \end{aligned}
\end{equation}
where we used Lemma~\ref{lemma:uniform}. Plugging the above bound back in~\eqref{eqn:markovfinalbnd1} yields:
\begin{equation}
\begin{aligned}
\mathbb{E} \left[\Vert \tilde{\theta}_{t+1}-\theta^* \Vert^2_2 \right] &\leq \mathbb{E} \left[\Vert \tilde{\theta}_t-\theta^* \Vert^2_2 \right] -\alpha (1-\gamma)  \mathbb{E}\left[\Vert \hat{V}_{\tilde{\theta}_t}-\hat{V}_{\theta^*}\Vert^2_D\right] + 2777 \alpha^2 \delta^2 \tau G^2\\
& \leq \left(1-\alpha \omega (1-\gamma)\right) \mathbb{E} \left[\Vert \tilde{\theta}_t-\theta^* \Vert^2_2 \right] + 2777 \alpha^2 \delta^2 \tau G^2, \forall t \geq \tau,
\end{aligned}
\end{equation}
where we used Lemma \ref{lemma:norm} in the last step. Unrolling the above recursion starting from $t=\tau$, we obtain 
\begin{equation}
\begin{aligned}
    \mathbb{E} \left[\Vert \tilde{\theta}_{T}-\theta^* \Vert^2_2 \right] &\leq \left(1-\alpha \omega (1-\gamma)\right)^{T-\tau} \mathbb{E} \left[\Vert \tilde{\theta}_{\tau}-\theta^* \Vert^2_2 \right] + 2777 \alpha^2 \delta^2 \tau G^2 \sum_{k=0}^{\infty} \left(1-\alpha \omega (1-\gamma)\right)^k \\
    &=\left(1-\alpha \omega (1-\gamma)\right)^{T-\tau} \mathbb{E} \left[\Vert \tilde{\theta}_{\tau}-\theta^* \Vert^2_2 \right] + 2777 \frac{\alpha \tau \delta^2 G^2}{\omega (1-\gamma)}.
\end{aligned}
\label{eqn:Markovfinalform}
\end{equation}

We now make use of the following equation twice.
$$ \tilde{\theta}_t = \theta_t+\alpha e_{t-1}-e_{p,t}.$$
First, setting $t=\tau$ in the above equation, subtracting $\theta^*$ from both sides, and then simplifying, we observe that
$$ \Vert \tilde{\theta}_{\tau}-\theta^* \Vert \leq \Vert \theta_{\tau} - \theta^* \Vert +\alpha \Vert e_{\tau-1 }\Vert +\Vert e_{p,\tau} \Vert = O\left(\alpha \delta G + G\right),$$
where we invoked Lemma \ref{lemma:uniform}. Thus, 
$$ \mathbb{E} \left[\Vert \tilde{\theta}_{\tau}-\theta^* \Vert^2_2 \right] = O\left(\alpha^2\delta^2 G^2+G^2\right).$$ 
Using similar arguments as above, one can also show that 
$$ \Vert \theta_T-\theta^* \Vert^2 \leq 3 \Vert  \tilde{\theta}_T-\theta^* \Vert^2 +3 \alpha^2 \Vert e_{t-1} \Vert^2 + 3 \Vert e_{p,t} \Vert^2 \leq 3 \Vert \tilde{\theta}_T-\theta^* \Vert^2 + O(\alpha^2 \delta^2 G^2).$$
Plugging the two bounds we derived above in~\eqref{eqn:Markovfinalform} completes the proof. 
\end{proof}

\newpage

\section{Analysis of \texttt{EF-TD} without Projection: Proof of Theorem~\ref{thm:Markov} and Theorem~\ref{thm:nonlin}}
\label{app:Markovproof}
In this section, we will prove Theorem~\ref{thm:Markov}. In particular, via a finer analysis relative to that in Appendix~\ref{app:Markov_proj}, we will (i) show that the iterates generated by \texttt{EF-TD} remain bounded without the need for an explicit projection step to make this happen; and (ii) obtain a tighter bound w.r.t. the distortion parameter $\delta$. At this stage, we remind the reader about the dynamics we are interested in analyzing:

\begin{equation}
    \begin{aligned}
    h_t &= \mc{Q}_{\delta}\left(e_{t-1}+{g}_t(\theta_t)\right),\\
    \theta_{t+1} &=\theta_t + \alpha h_t, \\
    e_t &= e_{t-1}+{g}_t(\theta_t)-h_t.
\label{eqn:governeqsmp}\end{aligned}
\end{equation}

To proceed with the analysis of the above dynamics, let us define the perturbed iterate $\tilde{\theta}_t \triangleq \theta_t + \alpha e_{t-1}.$ Using~\eqref{eqn:governeqsmp}, we then obtain:
\begin{equation}
    \begin{aligned}
    \tilde{\theta}_{t+1} &= \theta_{t+1}+\alpha e_t \\
                        &= \theta_t + \alpha h_t + \alpha \left(e_{t-1}+ {g}_t(\theta_t) - h_t \right)\\
                        &=\tilde{\theta}_t + \alpha {g}_t(\theta_t).\\
\label{eqn:perturb}
    \end{aligned}
\end{equation}
The final recursion above looks almost like the \texttt{TD}(0) update, other than the fact that $g_t(\theta_t)$ is evaluated at $\theta_t$, and not $\tilde{\theta}_t$. To account for this ``mismatch" introduced by the memory-variable $e_{t-1}$, we will analyze the following composite Lyapunov function: 
\begin{equation}
    \psi_t \triangleq \mathbb{E} [\tilde{d}^2_t + \alpha^2 \Vert e_{t-1} \Vert^2], \hspace{1mm} \textrm{where} \hspace{1mm} \tilde{d}^2_t=\Vert \tilde{\theta}_t - \theta^* \Vert^2. 
\label{eqn:Lyapnn}
\end{equation}
Note that the above energy function captures the \textit{joint dynamics} of the perturbed iterate and the memory variable. Our goal is to prove that this energy function decays exponentially over time (up to noise terms). To that end, we start by establishing a bound on $\tilde{d}^2_{t+1}$ in the following lemma. 

\begin{lemma} (\textbf{Bound on Perturbed Iterate})
\label{lemma:perturbbnd1} 
Suppose the step-size $\alpha$ satisfies $\alpha \leq 1/12$. For the \texttt{EF-TD} algorithm, the following bound then holds for $\forall t \geq 0$:\footnote{The requirement that $\alpha \leq 1/12$ is not necessary to obtain the type of bound in~\eqref{eqn:nnLyapbnd1}. Instead, it only serves to simplify some of the leading constants in the bound.}
\begin{equation}
   \tilde{d}^2_{t+1} \leq \left(1-\alpha \omega (1-\gamma) + 24 \alpha^2 \right) \tilde{d}^2_t + \frac{6 \alpha^3}{\omega (1-\gamma)} \Vert e_{t-1} \Vert^2 + 2 \alpha \langle \tilde{\theta}_t - \theta^*, g_t(\tilde{\theta}_t) - \bar{g}(\tilde{\theta}_t) \rangle + 32 \alpha^2\sigma^2.
\label{eqn:nnLyapbnd1}
\end{equation}
\end{lemma}
\begin{proof}
Subtracting $\theta^*$ from each side of~\eqref{eqn:perturb} and then squaring both sides yields:

\begin{equation}
\begin{aligned}
\tilde{d}^2_{t+1} &= \tilde{d}^2_{t} + 2 \alpha \langle \tilde{\theta}_t - \theta^*, g_t(\theta_t)\rangle + \alpha^2 \Vert g_t(\theta_t) \Vert^2\\
&= \tilde{d}^2_{t} + \underbrace{2 \alpha \langle \tilde{\theta}_t - \theta^*, g_t(\tilde{\theta}_t)\rangle}_{(*)} + \underbrace{2 \alpha \langle \tilde{\theta}_t - \theta^*, g_t({\theta}_t) - g_t(\tilde{\theta}_t) \rangle}_{(**)} + \underbrace{\alpha^2 \Vert g_t(\theta_t) \Vert^2}_{(***)}.
\end{aligned}
\label{eqn:perturb1}
\end{equation}

We now have:
\begin{equation}
\begin{aligned}
(*) &= 2 \alpha \langle \tilde{\theta}_t - \theta^*, \bar{g}(\tilde{\theta}_t)\rangle + 2 \alpha \langle \tilde{\theta}_t - \theta^*, g_t(\tilde{\theta}_t) - \bar{g}(\tilde{\theta}_t) \rangle \\ &\leq - 2 \alpha \omega (1-\gamma) \tilde{d}^2_{t} + 2 \alpha \langle \tilde{\theta}_t - \theta^*, g_t(\tilde{\theta}_t) - \bar{g}(\tilde{\theta}_t) \rangle,
\end{aligned}
\label{eqn:ptb2}
\end{equation}
where in the second step, we invoked Lemmas~\ref{lemma:norm} and~\ref{lemma:convex}. To bound $(**)$, we proceed as follows:
\begin{equation}
\begin{aligned}
(**) & \overset{(a)}\leq 4 \alpha \tilde{d}_{t} \Vert \theta_t - \tilde{\theta}_t \Vert \\
& \overset{(b)} \leq \frac{2\alpha}{\eta} \tilde{d}^2_{t} + 2 \alpha \eta \Vert \theta_t -\tilde{\theta}_t \Vert^2 \\
& \overset{(c)}= \frac{2\alpha}{\eta} \tilde{d}^2_{t} + 2 \alpha^3 \eta \Vert e_{t-1} \Vert^2,
\end{aligned}
\label{eqn:ptb3}
\end{equation}
where $\eta >0$ is a constant to be decided shortly. In the above steps, (a) follows from the Cauchy-Schwarz inequality and the Lipschitz property of the \texttt{TD}(0) update direction in Lemma~\ref{lemma:rndsmoothness}. For (b), we used the fact that for any two scalars $x, y \in \mathbb{R}$, the following holds for all $\eta > 0$,
$$ xy \leq \frac{1}{2\eta} x^2 + \frac{\eta}{2} y^2. $$
Finally, for (c), we simply used the fact that $\tilde{\theta}_t-\theta_t = \alpha e_{t-1}.$ To bound $(***)$, observe that
\begin{equation}
\begin{aligned}
\Vert g_t(\theta_t) \Vert^2 & \overset{(a)} \leq 4 (\Vert \theta_t \Vert + \sigma)^2 \\
& \leq 8 (\Vert \theta_t \Vert^2 + \sigma^2) \\
& \overset{(b)} \leq 8 \left(3 \Vert \theta_t - \tilde{\theta}_t \Vert^2 + 3 \Vert \tilde{\theta}_t - \theta^* \Vert^2 + 3 \Vert \theta^* \Vert^2 + \sigma^2 \right)\\
& \overset{(c)} \leq 24 \alpha^2 \Vert e_{t-1} \Vert^2 + 24 \tilde{d}^2_{t} + 32 \sigma^2,
\end{aligned}
\label{eqn:ptb4}
\end{equation}
where (a) follows from Lemma~\ref{lemma:gradbnd2}, (b) follows from~\eqref{eqn:Jensens}, and (c) follows from noting that $\Vert \theta^* \Vert \leq \sigma.$
Plugging the bounds in equations~\ref{eqn:ptb2}, \ref{eqn:ptb3}, and \ref{eqn:ptb4} in~\eqref{eqn:perturb1}, we obtain:
\begin{equation}
\begin{aligned}
\tilde{d}^2_{t+1} &\leq \left(1-2\alpha \omega (1-\gamma) + \frac{2\alpha}{\eta}+ 24 \alpha^2 \right) \tilde{d}^2_{t} + 2 \alpha^3 (\eta+12 \alpha) \Vert e_{t-1} \Vert^2 + 2 \alpha \langle \tilde{\theta}_t - \theta^*, g_t(\tilde{\theta}_t) - \bar{g}(\tilde{\theta}_t) \rangle \\
& \hspace{2mm} + 32 \alpha^2 \sigma^2.
\end{aligned}
\end{equation}
The result follows from setting $\eta = \frac{2}{\omega (1-\gamma)}$, and simplifying using $\alpha \leq 1/12$. 
\end{proof}

Unlike the standard \texttt{TD}(0) analysis, we note from Lemma~\ref{lemma:perturbbnd1} that the distance to optimality of the iterates is intimately coupled with the magnitude of the memory variable $e_{t}$. As such, to proceed, we need to bound the growth of this memory variable. We do so in the following lemma. 

\begin{lemma} 
\label{lemma:nnmemory}
(\textbf{Bound on Memory Variable}) For the \texttt{EF-TD} algorithm, the following bound holds for $\forall t \geq 0$:
\begin{equation}
    \Vert e_t \Vert^2 \leq \left(1-\frac{1}{2\delta}+16\alpha^2 \delta \right) \Vert e_{t-1} \Vert^2 + 64 \delta \tilde{d}^2_{t}+96 \delta \sigma^2. 
\label{eqn:nnmemorybnd}
\end{equation}
\end{lemma}
\begin{proof}
We begin as follows:
\begin{equation}
    \begin{aligned}
    \Vert e_t \Vert^2 &= \Vert e_{t-1}+{g}_t(\theta_t) - h_t \Vert^2 \\
     &= \Vert e_{t-1}+{g}_t(\theta_t) - \mc{Q}_{\delta}\left(e_{t-1}+{g}_t(\theta_t)\right) \Vert^2 \\
    &\overset{(a)} \leq \left(1-\frac{1}{\delta} \right) \Vert e_{t-1}+{g}_t(\theta_t) \Vert^2 \\
    &\overset{(b)} \leq \left(1-\frac{1}{\delta} \right) \left(1+\frac{1}{\eta} \right) \Vert e_{t-1} \Vert^2 + \left(1-\frac{1}{\delta} \right) \left(1+ \eta \right)   \Vert {g}_t(\theta_t) \Vert^2,
    \end{aligned}
\end{equation}
where (a) follows from the contraction property of $\mc{Q}_{\delta}(\cdot)$ in~\eqref{eqn:compressor}, (b) makes use of the relaxed triangle inequality in~\eqref{eqn:rel_triangle}, and $\eta >0$ is a constant to be chosen by us shortly. To ensure that $\Vert e_t \Vert$ contracts over time, we set $\eta= \delta-1$ to obtain:
\begin{equation}
    \begin{aligned}
    \Vert e_t \Vert^2 &\leq \left(1-\frac{1}{2\delta} \right) \Vert e_{t-1} \Vert^2 + 2 \delta \Vert {g}_t(\theta_t) \Vert^2\\
    &\leq \left(1-\frac{1}{2\delta} \right) \Vert e_{t-1} \Vert^2 + 2 \delta \Vert {g}_t(\theta_t) - {g}_t(\tilde{\theta}_t) + {g}_t(\tilde{\theta}_t) \Vert^2\\
    &\leq \left(1-\frac{1}{2\delta} \right) \Vert e_{t-1} \Vert^2 + 4 \delta \Vert {g}_t(\theta_t) - {g}_t(\tilde{\theta}_t) \Vert^2 + 4 \delta \Vert {g}_t(\tilde{\theta}_t) \Vert^2\\
    &\overset{(a)}\leq \left(1-\frac{1}{2\delta} \right) \Vert e_{t-1} \Vert^2 + 16 \delta \Vert \theta_t - \tilde{\theta}_t  \Vert^2 + 4 \delta \Vert {g}_t(\tilde{\theta}_t) \Vert^2
    \\
    &= \left(1-\frac{1}{2\delta} + 16 \alpha^2 \delta \right) \Vert e_{t-1} \Vert^2 +  4 \delta \Vert {g}_t(\tilde{\theta}_t) \Vert^2\\
    & \overset{(b)} \leq \left(1-\frac{1}{2\delta} + 16 \alpha^2 \delta \right) \Vert e_{t-1} \Vert^2 +  32 \delta \left( \Vert \tilde{\theta}_t \Vert^2 + \sigma^2 \right)\\
    & \leq \left(1-\frac{1}{2\delta} + 16 \alpha^2 \delta \right) \Vert e_{t-1} \Vert^2 +  32 \delta \left(2 \tilde{d}^2_{t} + 3\sigma^2 \right).
    \end{aligned}
\end{equation}
In the above steps, for (a) we used the Lipschitz property of the noisy \texttt{TD}(0) update direction, and for (b), we appealed to Lemma~\ref{lemma:gradbnd2}. This completes the proof.
\end{proof}

\subsection{Bounding the Drift and Bias Terms}
Inspecting Lemma~\ref{lemma:perturbbnd1}, it is apparent that we need to bound the ``bias" term $\langle \tilde{\theta}_t - \theta^*, g_t(\tilde{\theta}_t) - \bar{g}(\tilde{\theta}_t) \rangle$. This requires some work for the following reasons.

\begin{itemize}
\item In the (compressed) optimization setting, one does not encounter this term since $g_t(\tilde{\theta}_t)$ is an unbiased version of $\bar{g}(\tilde{\theta}_t)$. Thus, taking expectations causes this term to vanish.

\item In the standard analysis of \texttt{TD}(0), while one does encounter such a bias term (under Markovian sampling), such a term features the true iterate $\theta_t$, and not its perturbed version $\tilde{\theta}_t$. This is where we again need to carefully account for the error between $\theta_t$ and $\tilde{\theta}_t$.

\item In Appendix~\ref{app:Markov_proj}, we derived a bound on the bias term by leveraging the uniform bounds on the memory variable in Lemma~\ref{lemma:uniform}. Such uniform bounds were made possible via the projection step. Since we no longer have such a projection step at our disposal, we need an alternate proof technique.
\end{itemize}

In order to bound $\langle \tilde{\theta}_t - \theta^*, g_t(\tilde{\theta}_t) - \bar{g}(\tilde{\theta}_t) \rangle$, we will require a mixing time argument where we condition sufficiently into the past. This, in turn, will create the need to bound the drift $\Vert \tilde{\theta}_t - \tilde{\theta}_{t-\tau} \Vert$ of the perturbed iterate $\tilde{\theta}_t$; here, recall that $\tau$ is the mixing time. In the analysis of vanilla \texttt{TD}(0), the authors in~\cite{srikant} show how such a drift term can be related to the distance to optimality of the (true) iterate at time $t$. The presence of the memory variable $e_{t}$ (that accounts for past errors) makes it hard to establish such a result for our setting. As such, we will now establish a different bound on the drift $\Vert \tilde{\theta}_t - \tilde{\theta}_{t-\tau} \Vert$ as a function of the maximum amplitude of our constructed Lyapunov function (in~\eqref{eqn:Lyapnn}) over the interval $[t-\tau, t]$. In this context, we have the following key result (Lemma~\ref{lemma:shock} in the main body of the paper). 
\begin{lemma} 
\label{lemma:shockapp}
(\textbf{Relating Drift to Past Shocks}) For \texttt{EF-TD}, the following is true $\forall t\geq \tau$:
\begin{equation}
\mathbb{E}[\Vert \tilde{\theta}_t - \tilde{\theta}_{t-\tau}\Vert^2] \leq 12 \alpha^2 \tau^2 \max_{t-\tau \leq \ell \leq t-1} \psi_{\ell} + 48 \alpha^2 \tau^2 \sigma^2.
\label{eqn:shockapp}
\end{equation}
\end{lemma}
\begin{proof}
Starting from~\eqref{eqn:perturb}, observe that 
\begin{equation}
\begin{aligned}
\Vert \tilde{\theta}_{t+1} - \tilde{\theta}_{t} \Vert & \leq \alpha \Vert g_t(\theta_t) \Vert\\
& \leq \alpha \left(\Vert g_t(\theta_t) - g_t(\tilde{\theta}_{t})\Vert +\Vert g_t(\tilde{\theta}_{t})\Vert\right)\\
&\overset{(a)} \leq 2 \alpha \left(\Vert \theta_t - \tilde{\theta}_{t} \Vert + \Vert \tilde{\theta}_t \Vert + \sigma \right) \\
& \leq 2 \alpha \left(\alpha \Vert e_{t-1} \Vert + \Vert \tilde{\theta}_t \Vert + \sigma \right)\\
& \leq 2 \alpha \left(\alpha \Vert e_{t-1} \Vert + \tilde{d}_{t} + 2\sigma \right),
\end{aligned}
\end{equation}
where (a) follows from Lemmas~\ref{lemma:gradbnd2} and \ref{lemma:rndsmoothness}. We thus have
\begin{equation}
\begin{aligned}
\G{\Vert \tilde{\theta}_{t+1} - \tilde{\theta}_{t} \Vert^2} & \leq 12 \alpha^2 \G{ \alpha^2 \Vert e_{t-1} \Vert^2 + \tilde{d}^2_{t} + 4 \sigma^2}\\
&= 12 \alpha^2 \psi_t + 48 \alpha^2 \sigma^2,
\label{eqn:dt_interim}
\end{aligned}
\end{equation}
where for the first step, we used~\eqref{eqn:Jensens}, and for the second, the definition of $\psi_t$ in~\eqref{eqn:Lyapnn}. Appealing to~\eqref{eqn:Jensens} again, observe that
\begin{equation}
\begin{aligned}
\G{\dt^2} & \leq \tau \sum_{\ell=t-\tau}^{t-1} \G{\Vert \tilde{\theta}_{\ell+1} - \tilde{\theta}_{\ell} \Vert^2} \\
& \overset{\eqref{eqn:dt_interim}}\leq 12 \alpha^2 \tau \sum_{\ell=t-\tau}^{t-1} \left( \psi_{\ell} + 4 \sigma^2\right)\\
& \leq 12 \alpha^2 \tau^2 \max_{t-\tau \leq  \ell \leq t-1} \psi_{\ell} + 48 \alpha^2 \tau^2 \sigma^2,
\end{aligned}
\end{equation}
which is the desired claim.
\end{proof}

Interpreting $\psi_{\ell}$ as a ``shock" from time-step $\ell$, Lemma~\ref{lemma:shockapp} tells us that the drift of the perturbed iterate over the interval $[t-\tau, t]$ can be bounded above by the maximum shock over this interval (up to noise terms). Fortunately, the effect of this shock is dampened by the presence of the $O(\alpha^2)$ term multiplying it. Equipped with Lemma~\ref{lemma:shockapp}, we now proceed to bound the bias term. 

\begin{lemma} 
\label{lemma:bias}
(\textbf{Bounding the Bias}) Suppose Assumption~\ref{ass:Markov} holds. Let the step-size $\alpha$ be such that $\alpha \tau \leq 1/6$. For \texttt{EF-TD}, the following is then true $\forall t\geq \tau$:
\begin{equation}
\G{\langle \tilde{\theta}_t - \theta^*, g_t(\tilde{\theta}_t) - \bar{g}(\tilde{\theta}_t) \rangle} \leq 31 \alpha \tau \G{\tilde{d}^2_{t}} + 103 \alpha \tau V_t + 454 \alpha \tau \sigma^2,
\label{eqn:bias_bnd}
\end{equation}
where $$V_t \triangleq \max_{t-\tau \leq  \ell \leq t-1} \psi_{\ell}. $$
\end{lemma}
\begin{proof} We start by decomposing the bias term $T=\langle \tilde{\theta}_t - \theta^*, g_t(\tilde{\theta}_t) - \bar{g}(\tilde{\theta}_t) \rangle$ as follows:

$$T=\underbrace{\langle \tilde{\theta}_t - \tilde{\theta}_{t-\tau}, g_t(\tilde{\theta}_t) - \bar{g}(\tilde{\theta}_t) \rangle}_{T_1} + \underbrace{\langle \tilde{\theta}_{t-\tau} - \theta^*, g_t(\tilde{\theta}_t) - \bar{g}(\tilde{\theta}_t) \rangle}_{T_2}.$$

To bound ${T_1}$, we note that
\begin{equation}
\begin{aligned}
T_1 &\leq \dt \Vert g_t(\tilde{\theta}_t) - \bar{g}(\tilde{\theta}_t) \Vert\\
&\leq  \frac{1}{2\alpha \tau} \dt^2 + \frac{\alpha \tau}{2} \Vert g_t(\tilde{\theta}_t) - \bar{g}(\tilde{\theta}_t) \Vert^2 \\
&\leq  \frac{1}{2\alpha \tau} \dt^2 + {\alpha \tau}\left( \Vert g_t(\tilde{\theta}_t) \Vert^2 + \Vert \bar{g}(\tilde{\theta}_t) \Vert^2  \right)\\
&\overset{(a)}\leq \frac{1}{2\alpha \tau} \dt^2 + 10 \alpha \tau (\Vert \xt \Vert^2+\sigma^2)\\
&\leq \frac{1}{2\alpha \tau} \dt^2 + 10 \alpha \tau (2\tilde{d}^2_{t}+3\sigma^2),\\
\end{aligned}
\end{equation}
where for (a), we used Lemma~\ref{lemma:gradbnd2} and~\eqref{eqn:gradbnd3}. Taking expectations on both sides of the above inequality, and using Lemma~\ref{lemma:shockapp}, we obtain:
\begin{equation}
\G{T_1} \leq 20 \alpha \tau \G{\tilde{d}^2_{t}} + 6 \alpha \tau V_t + 54 \alpha \tau \sigma^2.
\label{eqn:T1MK}
\end{equation}

Next, to bound $T_2$, we decompose it as follows:
$$ T_2 = \underbrace{\langle \tilde{\theta}_{t-\tau} - \theta^*, g_t(\tilde{\theta}_{t-\tau}) - \bar{g}(\tilde{\theta}_{t-\tau}) \rangle}_{(*)} + \underbrace{\langle \tilde{\theta}_{t-\tau} - \theta^*, g_t(\tilde{\theta}_t) - {g}_t(\tilde{\theta}_{t-\tau}) \rangle}_{(**)} + \underbrace{\langle \tilde{\theta}_{t-\tau} - \theta^*,\bar{g}(\tilde{\theta}_{t-\tau}) - \bar{g}(\tilde{\theta}_t) \rangle}_{(***)}.$$

We now proceed to bound each of the three terms above. Observe:
\begin{equation}
\begin{aligned}
    (**) &\leq \tilde{d}_{t-\tau} \Vert g_t(\tilde{\theta}_t) - {g}_t(\tilde{\theta}_{t-\tau}) \Vert \\
         & \overset{(a)}\leq 2 \tilde{d}_{t-\tau} \dt\\
         & \leq 2 (\tilde{d}_{t} + \dt) \dt\\
         & \overset{(b)}\leq 2 \left( \sqrt{\alpha \tau} \tilde{d}_{t} + \frac{\dt}{\sqrt{\alpha \tau}} \right)^2\\
         & \leq 4 \left( {\alpha \tau} \tilde{d}^2_{t} + \frac{\dt^2}{{\alpha \tau}} \right),
\end{aligned}
\end{equation}
where for (a), we used the Lipschitz property in Lemma~\ref{lemma:rndsmoothness}, and for (b), we used the fact that $\alpha \tau \leq 1.$ Taking expectations on each side of the above inequality and appealing to Lemma~\ref{lemma:shockapp}, we obtain
\begin{equation}
\G{(**)} \leq 4 \alpha \tau \G{\tilde{d}^2_{t}} + 48 \alpha \tau V_t + 192 \alpha \tau \sigma^2.
\label{eqn:T21MK}
\end{equation}
Using Lemma~\ref{lemma:smoothness} and the same arguments as above, one can establish the exact same bound on $\G{(***)}$ as in~\eqref{eqn:T21MK}. Before proceeding to bound $(*)$, we make the observation that $\xt$ inherits its randomness from all the Markov data tuples up to time $t-1$, i.e., from $\{X_k\}_{k=0}^{t-1}$. We now have:
\begin{equation}
\begin{aligned}
    \mathbb{E}\left[(*) \right] &= \mathbb{E}\left[\langle \xd -\theta^*, g_t(\xd)-\bar{g}(\xd)\rangle\right]\\
    &=\mathbb{E}\left[\mathbb{E}\left[\langle \xd -\theta^*, g_t(\xd)-\bar{g}(\xd)\rangle| \xd, X_{t-\tau}\right]\right]\\
    &=\mathbb{E}\left[\langle \xd -\theta^*, \mathbb{E}\left[ g_t(\xd)-\bar{g}(\xd)| \xd, X_{t-\tau}\right]\rangle\right]\\
    &\leq \mathbb{E}\left[\tilde{d}_{t-\tau}  \Vert \mathbb{E}\left[ g_t(\xd)-\bar{g}(\xd)| \xd, X_{t-\tau}\right]\Vert\right]\\
    &\overset{(a)} \leq \alpha \G{\tilde{d}_{t-\tau} \left(\Vert \xd \Vert +1\right)}\\
    &\leq \alpha \G{\tilde{d}_{t-\tau} \left(\Vert \xd - \xt \Vert + \yt + \Vert \theta^*\Vert +1\right)}\\
    &\leq \alpha \G{(\tilde{d}_{t} + \dt)\left(\Vert \xd - \xt \Vert + \yt + \Vert \theta^*\Vert +1\right)}\\
    &\leq \alpha \G{(\tilde{d}_{t} + \dt)\left(\Vert \xd - \xt \Vert + \yt + 2 \sigma\right)}\\
    & \overset{(b)} \leq \alpha \tau \G{(\tilde{d}_{t} + \dt + 2\sigma)^2}\\
    &\leq 3 \alpha \tau \G{\yt^2}+ 3\alpha \tau \G{\dt^2} + 12 \alpha \tau \sigma^2\\
    & \overset{(c)} \leq 3 \alpha \tau \G{\yt^2}+ \alpha \tau V_t + 16 \alpha \tau \sigma^2.\\
\end{aligned}
\label{eqn:mixThm1}
\end{equation}
In the above steps, (a) follows from the mixing property in Definition~\ref{def:mix}, (b) follows from the fact that $\tau \geq 1$, and (c) follows by invoking Lemma~\ref{lemma:shockapp} and simplifying using $\alpha \tau \leq 1/6.$ Combining the above bound with that in~\eqref{eqn:T21MK}, we conclude that
$$ E[T_2] \leq 11 \alpha \tau \G{\yt^2}+ 97 \alpha \tau V_t + 400 \alpha \tau \sigma^2.$$

Combining the above bound with that in~\eqref{eqn:T1MK} completes the proof.
\end{proof}
\newpage
We now have all the pieces needed to prove Theorem~\ref{thm:Markov}. 

\begin{proof} (\textbf{Proof of Theorem~\ref{thm:Markov}}) We break up the proof into two parts. In the first step, we establish a recursion for our potential function $\psi_t$. In the second step, we analyze this recursion by making a connection to the analysis of the Incremental Aggregated Gradient (\texttt{IAG}) algorithm in~\cite{IAG}. 

\textbf{Step 1: Establishing a Recursion for} $\psi_t$. Combining the bound on the bias term in Lemma~\ref{lemma:bias} with Lemma~\ref{lemma:perturbbnd1}, and simplifying using $\tau \geq 1$, we obtain the following inequality $\forall t\geq \tau$:
\begin{equation}
\begin{aligned}
   \G{\tilde{d}^2_{t+1}} \leq \left(1-\alpha \omega (1-\gamma) + 86 \alpha^2 \tau \right) \G{\tilde{d}^2_{t}} + 206 \alpha^2 \tau V_t + \frac{6 \alpha^3}{\omega (1-\gamma)} \G{\Vert e_{t-1} \Vert^2} + 940 \alpha^2 \tau^2 \sigma^2.
\nonumber
\end{aligned}
\end{equation}
Combining the above display with the bound on the memory variable in Lemma~\ref{lemma:nnmemory}, and using the definition of the potential function $\psi_t$, we then obtain $\forall t\geq \tau$:
\begin{equation}
\begin{aligned}
\psi_{t+1} &\leq \underbrace{\left(1-\alpha \omega (1-\gamma) + 86 \alpha^2 \tau + 64 \alpha^2 \delta \right)}_{{A}_1} \G{\tilde{d}^2_{t}} + 206 \alpha^2 \tau V_t\\ 
& \hspace{2mm} + \alpha^2 \underbrace{\left(1 - \frac{1}{2\delta} + 16 \alpha^2 \delta + \frac{6 \alpha}{\omega (1-\gamma)}\right)}_{A_2} \G{\Vert e_{t-1} \Vert^2} +  \alpha^2 ( 940 \tau+96\delta) \sigma^2.
\end{aligned}
\end{equation}
Our immediate goal is to pick $\alpha$ such that $\max\{A_1, A_2\} <1$. Accordingly, it is easy to check that if 
\begin{equation}
\alpha \leq \frac{\omega (1-\gamma)}{344 \tau} \hspace{2mm} \textrm{and} \hspace{2mm} \alpha \leq \frac{\omega (1-\gamma)}{256 \delta},
\label{eqn:stp1}
\end{equation}
then 
$$ A_1 \leq 1-\frac{\alpha\omega (1-\gamma)}{2}.$$
It is also easily verified that with the choice of step-size in~\eqref{eqn:stp1}, the following hold:
$$ 16 \alpha^2 \delta \leq \frac{1}{8\delta} \hspace{2mm} \textrm{and} \hspace{2mm} \frac{6 \alpha}{\omega (1-\gamma)} \leq \frac{1}{8\delta},$$
implying that
$$ A_2 \leq 1-\frac{1}{4\delta}.$$ Finally, note that based on the choice of $\alpha$ in~\eqref{eqn:stp1}, we have
$$ 1-\frac{1}{4\delta} \leq 1-\frac{\alpha\omega (1-\gamma)}{2}.$$ Combining all the above observations, we obtain that for all $t\geq \tau$:
\begin{equation}
\boxed{
\psi_{t+1} \leq \left(1-\frac{\alpha\omega (1-\gamma)}{2}\right) \psi_t + 206 \alpha^2 \tau \left( \max_{t-\tau \leq \ell \leq t} \psi_{\ell} \right) + O(\alpha^2 (\tau+\delta) \sigma^2).}
\label{eqn:recursion}
\end{equation}

If the second term (i.e., the ``max" term) in the above bound were absent, one could easily unroll the resulting recursion and argue linear convergence to a noise ball. In what follows, we will show that one can still establish such linear convergence guarantees for~\eqref{eqn:recursion}. 

\textbf{Step 2: Analyzing~\eqref{eqn:recursion} via a connection to the \texttt{IAG} algorithm.} To see how we can analyze~\eqref{eqn:recursion}, we take a quick detour and recap the basic idea behind the \texttt{IAG} method for finite-sum optimization. Say we want to minimize 
$$ f(x) = \frac{1}{M} \sum_{i\in [M]} f_i(x),$$
where each component function is smooth. The \texttt{IAG} method does so in a computationally-efficient manner by processing each of the component functions one at a time in a \textit{deterministic} order, and crucially, by maintaining a \textit{memory} of the most recent gradient values of each of the component functions. This memory introduces certain \textit{delayed} gradient terms in the update rule. In~\cite{IAG}, it was shown that the presence of these delayed terms leads to a recursion of the form in~\eqref{eqn:recursion}. This turns out to be the key observation needed to complete our analysis. In particular, we recall an important lemma from~\cite{prox} (used in~\cite{IAG}) that will help us reason about~\eqref{eqn:recursion}. 

\begin{lemma} \label{lemma:IAG} Let $\{G_t\}$ be a sequence of non-negative real numbers satisfying
$$ G_{t+1} \leq p G_t + q \max_{(t-\tau_t)_{+} \leq \ell \leq t} G_{\ell} + r, \hspace{2mm} t \in \mathbb{N},
$$
for some non-negative constants $p, q,$ and $r$. Here, for any real scalar $x$, we use the notation $(x)_{+}=\max\{x,0\}$. If $p+q <1$ and $0 \leq \tau_{t} \leq \tau_{max}, \forall t \geq 0$ for some positive constant $\tau_{max}$, then 
$$ G_t \leq \rho^t G_0+\varepsilon, \forall t \geq 0,$$
where 
$$  \rho= (p+q)^{\frac{1}{1+\tau_{max}}}, \hspace{2mm} \textrm{and} \hspace{2mm} \varepsilon =\frac{r}{(1-p-q)}.$$
\end{lemma}

Comparing~\eqref{eqn:recursion} to Lemma~\ref{lemma:IAG}, we note that for us:
$$ p = 1- \frac{\alpha \omega (1-\gamma)}{2}, q = 206 \alpha^2 \tau, r = O(\alpha^2 (\tau+\delta) \sigma^2), \hspace{2mm} \textrm{and} \hspace{2mm} \tau_t=\tau,$$
where $\tau$ is the mixing time. Now suppose $\alpha$ is chosen such that
\begin{equation}
\alpha \leq \frac{\omega (1-\gamma)}{824 \tau}.
\label{eqn:stp2}
\end{equation}
Then, we immediately obtain that
$$ p+q = 1- \frac{\alpha \omega (1-\gamma)}{2}+ 206 \alpha^2 \tau < 1- \frac{\alpha \omega (1-\gamma)}{4}.$$
Setting $C_1=\max_{0\leq k \leq \tau}{\psi_{k}}$, and appealing to Lemma~\ref{lemma:IAG} then yields the following $\forall T\geq \tau$:
$$ \psi_{T}  \leq C_1 \left(1-\frac{\alpha \omega(1-\gamma)}{8\tau}\right)^{T-\tau}  +O\left(\frac{\alpha (\tau+\delta) \sigma^2}{\omega (1-\gamma)}\right),$$
where we used the facts that $(1-x)^a \leq 1-ax$ for $x,a \in [0,1]$, and $\tau \geq 1$, to simplify the final expression. Next, note that
\begin{equation}
\begin{aligned}
    \G{\Vert {\theta}_{T}-\theta^* \Vert^2} & = \G{{\Vert {\theta}_{T}- \tilde{\theta}_{T} + \tilde{\theta}_{T} - \theta^* \Vert}^2} \\
    & \leq 2 \G{{\Vert \tilde{\theta}_{T} - \theta^* \Vert}^2}+ 2 \G{{\Vert {\theta}_{T}- \tilde{\theta}_{T} \Vert}^2} \\
    & = 2 \G{\tilde{d}^2_{t}} + 2 {\alpha}^2 \G{{\Vert e_{T-1}\Vert}^2}\\
    & = 2\psi_{T}. 
\end{aligned}
\end{equation}
We conclude that $\forall T \geq \tau,$
$$ \G{r^2_T} \leq 2 C_1\left(1-\frac{\alpha \omega(1-\gamma)}{8\tau}\right)^{T-\tau}  +O\left(\frac{\alpha (\tau+\delta) \sigma^2}{\omega (1-\gamma)}\right).$$

Furthermore, from the requirements on the step-size $\alpha$ in equations~\ref{eqn:stp1} and~\ref{eqn:stp2}, we note that for the above inequality to hold, it suffices for $\alpha$ to satisfy:
$$ \boxed{\alpha \leq \frac{\omega (1-\gamma)}{824 \max\{\tau, \delta\}}.}$$

The only thing that remains to be shown is that $C_1=\max_{0\leq k \leq \tau}{\psi_{k}}=O(d^2_0+\sigma^2)$ based on our choice of step-size above. This follows from straightforward calculations that we provide below for completeness.

From~\eqref{eqn:perturb}, we have
\begin{equation}
\begin{aligned}
\tilde{d}^2_{t+1} &= \tilde{d}^2_{t}+ 2\alpha \langle \xt-\theta^*,g_t(\theta_t)\rangle + \alpha^2 \Vert g_t(\theta_t) \Vert^2\\
& \leq (1+\alpha) \yt^2 + 2\alpha \Vert g_t(\theta_t) \Vert^2\\
& \leq (1+49 \alpha) \yt^2 + 48 \alpha^3 \Vert e_{t-1} \Vert^2 + 64 \alpha \sigma^2,
\end{aligned}
\end{equation}
where in the last step, we used~\eqref{eqn:ptb4}. Combining the above bound with Lemma~\ref{lemma:nnmemory}, we obtain the following inequality for all $t\geq 0$:
\begin{equation}
\psi_{t+1} \leq (1+49\alpha + 64 \alpha^2 \delta) \G{\yt^2} + \alpha^2 \left(1-\frac{1}{2\delta}+16\alpha^2 \delta + 48 \alpha \right) \G{\Vert e_{t-1} \Vert^2} + (64\alpha + 96 \alpha^2 \delta ) \sigma^2.
\end{equation}
Using the fact that $\alpha \delta \leq 1/824,$ we can simplify the above display to obtain
\begin{equation}
\psi_{t+1} \leq (1+50\alpha) \psi_t+65 \alpha \sigma^2, \forall t \geq 0.
\end{equation}
Unrolling the above inequality and using $e_{-1}=0$, we have that for $0 \leq k \leq \tau$: 
\begin{equation}
\psi_k \leq (1+50\alpha)^k d^2_0 + 65\alpha \sigma^2 \sum_{j=0}^{k-1} (1+50\alpha)^j. \\
\end{equation}
Now since $(1+x) \leq e^x,  \forall x \in \mathbb{R}$, and $\alpha \leq 1/(200\tau)$, note that $(1+50 \alpha)^k \leq (1+50 \alpha)^{\tau} \leq e^{0.25} \leq 2.$ Thus, for $0\leq k \leq \tau$, we have
$$ \psi_k \leq 2 d^2_0 + 130 \alpha \tau \sigma^2 \leq 2 d^2_0 + \sigma^2, $$
where in the last step, we used $130 \alpha \tau \leq 1.$ Thus, $C_1 = \max_{0 \leq k \leq \tau} \psi_k = O(d^2_0 + \sigma^2)$. This concludes the proof.
\end{proof}

We conclude this section with a note on the proof of Theorem~\ref{thm:nonlin}.

 \textbf{Proof of Theorem~\ref{thm:nonlin}}: A careful inspection of the proof of Theorem~\ref{thm:Markov} reveals that we never explicitly used the fact that the TD update direction is an affine function of the parameter $\theta$. This was done on purpose to provide a unified analysis framework to not only reason about linear stochastic approximation schemes with error-feedback, but also their nonlinear counterparts. As such, under Assumptions~\ref{ass:lips} and~\ref{ass:diss}, the analysis for the nonlinear setting in Section~\ref{sec:nonlin} follows \textit{exactly} the same steps as the proof of Theorem~\ref{thm:Markov}. All one needs to do is replace $\omega (1-\gamma)$ in the proof of Theorem~\ref{thm:Markov} with $\beta$, where $\beta$ is as in Assumption~\ref{ass:diss}. Everything else essentially remains the same. We thus omit routine details here.

\newpage
\section{Analysis of Multi-Agent \texttt{EF-TD}: Proof of Theorem~\ref{thm:multi-agent}}
\label{app:proofMARKMK}
In this section, we will analyze the multi-agent version of \texttt{EF-TD} outlined in Algorithm~\ref{algo:algo2}. The main technical challenge relative to the single-agent analysis conducted in Appendices~\ref{app:Markov_proj} and~\ref{app:Markovproof} is in establishing the \textit{linear speedup property} w.r.t. the number of agents $M$ under the Markovian sampling assumption. As we mentioned earlier in the paper, this turns out to be highly non-trivial even in the absence of compression and error-feedback. The only work that establishes such a speedup (under Markovian sampling) is ~\cite{khodadadian}, where the authors use the framework of Generalized Moreau Envelopes to perform their analysis. Although the analysis in~\cite{khodadadian} is elegant, it is quite involved, and it is unclear whether their framework can accommodate the error-feedback mechanism. Moreover, the analysis in~\cite{khodadadian} leads to a sub-optimal $O(\tau^2)$ dependence on the mixing time $\tau$ in the main noise/variance term. In light of the above discussion, we will provide a different analysis in this section that:
\begin{itemize}
\item Departs from the Moreau Envelope approach in~\cite{khodadadian}, 
\item Achieves the optimal $O(\tau)$ dependence on the mixing time $\tau$ in the dominant noise term, 
\item Establishes the desired linear speedup property, and 
\item Shows that the effect of the distortion parameter $\delta$ can be relegated to a higher-order term.
\end{itemize}

Crucial to achieving all of the above desiderata are a few novel ingredients in the proof that we now outline. First, we will require a more refined Lyapunov function than we used earlier. Before we introduce this Lyapunov function, let us define a couple of objects:
$$ \bar{e}_t \triangleq \frac{1}{M}\sum_{i\in[M]} e_{i,t}, \hspace{1.5mm} \textrm{and} \hspace{1.5mm} \bar{h}_t \triangleq \frac{1}{M}\sum_{i\in[M]} h_{i,t}.$$
Next, let us define a perturbed iterate for this setting as $\tilde{\theta}_t \triangleq  \theta_t + \alpha \bar{e}_{t-1}$. 
The potential function we employ is as follows:
\begin{equation}
    \Xi_t \triangleq \mathbb{E}\left[ \Vert \tilde{\theta}_t-\theta^* \Vert^2 \right]+C \alpha^3 E_{t-1}, \hspace{2mm} \textrm{where} \hspace{2mm} E_t \triangleq \frac{1}{M} \mathbb{E}\left[ \sum_{i=1}^M \Vert e_{i,t} \Vert^2 \right],
\label{eqn:newLyap}
\end{equation}
and $C > 1$ is a constant that will be chosen by us later. Compared to the Lyapunov function $\psi_t$ in~\eqref{eqn:Lyapnn} that we used for the single-agent setting, $\Xi_t$ differs in two ways. First, it incorporates the memory dynamics of \textit{all} the agents. A more subtle difference, however, stems from the fact that the second term of $\Xi_t$ is scaled by $\alpha^3$, and not $\alpha^2$ (as in~\eqref{eqn:Lyapnn}). This higher-order dependence on $\alpha$ will serve a twofold purpose: (i) help to shift the effect of $\delta$ to a higher-order term, and (ii) \textit{partially} help in achieving the linear-speedup property. Unfortunately, however, the new Lyapunov function on its own will not suffice in terms of achieving the linear speedup effect completely. For this, we need a careful way to bound the norm of the average TD direction defined below:
\begin{equation}
z_t(\theta) \triangleq \frac{1}{M} \sum_{i\in [M]} g_{i,t}(\theta), \forall \theta \in \mathbb{R}^K.
\label{eqn:avgTD}
\end{equation}

An immediate way to bound $z_t(\theta)$ is to appeal to the bound on the norm of the TD update direction in Lemma~\ref{lemma:gradbnd2}. Indeed, this is what we did while proving Theorem~\ref{thm:Markov}, and this is also the typical approach for bounding norms of TD update directions in the centralized setting~\cite{srikant,chenQ}. The issue with adopting this approach in the multi-agent analysis is that it completely ignores the fact that the observations of the agents are \textit{statistically independent}. As such, following this route will not lead to any ``variance-reduction" effect (key to the linear speedup property). At the same time, it is important to realize here that while the Markov data tuples are independent across agents, for any fixed agent $i$, $\{X_{i,t}\}$ comes from a single Markov chain. This makes it trickier to analyze the variance of $z_t(\theta_t)$. Via a careful mixing time argument, our next result (Lemma~\ref{lemma:normbnd} in the main body of the paper) shows how this can be done. Before stating this result, we remind the reader that we have used $\tau$ as a shorthand for $\tau_{\epsilon}$, with $\epsilon=\alpha^2$. While a precision of $\epsilon=\alpha$ sufficed for all our prior single-agent analyses, we will need $\epsilon=\alpha^2$ for the MARL case (to create higher-order noise terms in $\alpha$). 

\begin{lemma} \label{lemma:varRedM} (\textbf{Controlling the norm of the Average TD Direction}) Suppose Assumption~\ref{ass:Markov} holds. For Algorithm~\ref{algo:algo2}, the following are then true $\forall t\geq \tau$:
\begin{equation}
\G{\Vert z_t(\theta_t) \Vert^2} \leq 8 \G{d^2_t} + \left(\frac{32}{M}+8 \alpha^4\right) \sigma^2, \hspace{2mm} \textrm{where} \hspace{2mm} d_t = \Vert \x-\theta^*\Vert, \hspace{2mm} \textrm{and}
\label{eqn:Varbnd1}
\end{equation}
\begin{equation}
\G{\Vert z_t(\theta_t) \Vert^2} \leq 16 \G{\yt^2} + 16 \alpha^2 E_{t-1}+ \left(\frac{32}{M}+8 \alpha^4\right) \sigma^2, \hspace{2mm} \textrm{where} \hspace{2mm} \yt = \Vert \xt-\theta^*\Vert.
\label{eqn:Varbnd2}
\end{equation}
\end{lemma}
\begin{proof}
Let us start with the following set of observations:
\begin{equation}
    \begin{aligned}
\Vert z_t(\theta_t) \Vert^2 &= \frac{1}{M^2} \left\Vert \sum_{i=1}^{M} g_{i,t}(\theta_t) \right\Vert^2 \\
&=\frac{1}{M^2} \left\Vert \sum_{i=1}^{M} \left(g_{i,t}(\theta_t)-g_{i,t}(\theta^*)\right)+ \sum_{i=1}^{M} g_{i,t}(\theta^*) \right\Vert^2\\
&\leq \frac{2}{M^2} \left\Vert \sum_{i=1}^{M} \left(g_{i,t}(\theta_t)-g_{i,t}(\theta^*)\right)\right\Vert^2 + \frac{2}{M^2} \left \Vert \sum_{i=1}^{M} g_{i,t}(\theta^*) \right\Vert^2\\
&\leq \frac{2}{M} \sum_{i=1}^{M} \left\Vert g_{i,t}(\theta_t)-g_{i,t}(\theta^*) \right\Vert^2 + \frac{2}{M^2} \left \Vert \sum_{i=1}^{M} g_{i,t}(\theta^*) \right\Vert^2\\
&\leq 8 d^2_t + \frac{2}{M^2} \left \Vert \sum_{i=1}^{M} g_{i,t}(\theta^*) \right\Vert^2,\\
    \end{aligned}
\label{eqn:varinterim1}
\end{equation}
where in the last step, we used the Lipschitz property in Lemma~\ref{lemma:rndsmoothness}. 
Next, to bound the second term in the above display, we split it into two parts as follows.
$$ 
\left \Vert \sum_{i=1}^{M} g_{i,t}(\theta^*) \right\Vert^2 = \underbrace{\sum_{i=1}^{M} \Vert g_{i,t}(\theta^*) \Vert^2}_{(*)} + \underbrace{\sum_{\substack{i,j = 1\\i\neq j}}^{M} \langle g_{i,t}(\theta^*), g_{j,t}(\theta^*) \rangle}_{(**)}.
$$
To bound $(*)$, we simply use Lemma~\ref{lemma:gradbnd2} and the fact that $\Vert \theta^* \Vert \leq \sigma$ to conclude that 
$$\sum_{i=1}^{M} \Vert g_{i,t}(\theta^*) \Vert^2 \leq 8M (\Vert \theta^* \Vert^2 + \sigma^2) \leq 16M \sigma^2.$$

Now to bound $(**)$, let us zoom in on a particular cross-term, and write it out in a way that highlights the sources of randomness. Accordingly, consider the term 
$$\mc{T}=\langle g_{i,t}(\theta^*), g_{j,t}(\theta^*) \rangle = \langle g(X_{i,t},\theta^*), g(X_{j,t},\theta^*)\rangle.$$

Since $\theta^*$ is deterministic, we note that the randomness in $\mc{T}$ originates from the Markov data samples $X_{i,t}$ and $X_{j,t}$. Moreover, since $X_{i,t}$ and $X_{j,t}$ are independent for $i\neq j$, we have
$$ \G{\mc{T}} = \langle \G{g(X_{i,t},\theta^*)}, \G{g(X_{j,t},\theta^*)} \rangle.$$

Now if $X_{i,t}$ and $X_{j,t}$ were sampled i.i.d. from the stationary distribution $\pi$, each of the two expectations within the above inner-product would have amounted to $\bar{g}(\theta^*)=0$. Thus, the cross-terms would have vanished. Since for each agent $i$, $X_{i,t}$ comes from a Markov chain, these expectations do not, unfortunately, vanish any longer. Nonetheless, we now show that for $t\geq \tau$, one can still make the cross-terms suitably ``small" by exploiting the mixing property in Definition~\ref{def:mix}. Observe:
\begin{equation}
\begin{aligned}
\G{\mc{T}}&=\langle \G{g(X_{i,t},\theta^*)}, \G{g(X_{j,t},\theta^*)} \rangle\\
&\overset{(a)}=\langle \G{ \G{g(X_{i,t},\theta^*)|X_{i,t-\tau}} -\bar{g}(\theta^*)}, \G{ \G{g(X_{j,t},\theta^*)|X_{j,t-\tau}} -\bar{g}(\theta^*)} \rangle\\
&\overset{(b)}\leq \left\Vert \G{ \G{g(X_{i,t},\theta^*)|X_{i,t-\tau}} -\bar{g}(\theta^*)} \right\Vert \times \left\Vert \G{ \G{g(X_{j,t},\theta^*)|X_{j,t-\tau}} -\bar{g}(\theta^*)} \right\Vert\\
&\overset{(c)}\leq \G{\left\Vert \G{g(X_{i,t},\theta^*)|X_{i,t-\tau}} -\bar{g}(\theta^*)\right\Vert} \times \G{\left\Vert \G{g(X_{j,t},\theta^*)|X_{j,t-\tau}} -\bar{g}(\theta^*)\right\Vert}
\\
&\overset{(d)}\leq \alpha^4 (\Vert \theta^* \Vert + 1)^2\\
& \leq 4 \sigma^2 \alpha^4.
\end{aligned}
\end{equation}
In the above steps, (a) follows from the tower property of expectation in conjunction with $\bar{g}(\theta^*)=0$. For (b), we used the Cauchy-Schwarz inequality, and (c) follows from Jensen's inequality. Finally, (d) is a consequence of the mixing property in Definition~\ref{def:mix}. We immediately conclude:
$$\G{(**)} \leq 4M^2\sigma^2\alpha^4.$$
Putting the above pieces together and simplifying leads to~\eqref{eqn:Varbnd1}. To go from~\eqref{eqn:Varbnd1} to~\eqref{eqn:Varbnd2}, we simply note that
\begin{equation}
\G{d^2_t} \leq 2 \G{\yt^2} + 2 \alpha^2 \G{\Vert \xt - \x \Vert^2} = 2 \G{\yt^2} + 2 \alpha^2 \G{\Vert \bar{e}_{t-1} \Vert^2} \leq 2 \G{\yt^2} + 2 \alpha^2 E_{t-1}.
\end{equation}
This concludes the proof.
\end{proof}

Essentially, Lemma~\ref{lemma:varRedM} shows how the variance of the average TD update direction can be scaled down by a factor of $M$, up to a $O(\alpha^4)$ term. As we shall soon see, this result will play a key role in our subsequent analysis of Algorithm~\ref{algo:algo2}. We now proceed to derive a multi-agent version of Lemma~\ref{lemma:perturbbnd1}. 

\begin{lemma}
\label{lemma:perturbMARL} 
Suppose Assumption~\ref{ass:Markov} holds and the step-size $\alpha$ satisfies $\alpha \leq 1/16$. For Algorithm~\ref{algo:algo2}, the following bound then holds for $\forall t\geq \tau$:
\begin{equation}
   \G{\tilde{d}^2_{t+1}} \leq \left(1-\alpha \omega (1-\gamma) + 16 \alpha^2 \right) \G{\tilde{d}^2_{t}} + \frac{5 \alpha^3}{\omega (1-\gamma)} E_{t-1} + 2\alpha \G{A} + \alpha^2\left(\frac{32}{M}+8 \alpha^4\right) \sigma^2,
\label{eqn:nnLyapbndMARL}
\end{equation}
where $A=\langle \xt-\theta^*, z_t(\xt)-\bar{g}(\xt)\rangle.$
\end{lemma}
\begin{proof}
Simple calculations reveal that 
\begin{equation}
\tilde{\theta}_{t+1}=\tilde{\theta}_t+\alpha \left(\frac{1}{M}\sum_{i\in[M]} g_{i,t}(\theta_t)\right) = \xt+\alpha z_t(\theta_t). 
\label{eqn:perturbMA}
\end{equation}

Subtracting $\theta^*$ from each side of the above equation and then squaring both sides yields:

\begin{equation}
\begin{aligned}
\tilde{d}^2_{t+1} &= \tilde{d}^2_{t} + 2 \alpha \langle \tilde{\theta}_t - \theta^*, z_t(\x)\rangle + \alpha^2 \Vert z_t(\x) \Vert^2\\
&= \tilde{d}^2_{t} + \underbrace{2 \alpha \langle \tilde{\theta}_t - \theta^*, z_t(\xt)\rangle}_{(*)} + \underbrace{2 \alpha \langle \tilde{\theta}_t - \theta^*, z_t(\x) - z_t(\xt) \rangle}_{(**)} + \underbrace{\alpha^2 \Vert z_t(\x) \Vert^2}_{(***)}.
\end{aligned}
\end{equation}

We now have:
\begin{equation}
\begin{aligned}
(*) &= 2 \alpha \langle \tilde{\theta}_t - \theta^*, \bar{g}(\tilde{\theta}_t)\rangle + 2 \alpha \langle \tilde{\theta}_t - \theta^*, z_t(\xt) - \bar{g}(\tilde{\theta}_t) \rangle \\ &\leq - 2 \alpha \omega (1-\gamma) \tilde{d}^2_{t} + 2 \alpha \langle \tilde{\theta}_t - \theta^*, z_t(\xt) - \bar{g}(\tilde{\theta}_t) \rangle,
\end{aligned}
\end{equation}
where in second step, we invoked Lemmas~\ref{lemma:norm} and~\ref{lemma:convex}. To bound $(**)$, we proceed as follows:
\begin{equation}
\begin{aligned}
(**) & \leq 4 \alpha \tilde{d}_{t} \Vert \theta_t - \tilde{\theta}_t \Vert \\
&  \leq \frac{2\alpha}{\eta} \tilde{d}^2_{t} + 2 \alpha \eta \Vert \theta_t -\tilde{\theta}_t \Vert^2 \\
& = \frac{2\alpha}{\eta} \tilde{d}^2_{t} + 2 \alpha^3 \eta \Vert \bar{e}_{t-1} \Vert^2\\
& \leq \frac{2\alpha}{\eta} \tilde{d}^2_{t} + 2 \alpha^3 \eta E_{t-1},\\
\end{aligned}
\end{equation}
where $\eta >0$ is a constant to be decided shortly. In the first step above, we used the fact that since each $g_{i,t}(\theta)$ is $2$-Lipschitz (see Lemma~\ref{lemma:rndsmoothness}), the definition of $z_t(\theta)$ in~\eqref{eqn:avgTD} implies that $z_t(\theta)$ is also $2$-Lipschitz. To bound $(***)$, we directly use Lemma~\ref{lemma:varRedM}. Combining the above bounds, and simplifying by setting $\eta=\frac{2}{\omega (1-\gamma)}$ and using $\alpha \leq 1/16$ leads to the desired claim.  
\end{proof}

In what follows, we will focus on bounding the bias term $A=\langle \xt-\theta^*, z_t(\xt)-\bar{g}(\xt)\rangle$ by following the same high-level steps as in the proof of Theorem~\ref{thm:Markov}. The key difference, however, will come from invoking Lemma~\ref{lemma:varRedM} instead of Lemma~\ref{lemma:gradbnd2}. We start with a bound on the drift $\dt.$

\begin{lemma} Suppose Assumption~\ref{ass:Markov} holds. Then for Algorithm~\ref{algo:algo2}, we have the following bound $\forall t \geq 2\tau$:
\label{lemma:driftMA}
\begin{equation}
\mathbb{E}[\Vert \tilde{\theta}_t - \tilde{\theta}_{t-\tau}\Vert^2] \leq \alpha^2 \tau^2 \max_{t-\tau \leq \ell \leq t-1} G_{\ell}, \hspace{2mm} \textrm{where} \hspace{2mm} G_{\ell} \triangleq 16 \G{\tilde{d}^2_{\ell}} + 16 \alpha^2 E_{\ell-1}+ \left(\frac{32}{M}+8 \alpha^4\right) \sigma^2.
\end{equation}
\end{lemma}
\begin{proof}
The proof is a direct application of Lemma~\ref{lemma:varRedM}. Indeed, notice that
\begin{equation}
\begin{aligned}
\G{\dt^2} & \leq \tau \sum_{\ell=t-\tau}^{t-1} \G{\Vert \tilde{\theta}_{\ell+1} - \tilde{\theta}_{\ell} \Vert^2} \\
& \overset{\eqref{eqn:perturbMA}}\leq \alpha^2 \tau \sum_{\ell=t-\tau}^{t-1} \G{\Vert z_{\ell}(\theta_{\ell}) \Vert^2}\\
& \overset{\eqref{eqn:Varbnd2}}\leq \alpha^2 \tau^2 \max_{t-\tau \leq  \ell \leq t-1} G_{\ell},
\end{aligned}
\end{equation}
which is the desired claim. In the last step, we invoked Lemma~\ref{lemma:varRedM} by noting that since $t\geq 2\tau$, we have that $\ell \geq \tau$ in the above steps - a requirement for using Lemma~\ref{lemma:varRedM}.
\end{proof}

Equipped with Lemmas~\ref{lemma:varRedM} and~\ref{lemma:driftMA}, we now proceed to bound the bias term following steps similar in spirit to the proof of Lemma~\ref{lemma:bias}.

\begin{lemma} 
\label{lemma:biasMA}
Suppose Assumption~\ref{ass:Markov} holds. Let the step-size $\alpha$ be such that $\alpha \tau \leq 1/3$. For Algorithm~\ref{algo:algo2}, the following is then true $\forall t\geq 2\tau$:
\begin{equation}
\G{\langle \tilde{\theta}_t - \theta^*, z_t(\tilde{\theta}_t) - \bar{g}(\tilde{\theta}_t) \rangle} \leq 13 \alpha \tau \G{\yt^2} + 11\alpha \tau \max_{t-\tau \leq \ell \leq t} G_{\ell} +12 \alpha^2 \sigma^2.
\end{equation}
\end{lemma}
\begin{proof} We start by decomposing the bias term $\langle \xt-\theta^*, z_t(\xt)-\bar{g}(\xt)\rangle$ as follows:

$$A=\underbrace{\langle \tilde{\theta}_t - \tilde{\theta}_{t-\tau}, z_t(\tilde{\theta}_t) - \bar{g}(\tilde{\theta}_t) \rangle}_{A_1} + \underbrace{\langle \tilde{\theta}_{t-\tau} - \theta^*, z_t(\tilde{\theta}_t) - \bar{g}(\tilde{\theta}_t) \rangle}_{A_2}.$$

To bound ${A_1}$, we note that
\begin{equation}
\begin{aligned}
A_1 &\leq \dt \Vert z_t(\tilde{\theta}_t) - \bar{g}(\tilde{\theta}_t) \Vert\\
&\leq  \frac{1}{2\alpha \tau} \dt^2 + \frac{\alpha \tau}{2} \Vert z_t(\tilde{\theta}_t) - \bar{g}(\tilde{\theta}_t) \Vert^2 \\
&\leq  \frac{1}{2\alpha \tau} \dt^2 + {\alpha \tau}\left( \Vert z_t(\tilde{\theta}_t) \Vert^2 + \Vert \bar{g}(\tilde{\theta}_t) \Vert^2  \right)\\
&\leq  \frac{1}{2\alpha \tau} \dt^2 + {\alpha \tau}\left( \Vert z_t(\tilde{\theta}_t) \Vert^2 + 4 \yt^2 \right),\\
\end{aligned}
\end{equation}
where the last step follows from Lemmas~\ref{lemma:norm} and~\ref{lemma:gradbnd}. 
 Taking expectations on both sides of the above inequality, and using Lemmas~\ref{lemma:varRedM} and~\ref{lemma:driftMA} yields:
\begin{equation}
\begin{aligned}
\G{A_1} &\leq 4 \alpha \tau \G{\tilde{d}^2_{t}} + \frac{\alpha \tau}{2} \max_{t-\tau \leq \ell \leq t-1} G_{\ell} + \alpha \tau G_t\\
& \leq 4 \alpha \tau \G{\tilde{d}^2_{t}} + 2 \alpha \tau \max_{t-\tau \leq \ell \leq t} G_{\ell}.
\end{aligned}
\label{eqn:A1MK}
\end{equation}

Next, to bound $A_2$, we decompose it as follows:
$$ A_2 = \underbrace{\langle \tilde{\theta}_{t-\tau} - \theta^*, z_t(\tilde{\theta}_{t-\tau}) - \bar{g}(\tilde{\theta}_{t-\tau}) \rangle}_{(*)} + \underbrace{\langle \tilde{\theta}_{t-\tau} - \theta^*, z_t(\tilde{\theta}_t) - {z}_t(\tilde{\theta}_{t-\tau}) \rangle}_{(**)} + \underbrace{\langle \tilde{\theta}_{t-\tau} - \theta^*,\bar{g}(\tilde{\theta}_{t-\tau}) - \bar{g}(\tilde{\theta}_t) \rangle}_{(***)}.$$

We now proceed to bound each of the three terms above. Let us start by observing that
\begin{equation}
\begin{aligned}
    (**) &\leq \tilde{d}_{t-\tau} \Vert z_t(\tilde{\theta}_t) - {z}_t(\tilde{\theta}_{t-\tau}) \Vert \\
         & \overset{(a)}\leq 2 \tilde{d}_{t-\tau} \dt\\
         & \leq 2 (\tilde{d}_{t} + \dt) \dt\\
         & \overset{(b)}\leq 2 \left( \sqrt{\alpha \tau} \tilde{d}_{t} + \frac{\dt}{\sqrt{\alpha \tau}} \right)^2\\
         & \leq 4 \left( {\alpha \tau} \tilde{d}^2_{t} + \frac{\dt^2}{{\alpha \tau}} \right),
\end{aligned}
\end{equation}
where for (a), we used the fact that $z_t(\theta)$ is 2-Lipschitz, and for (b), we used the fact that $\alpha \tau \leq 1.$ Taking expectations on each side of the above inequality and invoking Lemma~\ref{lemma:driftMA}, we obtain
\begin{equation}
\G{(**)} \leq 4 \alpha \tau \G{\tilde{d}^2_{t}} + 4 \alpha \tau \max_{t-\tau \leq \ell \leq t} G_{\ell}.
\end{equation}
As before, the exact same bound as in the above display applies to $\G{(***)}$. We now turn to the main step in this proof.
\begin{equation}
\begin{aligned}
    \mathbb{E}\left[(*) \right] &= \mathbb{E}\left[\langle \xd -\theta^*, z_t(\xd)-\bar{g}(\xd)\rangle\right]\\
    &= \mathbb{E}\left[\langle \xd -\theta^*, \frac{1}{M} \sum_{i=1}^{M} \left(g_{i,t}(\xd)-\bar{g}(\xd)\right)\rangle\right]\\
    &=\mathbb{E}\left[\mathbb{E}\left[\langle \xd -\theta^*, \frac{1}{M} \sum_{i=1}^{M} \left(g_{i,t}(\xd)-\bar{g}(\xd)\right)\rangle| \xd, \{X_{j,t-\tau}\}_{j\in [M]}\right]\right]\\
    &=\mathbb{E}\left[\langle \xd -\theta^*, \frac{1}{M} \sum_{i=1}^{M} \left(\mathbb{E}\left[g_{i,t}(\xd)| \xd, \{X_{j,t-\tau}\}_{j\in [M]}\right]-\bar{g}(\xd)\right)\rangle\right]\\
    &\overset{(a)}=\mathbb{E}\left[\langle \xd -\theta^*, \frac{1}{M} \sum_{i=1}^{M} \left(\mathbb{E}\left[g_{i,t}(\xd)| \xd, X_{i,t-\tau}\right]-\bar{g}(\xd)\right)\rangle\right]\\
    &\overset{(b)} \leq \mathbb{E}\left[\tilde{d}_{t-\tau}\frac{1}{M} \sum_{i=1}^{M} \left\Vert\mathbb{E}\left[g_{i,t}(\xd)| \xd, X_{i,t-\tau}\right]-\bar{g}(\xd)\right\Vert\right]\\
    &\overset{(c)} \leq \alpha^2 \G{\tilde{d}_{t-\tau} \left(\Vert \xd \Vert +1\right)}\\
    &\overset{(d)} \leq 3 \alpha^2 \G{\yt^2}+ 3\alpha^2 \G{\dt^2} + 12 \alpha^2 \sigma^2\\
    &\overset{(e)} \leq 3 \alpha^2 \G{\yt^2} + 3\alpha^4 \tau^2 \max_{t-\tau \leq \ell \leq t-1} G_{\ell} +12 \alpha^2 \sigma^2\\
    &\overset{(f)} \leq \alpha \tau \G{\yt^2} + \alpha \tau \max_{t-\tau \leq \ell \leq t} G_{\ell} +12 \alpha^2 \sigma^2.
\end{aligned}
\end{equation}
In the above steps, (a) follows from the fact that the Markov data tuples are independent across agents; (b) follows from the Cauchy-Schwarz and the triangle inequality; (c) is a consequence of the mixing property in Definition~\ref{def:mix}; for (d), we used steps similar to those for arriving at~\eqref{eqn:mixThm1}; for (e), we used the bound on the drift from Lemma~\ref{lemma:driftMA}; and finally for (f), we simplified terms using $\alpha \tau \leq 1/3$ and $\tau \geq 1$. Combining the above bounds, we obtain
$$ \G{A_2} \leq 9\alpha \tau \G{\yt^2} + 9\alpha \tau \max_{t-\tau \leq \ell \leq t} G_{\ell} +12 \alpha^2 \sigma^2.$$

The above display, in tandem with~\eqref{eqn:A1MK}, leads to the claim of the lemma.
\end{proof}

\newpage
We are now ready to prove Theorem~\ref{thm:multi-agent}. 

\begin{proof} (\textbf{Proof of Theorem~\ref{thm:multi-agent}}) As in the proof of Theorem~\ref{thm:Markov}, our first step is to establish a recursion for the Lyapunov function~$\Xi_t$ in~\eqref{eqn:newLyap}. To that end, appealing to Lemmas~\ref{lemma:perturbMARL} and~\ref{lemma:biasMA}, using the definition of $G_{\ell}$, and some algebra leads to the following bound for all $t\geq 2\tau:$
\begin{equation}
\begin{aligned}
\G{\tilde{d}^2_{t+1}} &\leq \left(1-\alpha \omega (1-\gamma) + 42 \alpha^2 \tau \right) \G{\tilde{d}^2_{t}} + \frac{5 \alpha^3}{\omega (1-\gamma)} E_{t-1} \\
&\hspace{2mm}+ 352 \alpha^2 \tau \max_{t-\tau \leq \ell \leq t} \G{\tilde{d}^2_{\ell}}+352 \alpha^4 \tau \max_{t-\tau \leq \ell \leq t} E_{\ell-1} +24 \alpha^2 \left(\tau \left(\frac{32}{M}+8\alpha^4\right)+\alpha\right) \sigma^2.
\end{aligned}
\label{eqn:MARLfnl1}
\end{equation}

Now similar to the proof of Lemma~\ref{lemma:nnmemory}, we have the following for each $i\in [M]$:
\begin{equation}
    \begin{aligned}
    \G{\Vert e_{i,t} \Vert^2} &\leq \left(1-\frac{1}{2\delta} \right) \G{\Vert e_{i,t-1} \Vert^2} + 2 \delta \G{\Vert {g}_{i,t}(\theta_t) \Vert^2}\\
    &\leq \left(1-\frac{1}{2\delta} \right) \G{\Vert e_{i,t-1} \Vert^2} + 16 \delta \G{\Vert \theta_t - \tilde{\theta}_t  \Vert^2} + 4 \delta \G{\Vert {g}_{i,t}(\tilde{\theta}_t) \Vert^2}
    \\
    &\leq \left(1-\frac{1}{2\delta} \right) \G{\Vert e_{i,t-1} \Vert^2}+16 \alpha^2 \delta E_{t-1}+ 64\delta \G{\yt^2}+96\delta \sigma^2.
    \end{aligned}
\end{equation}
Averaging the above bound across all agents then yields:
\begin{equation}
    \begin{aligned}
    E_t &\leq \left(1-\frac{1}{2\delta} +16 \alpha^2 \delta\right) E_{t-1} + 64\delta \G{\yt^2}+96\delta \sigma^2.
    \end{aligned}
\end{equation}
Combining the above display with~\eqref{eqn:MARLfnl1}, and using the definition of $\Xi_t$, we obtain $\forall t \geq 2\tau:$
\begin{equation}
\begin{aligned}
\Xi_{t+1} &\leq \left(1-\alpha \omega (1-\gamma) + 42 \alpha^2 \tau + 64 C \delta \alpha^3\right) \G{\tilde{d}^2_{t}} + C\alpha^3 \left(1-\frac{1}{2\delta} +16 \alpha^2 \delta+\frac{5}{C\omega (1-\gamma)}\right) E_{t-1} \\
&\hspace{2mm}+ 352 \alpha^2 \tau \max_{t-\tau \leq \ell \leq t} \G{\tilde{d}^2_{\ell}}+352 \alpha^4 \tau \max_{t-\tau \leq \ell \leq t} E_{\ell-1} +R\\
&\leq \left(1-\alpha \omega (1-\gamma) + 42 \alpha^2 \tau + 64 C \delta \alpha^3\right) \G{\tilde{d}^2_{t}} + C\alpha^3 \left(1-\frac{1}{2\delta} +16 \alpha^2 \delta+\frac{5}{C\omega (1-\gamma)}\right) E_{t-1} \\
&\hspace{2mm}+ 352 \alpha^2 \tau \max_{t-\tau \leq \ell \leq t} \Xi_{\ell}+\frac{352 \alpha \tau}{C} \max_{t-\tau \leq \ell \leq t} C\alpha^ 3E_{\ell-1} +R\\
&\leq \underbrace{\left(1-\alpha \omega (1-\gamma) + 42 \alpha^2 \tau + 64 C \delta \alpha^3\right)}_{B_1} \G{\tilde{d}^2_{t}} + C\alpha^3 \underbrace{\left(1-\frac{1}{2\delta} +16 \alpha^2 \delta+\frac{5}{C\omega (1-\gamma)}\right)}_{B_2} E_{t-1} \\
&\hspace{2mm}+ 352 \alpha \tau \left(\alpha+\frac{1}{C}\right) \max_{t-\tau \leq \ell \leq t} \Xi_{\ell}+R,
\end{aligned}
\label{eqn:MARLfnl2}
\end{equation}
where
$$ R = 24 \alpha^2 \left(\tau \left(\frac{32}{M}+8\alpha^4\right)+\alpha\right) \sigma^2 + 96C \alpha^3 \delta \sigma^2.$$
Our goal is to now carefully pick $\alpha$ and $C$ so as to ensure that $\max\{B_1, B_2\} < 1.$ Let us start with $B_2$. Suppose
\begin{equation}
\alpha < \frac{1}{12\delta} \hspace{2mm} \textrm{and} \hspace{2mm} C= \frac{2816 \max\{\delta, \tau\}}{\omega (1-\gamma)}. 
\label{eqn:stp1MARL}
\end{equation}
It is easy to then verify that 
$$B_2 < 1-\frac{1}{4\delta}.$$

As for $B_1$, we note that if
\begin{equation}
\alpha \leq \frac{\omega (1-\gamma)}{850 \max\{\delta,\tau\}},
\label{eqn:stp2MARL}
\end{equation}
then 
$$ B_1 \leq 1-\frac{\alpha\omega (1-\gamma)}{2}.$$

Also, under the requirement on the step-size $\alpha$ in~\eqref{eqn:stp2MARL}, it is easy to check that
$$ 1-\frac{1}{4\delta}  < 1-\frac{\alpha\omega(1-\gamma)}{2}.$$

In light of the above discussion, we conclude that $\forall t\geq 2\tau$:
\begin{equation}
\boxed{
\Xi_{t+1} \leq \left(1-\frac{\alpha \omega (1-\gamma)}{2}\right) \Xi_t + 352 \alpha \tau \left(\alpha+\frac{1}{C}\right) \max_{t-\tau \leq \ell \leq t} \Xi_{\ell}+R.
}
\end{equation}
We are almost in a position to invoke Lemma~\ref{lemma:IAG}. All that remains to be verified is whether
$$\underbrace{1-\frac{\alpha \omega (1-\gamma)}{2}}_{p} + \underbrace{352 \alpha^2 \tau + 352 \frac{\alpha \tau}{C}}_{q} < 1.
$$
With the choice of $C$ in~\eqref{eqn:stp1MARL}, if we set
$$ \alpha \leq \frac{\omega (1-\gamma)}{2816 \tau},$$ then one can verify that
$$ p + q < 1-\frac{\alpha \omega (1-\gamma)}{4}.$$
Combining all our prior requirements on $\alpha$, we conclude that if 
\begin{equation}
\boxed{\alpha \leq \frac{\omega (1-\gamma)}{2816 \max\{\delta,\tau\}}},
\end{equation}
then the following holds for all $T\geq 2\tau$:
\begin{equation}
\Xi_{T}  \leq C_1 \left(1-\frac{\alpha \omega(1-\gamma)}{8\tau}\right)^{T-2\tau}+\frac{4R}{\alpha \omega (1-\gamma)},
\label{eqn:fnlMARL1}
\end{equation}
where $C_1 = \max_{0 \leq k \leq 2\tau} \Xi_k.$ Simple calculations reveal that
$$ \frac{4R}{\alpha \omega (1-\gamma)} ={O\left(\frac{\alpha \tau}{\omega (1-\gamma)}\right) \frac{\sigma^2}{{M}}} + {O\left(\frac{\alpha^2 \max\{\delta, \tau\}
\delta}{\omega^2 (1-\gamma)^2}\right)\sigma^2}.$$
Using the above bound, and noting that $\G{\tilde{d}^2_{t}} \leq \Xi_T,$ we have that $\forall T \geq 2\tau$:
\begin{equation}
\G{\tilde{d}^2_{t}} \leq C_1 \left(1-\frac{\alpha \omega(1-\gamma)}{8\tau}\right)^{T-2\tau}+{O\left(\frac{\alpha \tau}{\omega (1-\gamma)}\right) \frac{\sigma^2}{{M}}} + {O\left(\frac{\alpha^2 \max\{\delta, \tau\}
\delta}{\omega^2 (1-\gamma)^2}\right)\sigma^2}.
\label{eqn:fnlMARL2}
\end{equation}

The fact that $C_1=O(d^2_0+\sigma^2)$ follows from straightforward algebra similar to that in the proof of Theorem~\ref{thm:Markov}. This completes the proof.
\end{proof}

\newpage


\section{Analysis of Multi-Agent \texttt{EF-TD} under an I.I.D. Sampling Assumption}
\label{app:iidMARL}
In this section, we provide a simpler (relative to that in Appendix~\ref{app:proofMARKMK}) analysis of the MARL setting under a common i.i.d. sampling assumption. Essentially, we consider a setting where for each agent $i$, at each time-step $t$, $s_{i,t}$ is sampled independently (from the past and across agents) from the stationary distribution $\pi$, and then $s_{i,t+1}$ is sampled from $P_{\mu}(\cdot|s_{i,t}).$ As it turns out, this particular ``i.i.d. model" has been widely studied in the RL literature~\cite{lakshmi,dalal,bhandari_finite,doan,liuMARL}; thus, we believe that providing an analysis for this setting would be useful to the reader. Our main result for this setting is as follows.

\begin{theorem} 
\label{thm:MARLIID} There exist universal constants $c, C \geq 1$, a step-size $\alpha \leq (1-\gamma)/(c\delta)$, and a set of convex weights $\{\bar{w}_{t}\}$, such that the iterates generated by Algorithm \ref{algo:algo2} satisfy the following after $T$ iterations: 
\begin{equation}
   \mathbb{E} \left[\Vert \hat{V}_{\bar{\theta}_T}-\hat{V}_{\theta^*} \Vert^2_D \right] =  O\left(\exp{\left(\frac{-\omega(1-\gamma)^2 T}{C\delta}\right)}\right) + \tilde{O}\left( \frac{\sigma^2}{\omega (1-\gamma)^2 \textcolor{red}{M}T} \right) + T_3,
\end{equation}
where $T_3=\tilde{O}\left(\delta^2 \sigma^2/(\omega^2 (1-\gamma)^4 T^2)\right)$, $\bar{\theta}_T=\sum_{t=0}^{T} \bar{w}_{t} \theta_t$, and $\sigma^2 \triangleq \mathbb{E}\left[\Vert  g_t(\theta^*) \Vert^2_2\right]$.\footnote{We remind the reader here that $\omega$ is the smallest eigenvalue of the matrix $\Sigma = \Phi^{\top} D \Phi.$}
\end{theorem}

As in Theorem~\ref{thm:multi-agent} where we analyzed the multi-agent setting under Markovian sampling, the above result also establishes a linear speedup in sample-complexity w.r.t. the number of agents $M$. The above result is somewhat cleaner in the sense that it provides a bound on the performance of the true iterate sequence $\{\theta_t\}$, as opposed to the perturbed iterate sequence $\{\xt\}$. We believe that it should be possible to provide a bound on the true iterates of the form in Theorem~\ref{thm:MARLIID} under Markovian sampling as well; we leave this as future work. 

To proceed with the analysis, we will require two auxiliary results. The first is taken from~\cite{bhandari_finite}.

\begin{lemma}
\label{lemma:noisybhand} 
Fix any $\theta \in \mathbb{R}^K$. The following holds under the i.i.d. sampling model:
$$ \mathbb{E}\left[ \Vert g_t(\theta)  \Vert^2 \right] \leq 2 \sigma^2 + 8 \Vert \hat{V}_{\theta} -\hat{V}_{\theta^*} \Vert^2_D .$$
\end{lemma}

The next result is an ``averaging lemma" that has been adapted from~\cite{koloskova} and~\cite{stichcomm}.

\begin{lemma} \label{lemma:avg} Let $\{p_t\}_{t\geq 0}$ and $\{s_t\}_{t\geq 0}$ be sequences of positive numbers satisfying
$$ p_{t+1} \leq (1-\alpha A) p_t - B \alpha s_t + \bar{C} \alpha^2 + D\alpha^3,$$
for some positive constants $A, B \geq 0$, $C,D \geq 0$, and for  constant step-sizes  $0 < \alpha \leq \frac{1}{E}$, where $E >0$. Then, there exists a constant step-size $\alpha \leq \frac{1}{E}$ such that
\begin{equation}
    \frac{B}{W_T} \sum_{t=0}^{T} w_t s_t \leq p_0 (E+A) \exp{\left( -\frac{A}{E}(T+1)\right)}+\frac{2C \ln{\bar{\tau}}}{A(T+1)}+\frac{D \ln^2{\bar{\tau}}}{A^2{(T+1)}^2},
\end{equation}
for $w_t \triangleq (1-\alpha A)^{-(t+1)}$, $W_T \triangleq \sum_{t=0}^T w_t$, and 
$$ \bar{\tau} = \max\{\exp{(1)}, \min\{A^2p_0(T+1)^2/C, A^3p_0(T+1)^3/D\}\}. $$
\end{lemma}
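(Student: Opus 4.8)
The plan is to follow the standard ``step-size tuning'' template (cf.~\cite{stich,koloskova}): first collapse the one-step inequality into a bound on the weighted average of $\{s_t\}$ by telescoping, and then optimize the free step size $\alpha\in(0,1/E]$. For the first part, I would rewrite the hypothesis as $B\alpha s_t \le (1-\alpha A)r_t - r_{t+1} + \bar C\alpha^2 + D\alpha^3$, multiply through by $w_t = (1-\alpha A)^{-(t+1)}$, and use the identities $(1-\alpha A)w_t = w_{t-1}$ and $w_{-1}=1$ (valid since $\alpha A<1$ in the regime $A<E$, $\alpha\le 1/E$, and $A>0$ as needed). Summing over $t=0,\dots,T$, the terms $w_{t-1}r_t - w_t r_{t+1}$ telescope to $r_0 - w_T r_{T+1}\le r_0$, which gives
$$ B\alpha\sum_{t=0}^{T}w_t s_t \le r_0 + (\bar C\alpha^2 + D\alpha^3)\,W_T, \qquad\text{equivalently}\qquad \frac{B}{W_T}\sum_{t=0}^{T}w_t s_t \le \frac{r_0}{\alpha W_T} + \bar C\alpha + D\alpha^2 . $$
Evaluating the geometric sum exactly, $W_T = \big((1-\alpha A)^{-(T+1)}-1\big)/(\alpha A)$, so that $\frac{r_0}{\alpha W_T} = A r_0\big/\big((1-\alpha A)^{-(T+1)}-1\big)$, which I would control with the scalar inequalities $(1-x)^{-1}\ge e^{x}$, $e^{y}-1\ge y$, and $e^{y}\ge 1+y$.

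For the second (substantive) part, set $\alpha = \min\{1/E,\ \ln(\bar\tau)/(A(T+1))\}$ with $\bar\tau$ exactly as in the statement, and split into two cases. If the minimum is $1/E$, then $1/E\le \ln(\bar\tau)/(A(T+1))$, so $\bar C\alpha\le \bar C/E\le 2\bar C\ln(\bar\tau)/(A(T+1))$ and $D\alpha^2\le D/E^2\le D\ln^2(\bar\tau)/(A^2(T+1)^2)$ (using $\ln\bar\tau\ge 1$ since $\bar\tau\ge e$), while $\frac{r_0}{\alpha W_T}\le Ar_0/(e^{A(T+1)/E}-1)\le (E+A)r_0\,e^{-A(T+1)/E}$, the last inequality reducing to $e^{A(T+1)/E}\ge 1+A/E$; summing, the three target terms upper-bound the right-hand side. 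If instead the minimum is $\ln(\bar\tau)/(A(T+1))\le 1/E$, then $\alpha A(T+1)=\ln\bar\tau$, hence $(1-\alpha A)^{-(T+1)}\ge\bar\tau\ge e$ and $\frac{r_0}{\alpha W_T}\le Ar_0/(\bar\tau-1)\le 2Ar_0/\bar\tau$; unpacking $\bar\tau = \max\{e,\ \min\{A^2 r_0(T+1)^2/\bar C,\ A^3 r_0(T+1)^3/D\}\}$ then bounds this quantity by (a constant times) $\max\{\bar C/(A(T+1)^2),\ D/(A^2(T+1)^3)\}$, which is in turn dominated by the last two target terms, while $\bar C\alpha + D\alpha^2 = \bar C\ln(\bar\tau)/(A(T+1)) + D\ln^2(\bar\tau)/(A^2(T+1)^2)$ reproduces them directly. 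Either way the claimed bound follows; the degenerate cases $\bar C=0$ or $D=0$ are absorbed by reading the corresponding entry of the inner minimum as $+\infty$.

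The telescoping and the scalar inequalities are routine, and since this lemma is imported essentially verbatim from~\cite{koloskova,stichcomm} the overall argument is standard. \emph{The one delicate point is the choice of $\bar\tau$}: its precise form has to be reverse-engineered so that, exactly in the regime where the near-optimal unconstrained step $\ln(\bar\tau)/(A(T+1))$ is admissible, the ``transient'' term $r_0/(\alpha W_T)$ collapses below the two polynomially-decaying terms while the latter inherit only the unavoidable $\ln\bar\tau$ and $\ln^2\bar\tau$ overheads; verifying that both cases close with this single choice is the crux, and we carry out the (elementary but fiddly) bookkeeping in full for completeness.
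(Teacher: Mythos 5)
The paper does not actually prove this lemma --- it is imported verbatim ("adapted from~\cite{koloskova} and~\cite{stichcomm}") and used as a black box --- so there is no in-paper argument to compare against; your reconstruction is the standard proof from those references (weighted telescoping with $w_t=(1-\alpha A)^{-(t+1)}$ to get $\frac{B}{W_T}\sum_t w_t s_t \le \frac{r_0}{\alpha W_T}+\bar C\alpha+D\alpha^2$, followed by the two-case tuning $\alpha=\min\{1/E,\ \ln\bar\tau/(A(T+1))\}$), and it is correct in all essentials, including your observation that $\alpha A<1$ is an implicit hypothesis. The only caveat is that the exact constants as stated can fail by a factor of at most $2$ in the $D$-term in degenerate corners (e.g.\ $T=0$, or when $\bar\tau=e$ is forced by the $D$-branch of the inner minimum), which is the usual slack in this family of lemmas and is immaterial to how the paper uses the result inside $O(\cdot)$ and $\tilde O(\cdot)$ bounds.
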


We start with the following result.

\begin{lemma} 
\label{lemma:MAperturbbnd}
Suppose the step-size $\alpha$ is chosen such that $\alpha \leq (1-\gamma)/112$. Then, the iterates generated by Algorithm~\ref{algo:algo2} satisfy the following under the i.i.d. observation model:
\begin{equation}
    \mathbb{E} \left[ \Vert \tilde{\theta}_{t+1} - \theta^* \Vert^2 \right] \leq \left(1-\frac{\alpha \omega  (1-\gamma)}{8}\right)\mathbb{E} \left[\Vert \tilde{\theta}_{t} - \theta^* \Vert^2 \right] - \frac{\alpha (1-\gamma)}{4} \mathbb{E} \left[\Vert \hat{V}_{{\theta}_t} - \hat{V}_{\theta^*} \Vert^2_D \right] + \frac{5 \alpha^3}{(1-\gamma)} E_{t-1} + \frac{8 \alpha^2 \sigma^2}{M}.
\label{eqn:MALyapbnd1}
\end{equation}
\end{lemma}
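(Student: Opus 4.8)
The plan is to re-run the single-agent i.i.d.\ argument of Lemma~\ref{lemma:nperturbbnd1}, carrying the extra bookkeeping needed for the average over $M$ agents and pinpointing where the factor $1/M$ enters. Starting from the perturbed-iterate recursion~\eqref{eqn:perturbMA}, subtract $\theta^*$, expand the square, and use $\tilde{\theta}_t-\theta_t=\alpha\bar{e}_{t-1}$ to write
\[
\Vert\tilde{\theta}_{t+1}-\theta^*\Vert^2=\Vert\tilde{\theta}_t-\theta^*\Vert^2+2\alpha\Big\langle\theta_t-\theta^*,\tfrac{1}{M}\textstyle\sum_{i}g_{i,t}(\theta_t)\Big\rangle+2\alpha^2\Big\langle\bar{e}_{t-1},\tfrac{1}{M}\textstyle\sum_{i}g_{i,t}(\theta_t)\Big\rangle+\alpha^2\Big\Vert\tfrac{1}{M}\textstyle\sum_{i}g_{i,t}(\theta_t)\Big\Vert^2 .
\]
Conditioning on $\mc{F}_{t-1}$, with respect to which $\theta_t,\tilde{\theta}_t,\bar{e}_{t-1}$ are all measurable, and using that $\{X_{i,t}\}$ are i.i.d.\ across agents and over time, the conditional mean of the averaged direction equals $\bar{g}(\theta_t)$, so after taking expectations the two linear terms can be analyzed with $\bar{g}(\theta_t)$ in place of $\tfrac{1}{M}\sum_i g_{i,t}(\theta_t)$.

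The three pieces are then handled essentially as in Lemma~\ref{lemma:perturbbnd1}. First, $2\alpha\langle\theta_t-\theta^*,\bar{g}(\theta_t)\rangle\le-2\alpha(1-\gamma)\Vert\hat{V}_{\theta_t}-\hat{V}_{\theta^*}\Vert_D^2$ by Lemma~\ref{lemma:convex}. Second, the cross term is split by Young's inequality (with the same weight $\eta=4/(1-\gamma)$ as before) together with Lemma~\ref{lemma:gradbnd}, yielding at most $\tfrac{4\alpha^3}{1-\gamma}\Vert\bar{e}_{t-1}\Vert^2+\alpha(1-\gamma)\Vert\hat{V}_{\theta_t}-\hat{V}_{\theta^*}\Vert_D^2$. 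Third, and this is the only genuinely new step, the second-moment term is controlled by exploiting independence across agents: conditioned on $\mc{F}_{t-1}$ the vectors $g_{i,t}(\theta_t)-\bar{g}(\theta_t)$ are i.i.d.\ and mean-zero, so the off-diagonal terms drop and
\[
\mathbb{E}\big[\Vert\tfrac{1}{M}\textstyle\sum_{i}g_{i,t}(\theta_t)\Vert^2\mid\mc{F}_{t-1}\big]=\Vert\bar{g}(\theta_t)\Vert^2+\tfrac{1}{M}\,\mathbb{E}\big[\Vert g_{1,t}(\theta_t)-\bar{g}(\theta_t)\Vert^2\mid\mc{F}_{t-1}\big];
\]
bounding $\Vert\bar{g}(\theta_t)\Vert^2$ via Lemma~\ref{lemma:gradbnd} and $\mathbb{E}[\Vert g_{1,t}(\theta_t)-\bar{g}(\theta_t)\Vert^2\mid\mc{F}_{t-1}]\le\mathbb{E}[\Vert g_{1,t}(\theta_t)\Vert^2\mid\mc{F}_{t-1}]$ via the single-agent estimate of Lemma~\ref{lemma:noisybhand} gives $\alpha^2\mathbb{E}[\Vert\tfrac{1}{M}\sum_i g_{i,t}(\theta_t)\Vert^2]\le O(\alpha^2)\,\mathbb{E}[\Vert\hat{V}_{\theta_t}-\hat{V}_{\theta^*}\Vert_D^2]+\tfrac{2\alpha^2\sigma^2}{M}$. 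This is where the $\sigma^2/M$ of the statement is produced.

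Collecting terms, the coefficient of $\mathbb{E}[\Vert\hat{V}_{\theta_t}-\hat{V}_{\theta^*}\Vert_D^2]$ is $-\alpha(1-\gamma)+O(\alpha^2)$, which is at most $-\tfrac{\alpha(1-\gamma)}{2}$ once $\alpha\le(1-\gamma)/112$, while all memory contributions collapse to $\tfrac{4\alpha^3}{1-\gamma}\,\mathbb{E}[\Vert\bar{e}_{t-1}\Vert^2]$. The next step is to split $-\tfrac{\alpha(1-\gamma)}{2}\Vert\hat{V}_{\theta_t}-\hat{V}_{\theta^*}\Vert_D^2$ into two halves: one half is kept verbatim as $-\tfrac{\alpha(1-\gamma)}{4}\Vert\hat{V}_{\theta_t}-\hat{V}_{\theta^*}\Vert_D^2$ (this descent term is needed downstream to feed the averaging lemma, Lemma~\ref{lemma:avg}), and the other half is converted via $\Vert\hat{V}_{\tilde{\theta}_t}-\hat{V}_{\theta^*}\Vert_D^2\le2\Vert\hat{V}_{\theta_t}-\hat{V}_{\theta^*}\Vert_D^2+2\alpha^2\Vert\bar{e}_{t-1}\Vert^2$ (Lemma~\ref{lemma:norm} applied to $\tilde{\theta}_t-\theta_t=\alpha\bar{e}_{t-1}$) and then Lemma~\ref{lemma:norm} once more into $-\tfrac{\alpha\omega(1-\gamma)}{8}\Vert\tilde{\theta}_t-\theta^*\Vert^2$ plus an extra $O(\alpha^3)\Vert\bar{e}_{t-1}\Vert^2$. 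Finally, Jensen's inequality gives $\mathbb{E}[\Vert\bar{e}_{t-1}\Vert^2]\le\tfrac{1}{M}\mathbb{E}[\sum_i\Vert e_{i,t-1}\Vert^2]=E_{t-1}$, and absorbing constants ($\tfrac{4\alpha^3}{1-\gamma}+O(\alpha^3)\le\tfrac{5\alpha^3}{1-\gamma}$ and $\tfrac{2\alpha^2\sigma^2}{M}\le\tfrac{8\alpha^2\sigma^2}{M}$) produces~\eqref{eqn:MALyapbnd1}.

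The hard part — not difficult, but the only place the argument departs from the single-agent proof — will be the conditional-variance computation: one must verify that, given $\mc{F}_{t-1}$, the per-agent noises are independent with common conditional mean $\bar{g}(\theta_t)$ so the cross terms vanish, and that the residual per-agent variance is bounded through Lemma~\ref{lemma:noisybhand} (hence by $2\sigma^2$ plus a multiple of $\Vert\hat{V}_{\theta_t}-\hat{V}_{\theta^*}\Vert_D^2$, not by $\sigma^2$ alone, since $\sigma^2$ is the variance only at $\theta^*$). Everything else is the single-agent bookkeeping of Lemmas~\ref{lemma:perturbbnd1} and~\ref{lemma:nperturbbnd1} with sample averages in place of single samples.
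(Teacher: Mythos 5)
Your proposal is correct and follows essentially the same route as the paper's proof: the same perturbed-iterate expansion, the same conditioning on $\mc{F}_{t-1}$ to replace the agent-averaged direction by $\bar{g}(\theta_t)$, the same conditional-variance decomposition (cross terms vanishing by independence across agents) to extract the $\sigma^2/M$ term via Lemma~\ref{lemma:noisybhand}, and the same final splitting of the descent term plus Jensen's inequality to pass from $\Vert\bar{e}_{t-1}\Vert^2$ to $E_{t-1}$. The only differences are immaterial constants (your exact bias--variance identity is in fact slightly tighter than the paper's relaxed triangle inequality), all absorbed into the stated bound.
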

\begin{proof}
Starting from~\eqref{eqn:perturbMA}, we have:
\begin{equation}
     \mathbb{E} \left[ \Vert \tilde{\theta}_{t+1} - \theta^* \Vert^2 \right] = \mathbb{E} \left[\Vert \tilde{\theta}_{t} - \theta^* \Vert^2 \right] + \underbrace{\frac{2\alpha}{M} \mathbb{E}\left[\langle \tilde{\theta}_t-\theta^*,\sum_{i\in [M]} g_{i,t}(\theta_t) \rangle \right]}_{T_1} + \underbrace{\alpha^2 \mathbb{E}\left[ \left\Vert \frac{1}{M}\sum_{i\in[M]} g_{i,t}(\theta_t) \right\Vert^2 \right]}_{T_2}.
\label{eqn:MAdecomp}
\end{equation}
To bound $T_1$ and $T_2$, let us first define by $\mc{F}_{t-1}$ the sigma-algebra generated by all the agents' observations up to time-step $t-1$, i.e., the sigma-algebra generated by $\{X_{i,k}\}_{i\in[M],k =0, 1, \ldots, t-1}$. From the dynamics of Algorithm~\ref{algo:algo2}, observe that $\theta_t, \tilde{\theta_t}$, and $\{e_{i,t}\}_{i\in [M]}$ are all $\mc{F}_{t-1}$-measurable. Using this fact, we bound $T_1$ as follows:
\begin{equation}
\begin{aligned}
    T_1 &= \frac{2\alpha}{M} \mathbb{E}\left[ \mathbb{E} \left[ \langle \tilde{\theta}_t-\theta^*,\sum_{i\in [M]} g_{i,t}(\theta_t) \rangle | \mc{F}_{t-1} \right] \right]\\
     &\overset{(a)}=2\alpha\mathbb{E}\left[  \langle \tilde{\theta}_t-\theta^*, \bar{g}(\theta_t) \rangle \right]\\
     &=2\alpha\mathbb{E}\left[  \langle {\theta}_t-\theta^*, \bar{g}(\theta_t) \rangle \right]+2 \alpha \mathbb{E}\left[  \langle \tilde{\theta}_t-\theta_t, \bar{g}(\theta_t) \rangle \right]\\
     &\leq2\alpha\mathbb{E}\left[  \langle {\theta}_t-\theta^*, \bar{g}(\theta_t) \rangle \right]+\frac{\alpha (1-\gamma)}{4} \mathbb{E}\left[ \Vert \bar{g}(\theta_t) \Vert^2\right] + \frac{4\alpha}{(1-\gamma)} \mathbb{E}\left[ \Vert \theta_t-\tilde{\theta}_t \Vert^2 \right]\\
     &\leq2\alpha\mathbb{E}\left[  \langle {\theta}_t-\theta^*, \bar{g}(\theta_t) \rangle \right]+\frac{\alpha (1-\gamma)}{4} \mathbb{E}\left[ \Vert \bar{g}(\theta_t) \Vert^2\right] + \frac{4\alpha^3}{(1-\gamma)} \mathbb{E}\left[ \Vert \bar{e}_{t-1} \Vert^2 \right]\\
     &\overset{(b)}\leq 2\alpha\mathbb{E}\left[  \langle {\theta}_t-\theta^*, \bar{g}(\theta_t) \rangle \right]+\frac{\alpha (1-\gamma)}{4} \mathbb{E}\left[ \Vert \bar{g}(\theta_t) \Vert^2\right] + \frac{4\alpha^3}{(1-\gamma)} E_{t-1}\\
     &\overset{(c)}\leq - \alpha (1-\gamma) \mathbb{E}\left[  \Vert \hat{V}_{{\theta}_t}-\hat{V}_{\theta^*} \Vert^2_D \right] + \frac{4\alpha^3}{(1-\gamma)} E_{t-1}.
\end{aligned}
\end{equation}
In the above steps, (a) follows from the fact that the agents' observations are assumed to have been drawn i.i.d. (over time and across agents) from the stationary distribution $\pi$; for (b), we applied~\eqref{eqn:Jensens}; and (c) follows from Lemmas \ref{lemma:convex} and \ref{lemma:gradbnd}. To bound $T_2$, we first split it into two parts as follows:
\begin{equation}
\begin{aligned}
    T_2 &= \frac{\alpha^2}{M^2} \mathbb{E}\left[ \left\Vert \sum_{i\in[M]} \left(g_{i,t}(\theta_t) -\bar{g}(\theta_t)\right)+M\bar{g}(\theta_t) \right\Vert^2 \right] \\
    &\leq \frac{2\alpha^2}{M^2} \mathbb{E}\left[ \left\Vert \sum_{i\in[M]}\left( g_{i,t}(\theta_t) -\bar{g}(\theta_t)\right)\right\Vert^2 \right]+ 2\alpha^2\mathbb{E}\left[\left\Vert \bar{g}(\theta_t) \right\Vert^2 \right]. 
\end{aligned}
\label{eqn:sum_T2}
\end{equation}
To simplify the first term in the above inequality further, let us define $Y_{i,t} \triangleq g_{i,t}(\theta_t) -\bar{g}(\theta_t), \forall i \in [M]$. Conditioned on ${\mc{F}}_{t-1}$, observe that (i) $Y_{i,t}$ has zero mean for all $i\in [M]$; and (ii) $Y_{i,t}$ and $Y_{j,t}$ are independent for $i\neq j$ (this follows from the fact that $X_{i,t}$ and $X_{j,t}$ are independent by assumption). As a consequence of the two facts above, we immediately have
$$ \mathbb{E}\left[ \langle Y_{i,t}, Y_{j,t}\rangle | \mc{F}_{t-1}   \right] =0, \forall i, j \in [M] \, s.t. \, i\neq j.$$

We thus conclude that
\begin{equation}
\begin{aligned}
    \mathbb{E}\left[ \left\Vert \sum_{i\in[M]}\left( g_{i,t}(\theta_t) -\bar{g}(\theta_t)\right)\right\Vert^2 \right] &= \mathbb{E}\left[\mathbb{E}\left[ \Vert \sum_{i\in[M]}  Y_{i,t}  \Vert^2 | \mc{F}_{t-1} \right] \right]\\
    &=\mathbb{E}\left[\sum_{i\in [M]} \mathbb{E}\left[ \Vert Y_{i,t}  \Vert^2 | \mc{F}_{t-1} \right] \right]\\
    &\overset{(a)}=M\left(\mathbb{E}\left[ \mathbb{E}\left[ \Vert Y_{i,t}  \Vert^2 | \mc{F}_{t-1} \right] \right]\right)\\
    &=M\left(\mathbb{E}\left[ \Vert Y_{i,t}  \Vert^2 \right]\right),
\end{aligned}
\label{eqn:exp}
\end{equation}
where (a) follows from the fact that conditioned on $\mc{F}_{t-1}$, $Y_{i,t}$ and $Y_{j,t}$ are identically distributed for all $i,j \in [M]$ with $i\neq j$. Plugging the result in~\eqref{eqn:exp} back in~\eqref{eqn:sum_T2}, we obtain
\begin{equation}
\begin{aligned}
    T_2 &\leq \frac{2\alpha^2}{M}  \mathbb{E}\left[ \left\Vert \left( g_{i,t}(\theta_t) -\bar{g}(\theta_t)\right)\right\Vert^2 \right]+ 2\alpha^2\mathbb{E}\left[\left\Vert \bar{g}(\theta_t) \right\Vert^2 \right] \\
    &\leq \frac{4\alpha^2}{M} \mathbb{E}\left[ \left\Vert  g_{i,t}(\theta_t) \right\Vert^2  \right]+2\alpha^2 \left(1+\frac{2}{M}\right) \mathbb{E}\left[\left\Vert \bar{g}(\theta_t) \right\Vert^2 \right]\\
    &\leq 56\alpha^2 \mathbb{E}\left[  \Vert \hat{V}_{{\theta}_t}-\hat{V}_{\theta^*} \Vert^2_D \right] + \frac{8\alpha^2 \sigma^2}{M},
\end{aligned}
\end{equation}
where in the last step, we used Lemmas \ref{lemma:gradbnd} and \ref{lemma:noisybhand}.
Now that we have bounds on each of the terms $T_1$ and $T_2$, we plug them back in~\eqref{eqn:MAdecomp} to obtain 
\begin{equation}
\begin{aligned}
\mathbb{E} \left[ \Vert \tilde{\theta}_{t+1} - \theta^* \Vert^2 \right] &\leq \mathbb{E} \left[\Vert \tilde{\theta}_{t} - \theta^* \Vert^2 \right] - \alpha (1-\gamma) \left(1-\frac{56\alpha}{(1-\gamma)}\right) \mathbb{E} \left[\Vert \hat{V}_{{\theta}_t} - \hat{V}_{\theta^*} \Vert^2_D \right] + \frac{4 \alpha^3}{(1-\gamma)} E_{t-1} 
\\&\hspace{2mm}+ \frac{8 \alpha^2 \sigma^2}{M}\\
&\leq \mathbb{E} \left[\Vert \tilde{\theta}_{t} - \theta^* \Vert^2 \right] - \frac{\alpha (1-\gamma)}{2} \mathbb{E} \left[\Vert \hat{V}_{{\theta}_t} - \hat{V}_{\theta^*} \Vert^2_D \right] + \frac{4 \alpha^3}{(1-\gamma)} E_{t-1} + \frac{8 \alpha^2 \sigma^2}{M},
\end{aligned}
\end{equation}
where in the last step, we used the fact that $\alpha \leq (1-\gamma)/112$. By splitting the second term in the above inequality into two equal parts, using Lemma \ref{lemma:norm}, and the fact that 
$$ - \mathbb{E}\left[\Vert \hat{V}_{{\theta}_t} - \hat{V}_{\theta^*} \Vert^2_D \right] \leq - \frac{1}{2} \mathbb{E}\left[ \Vert \hat{V}_{\tilde{\theta}_t} - \hat{V}_{\theta^*} \Vert^2_D \right] + \alpha^2 E_{t-1},$$ 
we further obtain that 
$$
\mathbb{E} \left[ \Vert \tilde{\theta}_{t+1} - \theta^* \Vert^2 \right]
\leq \left(1-\frac{\alpha \omega (1-\gamma)}{8}\right)\mathbb{E} \left[\Vert \tilde{\theta}_{t} - \theta^* \Vert^2 \right] - \frac{\alpha (1-\gamma)}{4} \mathbb{E} \left[\Vert \hat{V}_{{\theta}_t} - \hat{V}_{\theta^*} \Vert^2_D \right] + \frac{5 \alpha^3}{(1-\gamma)} E_{t-1} + \frac{8 \alpha^2 \sigma^2}{M},
$$
which is the desired conclusion.
\end{proof}

We now complete the proof of Theorem~\ref{thm:MARLIID} as follows.

\begin{proof} (\textbf{Proof of Theorem \ref{thm:multi-agent}}) Our goal is to establish a recursion of the form in Lemma~\ref{lemma:avg}. To that end, we need to first control the aggregate effect of the memory variables of all agents, as captured by the term $E_t$. This is easily done by first using the same analysis as in Lemma~\ref{lemma:nnmemory}, and then appealing to Lemma~\ref{lemma:noisybhand}, to conclude
$$ \mathbb{E}\left[ \Vert e_{i,t} \Vert^2 \right] \leq \left(1-\frac{1}{2\delta} \right) \mathbb{E}\left[\Vert e_{i,t-1} \Vert^2 \right] + 16 \delta  \mathbb{E}\left[  \Vert \hat{V}_{{\theta}_t}-\hat{V}_{{\theta}^*} \Vert^2_D \right] + 4 \delta \sigma^2, \forall i \in [M].$$

Averaging the above inequality over all agents, and using the definition of $E_t$ yields:
$$ E_t \leq \left(1-\frac{1}{2\delta} \right) E_{t-1} + 16 \delta  \mathbb{E}\left[  \Vert \hat{V}_{{\theta}_t}-\hat{V}_{{\theta}^*} \Vert^2_D \right] + 4 \delta \sigma^2. $$

Using the above bound along with Lemma~\ref{lemma:MAperturbbnd}, we obtain:
\begin{equation}
\begin{aligned}
    \Xi_{t+1} &\leq \left(1-\frac{\alpha \omega (1-\gamma)}{8}\right)\mathbb{E} \left[\Vert \tilde{\theta}_{t} - \theta^* \Vert^2 \right] - \frac{\alpha (1-\gamma)}{4} \mathbb{E} \left[\Vert \hat{V}_{{\theta}_t} - \hat{V}_{\theta^*} \Vert^2_D \right] + \frac{5 \alpha^3}{(1-\gamma)} E_{t-1} + \frac{8 \alpha^2 \sigma^2}{M} + C\alpha^3 E_t\\
    &\leq \left(1-\frac{\alpha \omega (1-\gamma)}{8}\right)\mathbb{E} \left[\Vert \tilde{\theta}_{t} - \theta^* \Vert^2 \right] - \left(\frac{\alpha(1-\gamma)}{4}-16C\alpha^3 \delta\right) \mathbb{E} \left[\Vert \hat{V}_{{\theta}_t} - \hat{V}_{\theta^*} \Vert^2_D \right] \\
    &\hspace{2mm}+C\alpha^3 \left(1-\frac{1}{2\delta}+\frac{5}{C(1-\gamma)} \right)E_{t-1}+\frac{8 \alpha^2 \sigma^2}{M}+4C\alpha^3\delta \sigma^2.
\end{aligned}
\end{equation}
Based on the above inequality, our goal is to now choose $\alpha$ and $C$ in a way such that we can establish a contraction (up to higher order noise terms). Accordingly, let us pick these parameters $\alpha$ and $C$ as follows:
$$ C=\frac{20\delta}{(1-\gamma)}; \hspace{2mm} \alpha \leq \frac{(1-\gamma)}{55\delta}.$$ With some simple algebra, it is then easy to verify that:
\begin{equation}
\begin{aligned}
    \Xi_{t+1} &\leq \left(1-\frac{\alpha \omega (1-\gamma)}{8}\right)\mathbb{E} \left[\Vert \tilde{\theta}_{t} - \theta^* \Vert^2 \right] -  \frac{\alpha(1-\gamma)}{8} \mathbb{E} \left[\Vert \hat{V}_{{\theta}_t} - \hat{V}_{\theta^*} \Vert^2_D \right] +C\alpha^3 \left(1-\frac{1}{4\delta} \right)E_{t-1}\\
    &\hspace{2mm}+\frac{8 \alpha^2 \sigma^2}{M}+\frac{80\alpha^3\delta^2\sigma^2}{(1-\gamma)}\\
    & \leq \left(1-\frac{\alpha \omega (1-\gamma)}{8}\right) {\left(\mathbb{E} \left[\Vert \tilde{\theta}_{t} - \theta^* \Vert^2 \right] +C\alpha^3 E_{t-1}\right)}- \frac{\alpha(1-\gamma)}{8} \mathbb{E} \left[\Vert \hat{V}_{{\theta}_t} - \hat{V}_{\theta^*} \Vert^2_D \right] \\
    &\hspace{2mm}+\frac{8 \alpha^2 \sigma^2}{M}+\frac{80\alpha^3\delta^2\sigma^2}{(1-\gamma)}\\
    &=\left(1-\frac{\alpha \omega (1-\gamma)}{8}\right) \Xi_t -  \frac{\alpha(1-\gamma)}{8} \mathbb{E} \left[\Vert \hat{V}_{{\theta}_t} - \hat{V}_{\theta^*} \Vert^2_D \right] +\frac{8 \alpha^2 \sigma^2}{M}+\frac{80\alpha^3\delta^2\sigma^2}{(1-\gamma)}. 
\end{aligned}
\label{eqn:finalLyap}
\end{equation}

We have thus succeeded in establishing a recursion of the form in Lemma~\ref{lemma:avg}. To spell things out explicitly in the language of Lemma~\ref{lemma:avg}, we have
$$ p_{t}=\Xi_t; s_t=\mathbb{E} \left[\Vert \hat{V}_{{\theta}_t} - \hat{V}_{\theta^*} \Vert^2_D \right]; A=\frac{\omega (1-\gamma)}{8}; B=\frac{(1-\gamma)}{8}; \bar{C}=\frac{8\sigma^2}{M}; D=\frac{80\delta^2\sigma^2}{(1-\gamma)},$$
and $\alpha \leq (1-\gamma)/(112\delta)$ suffices for the recursion in~\eqref{eqn:finalLyap} to hold. Thus, for us, $E=\frac{112\delta}{(1-\gamma)}$. Applying Lemma~\ref{lemma:avg} along with some simplifications then yields:
\begin{equation}
\begin{aligned}
\sum_{t=0}^{T}\bar{w}_t \mathbb{E} \left[\Vert \hat{V}_{{\theta}_t} - \hat{V}_{\theta^*} \Vert^2_D \right] &\leq O\left(\frac{\Vert \theta_0-\theta^* \Vert^2 \delta}{(1-\gamma)^2}\right)\exp{\left(\frac{-\omega(1-\gamma)^2 T}{C'\delta}\right)} + \tilde{O}\left( \frac{\sigma^2}{\omega (1-\gamma)^2 \textcolor{black}{M}T}\right) \\
&\hspace{2mm} +  \tilde{O}\left( \frac{\sigma^2 \delta^2}{\omega^2 (1-\gamma)^4 T^2}\right),
\end{aligned}
\end{equation}
where $\bar{w}_t \triangleq w_t/W_T$, and $C'$ is a suitably large constant. The result follows by noting that
$$ \mathbb{E} \left[\Vert \hat{V}_{\bar{\theta}_T} - \hat{V}_{\theta^*} \Vert^2_D \right] \leq  \sum_{t=0}^{T}\bar{w}_t \mathbb{E} \left[\Vert \hat{V}_{{\theta}_t} - \hat{V}_{\theta^*} \Vert^2_D \right], $$
where $\bar{\theta}_T=\sum_{t=0}^{T} \bar{w}_t \theta_t.$
\end{proof}

\end{document}